\newtheorem{corollary}{Corollary}
\newtheorem{theorem}{Theorem}
\definecolor{Red}{rgb}{1,0,0}
\newtheorem{remark}{Remark}
\algrenewcommand\algorithmicrequire{\textbf{Input:}}
\algrenewcommand\algorithmicensure{\textbf{Output:}}
\begin{document}

\title{Non-Rigid Structure-from-Motion via Differential Geometry with Recoverable Conformal Scale}

\author{Yongbo Chen,~\IEEEmembership{Member,~IEEE}, Yanhao Zhang, Shaifali Parashar, Liang Zhao,~\IEEEmembership{Member,~IEEE}, Shoudong Huang,~\IEEEmembership{Senior Member,~IEEE}
\thanks{Manuscript received January 14, 2025; revised May 26, 2025; accepted September 18, 2025. Date of publication; date of current version. This work was supported by the Australia Research Council (ARC) Discovery grant Project DP120102786. This paper was recommended for publication by Editor Civera, Javier and Editor-in-Chief Burgard, Wolfram upon evaluation of the reviewers' comments. (Corresponding author: Yanhao Zhang.)}

\thanks{Yongbo Chen was with the Robotics Institute, University of Technology Sydney, Australia, and is now with the School of Automation and Intelligent Sensing, Shanghai Jiao Tong University, Shanghai, 200240, People’s Republic of China (e-mail: shjtdx\_cyb@sjtu.edu.cn).}

\thanks{Yanhao Zhang and Shoudong Huang are with the Robotics Institute, University of Technology Sydney, Ultimo, NSW 2007, Australia (e-mail:
yanhaozhang1991@gmail.com, Shoudong.Huang@uts.edu.au).}

\thanks{Shaifali Parashar is now with Institut National des Sciences Appliqu\'{e}es de Lyon (LIRIS, INSA-Lyon), 69100 Villeurbanne, France. (e-mail: shaifali.parashar@liris.cnrs.fr)}

\thanks{Liang Zhao is with the School of Informatics, University of Edinburgh, Edinburgh, UK (e-mail: Liang.Zhao@ed.ac.uk).}

\thanks{This work has been accepted for publication in the IEEE Transactions on Robotics (T-RO).}%
\thanks{\copyright~2025 IEEE. Personal use of this material is permitted.
    Permission from IEEE must be obtained for all other uses, in any current or
    future media, including reprinting/republishing this material for advertising
    or promotional purposes, creating new collective works, for resale or
    redistribution to servers or lists, or reuse of any copyrighted component of
    this work in other works.}%

\thanks{This article has supplementary material provided by the authors and color versions of one or more figures available at https://doi.org.}
\thanks{Digital Object Identifier.}
}

\markboth{IEEE TRANSACTIONS ON ROBOTICS,~Vol.~, No.~, September~2025}%
{Shell \MakeLowercase{\textit{et al.}}: A Sample Article Using IEEEtran.cls for IEEE Journals}


\maketitle

\begin{abstract}
Non-rigid structure-from-motion (NRSfM), a promising technique for addressing the mapping challenges in monocular visual deformable simultaneous localization and mapping (SLAM), has attracted growing attention. We introduce a novel method, called Con-NRSfM, for NRSfM under conformal deformations, encompassing isometric deformations as a subset. Our approach performs point-wise reconstruction using 2D selected image warps optimized through a graph-based framework. Unlike existing methods that rely on strict assumptions, such as locally planar surfaces or locally linear deformations, and fail to recover the conformal scale, our method eliminates these constraints and accurately computes the local conformal scale. Additionally, our framework decouples constraints on depth and conformal scale, which are inseparable in other approaches, enabling more precise depth estimation. To address the sensitivity of the formulated problem, we employ a parallel separable iterative optimization strategy. Furthermore, a self-supervised learning framework, utilizing an encoder-decoder network, is incorporated to generate dense 3D point clouds with texture. Simulation and experimental results using both synthetic and real datasets demonstrate that our method surpasses existing approaches in terms of reconstruction accuracy and robustness. The code for the proposed method will be made publicly available on the project website: \url{https://sites.google.com/view/con-nrsfm}.
\end{abstract}

\begin{IEEEkeywords}
Deformable SLAM, NRSfM, conformal deformations, parallel separable iterative optimization, encoder and decoder network.
\end{IEEEkeywords}

\section{Introduction}
\IEEEPARstart{N}{on-rigid} structure-from-motion (NRSfM) addresses the challenge of reconstructing the 3D shapes of deforming objects from multiple calibrated monocular images. It is a fundamental problem in 3D computer vision, with applications ranging from entertainment~\cite{1add} to modern surgery~\cite{8}. The NRSfM algorithm reconstructs 3D shapes in local camera coordinates, inherently intertwining camera motion with object deformations. This concept closely aligns with deformable visual SLAM, which aims to localize a robot and map its environment, even in dynamically deforming scenarios. NRSfM has the potential to play a pivotal role in overcoming the mapping challenges associated with deformable visual SLAM~\cite{8}. By integrating tools for robot pose estimation, NRSfM can enhance mapping consistency through information fusion, paving the way for significant advancements in deformable visual SLAM research~\cite{8add, 8add1}.

Naturally, the NRSfM problem is unsolvable without introducing constraints or assumptions on deformations. Common approaches to deformation modeling include statistical constraints, such as low-rank shapes~\cite{1} or trajectory basis~\cite{2}, and physical constraints, such as isometry (preserving local geodesics)~\cite{3}, inextensibility~\cite{4}, and local rigidity~\cite{5}. Among these, physical methods generally outperform statistical ones in reconstructing highly deformable objects.

Recent advancements have introduced local formulations of physical constraints, including isometry (distance-preserving)\cite{6}, conformality (angle-preserving)\cite{7}, and smoothness~\cite{10}. These methods assume surfaces to be locally planar (LP) and deformations to be locally linear (LL), expressing physical constraints in terms of local depth derivatives as unknowns. This approach offers several advantages: it is often more accurate, computationally efficient, and robust to missing data. However, there are two major limitations: \emph{1) indirect depth computation:} all constraints are derived on local normals (expressed with depth derivatives) and the depth is only obtained by interpolating them. Such a formulation leads to weaker constraints on the surface geometry which affects their performance. \emph{2) LP and LL assumptions:} While LP and LL assumptions are valid for continuous surfaces, these methods often operate on sparsely sampled points (100–200 points) on the surface. This sparse sampling compromises the accuracy of local derivative computations, leading to degraded performance, especially in scenarios involving strong deformations, such as surface bending. These limitations emphasize the need for further refinement to improve accuracy and robustness in handling complex deformations.

\IEEEpubidadjcol

In this paper, we extend the existing local formulations of isometry/conformality to overcome these limitations and develop a novel NRSfM method tailored for conformal deformations. The main contributions of this paper are:

\begin{itemize}
    \item \textbf{Rotational invariance under conformal deformation}: We establish that connections and moving frames across surfaces preserve distinct invariance properties under different types of deformations. Critically, we prove that connections under conformal deformation preserve rotational invariance, enabling the decoupling of conformal scale and depth estimation. This introduces a novel and strict theoretical constraint on the geometry of deformable surfaces. As a result, we derive physical constraints in terms of conformal scale, depth, and normals (expressed as depth derivatives).
    \item \textbf{Relaxation of LL and LP assumptions:} Unlike prior approaches, we do not impose LL or LP assumptions. Instead, we define physical constraints that hold up to the second-order derivatives of local depth, introducing additional variables to better align the formulation with real-world scenarios. To solve this, we propose a parallel, separable, iterative optimization algorithm that independently recovers depth and normals (depth derivatives). The algorithm is robust to initialization and offers acceptable computational complexity.
\end{itemize}
Meanwhile, we develop an encoder-decoder neural network trained using a self-supervised learning approach on simulated datasets. This network reconstructs 3D dense point clouds with texture from the normal field, offering a more comprehensive representation of deformable surfaces. Our experiments show that our proposed method, Con-NRSfM, outperforms the existing state-of-the-art (SOTA) methods, especially when the deformations are non-isometric and/or cause strong bending.

\section{Related work}\label{s2}
\subsection{NRSfM methods}
We review the existing NRSfM methods in terms of the deformation modeling they impose.

\textbf{i) Statistical constraints}:
Starting from~\cite{5,13}, the researches focus on the factorization-based methods with low-rank assumptions. They consider instantaneous 3D shape of a deforming object to be a linear combination of a small size shape \cite{19, 19add}, trajectory~\cite{18}, or force basis~\cite{20,21}. Assuming that deforming shapes are viewed under affine projections~\cite{4}, the additional low-dimensional priors include the trajectory basis priors~\cite{18}, non-linear modeling~\cite{14}, spatial smoothness~\cite{15}, spatio-temporal smoothness~\cite{16}, temporal smoothness~\cite{16add}, and a quadratic deformation model~\cite{17}.  
Recently,~\cite{56, 57, 57add} extend this framework to neural networks which could learn the shape-basis from a small set of images. However, these methods work on sparse data such as simple wireframe objects.
~\cite{21add} introduced the first unsupervised, end-to-end NRSfM approach which uses the low-rank constraints to reconstruct dense data. These methods perform effectively on objects with simple deformations, but struggle to handle more complex or severe deformations.




\textbf{ii)~Global physical constraints}: Global physical methods aim to preserve the physical properties of the entire point cloud and its corresponding surface. For instance, inextensibility—a convex relaxation of isometry—has been used to develop a convex second-order cone programming (SOCP) formulation for isometric NRSfM using point parameterization~\cite{4}. Similarly, the NRSfM problem has been formulated as a convex semi-definite program (SDP)~\cite{25}, incorporating constraints based on the cosine law with edge parameterization to minimize non-rigidity. While these global physical methods can achieve satisfactory accuracy in certain scenarios, they often suffer from high computational complexity and limited robustness, making them less practical for  real-world applications.

\begin{table*}[!ht]
	\small
	\caption{{Comparison of our method with existing local methods}}
	\label{t1zz112}
	\begin{center}
		\begin{tabular}{|c|c|c|c|c|c|}
			\hline
			{\textbf{Method}}  &{\textbf{Deformation modeling}}  &{\textbf{Variables}}  & {\textbf{Assumptions}}&{\textbf{Constraints}}  & {\textbf{Surface representation}} \\
			\cline{1-6}
			\cite{6}&Isometry&Normals&LP&MT, CC&Inverse-depth\\
			\cline{1-6}
			\cite{7}& \begin{tabular}{@{}c@{}}Isometry, Conformality, \\ Equiareality\end{tabular}&Normals&LP+LL&MT, CC&Inverse-depth\\
			\cline{1-6}
			\cite{10}&Diffeomorphism &Normals&LL&CC&Depth\\
			\cline{1-6}
			{\cite{31}}&{Isometry, Conformality} &{Normals}&{LP+LL}&{MT, CC}&{Inverse-depth}\\
            \cline{1-6}
			Con-NRSfM&Isometry, Conformality&Depth, Normals, Conformal scale&None&MT, CC&Depth\\
			\hline
		\end{tabular}
	\end{center}
\end{table*}

\textbf{iii)~Piece-wise physical constraints}: Piece-wise physical methods approximate the shape of small surface regions using simple models, recovering depth for each region and subsequently stitching the reconstructions together to maintain surface continuity. For example, in~\cite{23} and~\cite{24}, NRSfM problems are addressed using orthographic and perspective camera models, respectively, based on piece-wise rigidity. However, these methods face significant challenges due to the high computational cost of region segmentation. Additionally, defining optimal segmentation for generic surfaces is difficult, leading to inefficiencies and reduced accuracy~\cite{4}. More recently,\cite{4add} introduced a method that utilizes specular highlights in images as geometric and photometric cues, adding constraints to the previous isometric-inextensible NRSfM model\cite{4}. This approach enhances the robustness of the reconstruction process by incorporating additional visual information.

\textbf{iv)~Local physical constraints}: Local physical methods represent an object's 3D deformable shape in each image as a smooth Riemannian manifold, assuming inter-image registration (image warp) is available. These methods enforce physical constraints, including isometry \cite{9,22}, conformality, equiareality~\cite{7}, and diffeomorphism~\cite{10}, to restrict deformations between surfaces. They leverage metric tensors (MT) to measure local distances, angles, and areas, and Cartan’s connections (CC) to assess local curvatures. Using linear relationships between local normals across surfaces (LL and LP), these methods reduce the number of variables, enabling rapid computation of local normals. The surfaces are then reconstructed by integrating the local normals, which is computationally more intensive. The local formulation also efficiently handles noisy or missing data, reconstructing various deformable objects from videos or sparse images. Con-NRSfM extends this approach by imposing conformal constraints on local depth in addition to normals, without relying on the surface assumptions of LL and LP. A recent work~\cite{31} addresses the conformal case by relying on LP and LL assumptions using existing connection constraints to derive surface normals via a closed-form formulation. In contrast, our method theoretically relaxes these assumptions, introduces a novel connection constraint, and employs a more robust and scalable optimization framework.
Table~\ref{t1zz112} highlights the differences between Con-NRSfM and other methods. Furthermore, Con-NRSfM introduces a parallel, separable optimization framework and extends this strategy using a deep learning network capable of recovering 3D shapes with texture.
\subsection{Related applications}
In this section, we review and discuss the challenges and recent adaptations of  NRSfM in practical scenarios, such as monocular deformable SLAM and visual endoluminal robotic navigation, to highlight its real-world significance. Monocular deformable SLAM is a novel and challenging problem where most classical SLAM methods exhibit low accuracy or even fail. This is primarily due to the absence of direct depth measurements and the violation of the rigid and static scene assumptions traditionally used for observed features. Def-SLAM~\cite{8} is the first monocular SLAM system explicitly designed for deformable environments. It constructs and incrementally updates a deformable map by incorporating key modifications into the ORB-SLAM framework~\cite{9adds} to handle non-rigid scenes. An isometric NRSfM approach~\cite{6} is embedded within the mapping module to generate surface templates at keyframe intervals. These templates are then aligned in the tracking module using a Shape-from-Template (SfT) method and a deformation energy function~\cite{9adds1111}, enabling simultaneous estimation of both camera trajectory and surface deformations in real time. SfT methods, which reconstruct the deformed shape of an object from a single monocular image using a known 3D template in its rest configuration, are conceptually aligned with NRSfM and often achieve faster performance. As demonstrated in Def-SLAM, SfT can serve as a complementary tool to NRSfM by refining deformation estimates when the template is available through NRSfM over time. Building on similar techniques, SD-DefSLAM~\cite{9} incorporates optical flow to address the data association problem in the SLAM pipeline, significantly improving robustness and accuracy. This enhancement makes it particularly suitable for visual endoluminal robotic navigation in monocular medical imaging. More recently, in minimally invasive procedures, the work in~\cite{9adds3333} combines an embedded deformation graph (EDG) with isometric NRSfM to enable centralized optimization of both map points and camera motion over time. This approach further improves scene reconstruction accuracy in deformable soft tissue using monocular endoscopy.

\section{Mathematical Modeling Preliminary}\label{s3}
\subsection{Notations}
Throughout this paper, unless otherwise {stated}, bold lowercase, and bold uppercase letters are reserved for
vectors and matrices, respectively. Sets and spaces are shown by uppercase letters with mathcal font.  $\bm{I}_{n\times n}$ is the  identity matrix with $n$ dimension. $SO(n)$ (special orthogonal group) is defined as: $SO(n)\triangleq\{\bm{R}\in \mathbb{R}^{n\times n}: \bm{R}^\top\bm{R} = \bm{I}_{n\times n},~\det(\bm{R})=1\}$. $(\star)^\top$ means the transpose of   matrix and vector $\star$.
$\det(\star)$ represents the determinant of the matrix $\star$.  $\star \times \bullet$ means the direct product group of $\star$ and $\bullet$. For connection $\Gamma$, its superscript, first subscript, and second subscript respectively represent the row, column, and block numbers. If any of the indices is written as a variable (e.g., $i,~j,~k$), it indicates that we are considering the entire sub-vector or sub-matrix corresponding to all possible values of that index (all choices along this script), like $\Gamma^1_{j1} \in \mathbb{R}^{3\times1}\subset\Gamma^i_{j1} \in \mathbb{R}^{3\times3}\subset\Gamma^i_{jk} = \Gamma\in \mathbb{R}^{3\times6}$. If the indices are specified as integers (e.g., 1,~2), they indicate the specific, fixed part of the group or matrix being referenced. For a mapping  $\bullet$, $\star (\bullet)$ means the corresponding concepts, like connection $\Gamma$ and moving frame $E$, applied on  mapping $\bullet$.


\subsection{Perspective Projection and Image Embedding}\label{s3b}
	\begin{figure}[!htb]
	\begin{center}
		\includegraphics[width=0.8\linewidth]{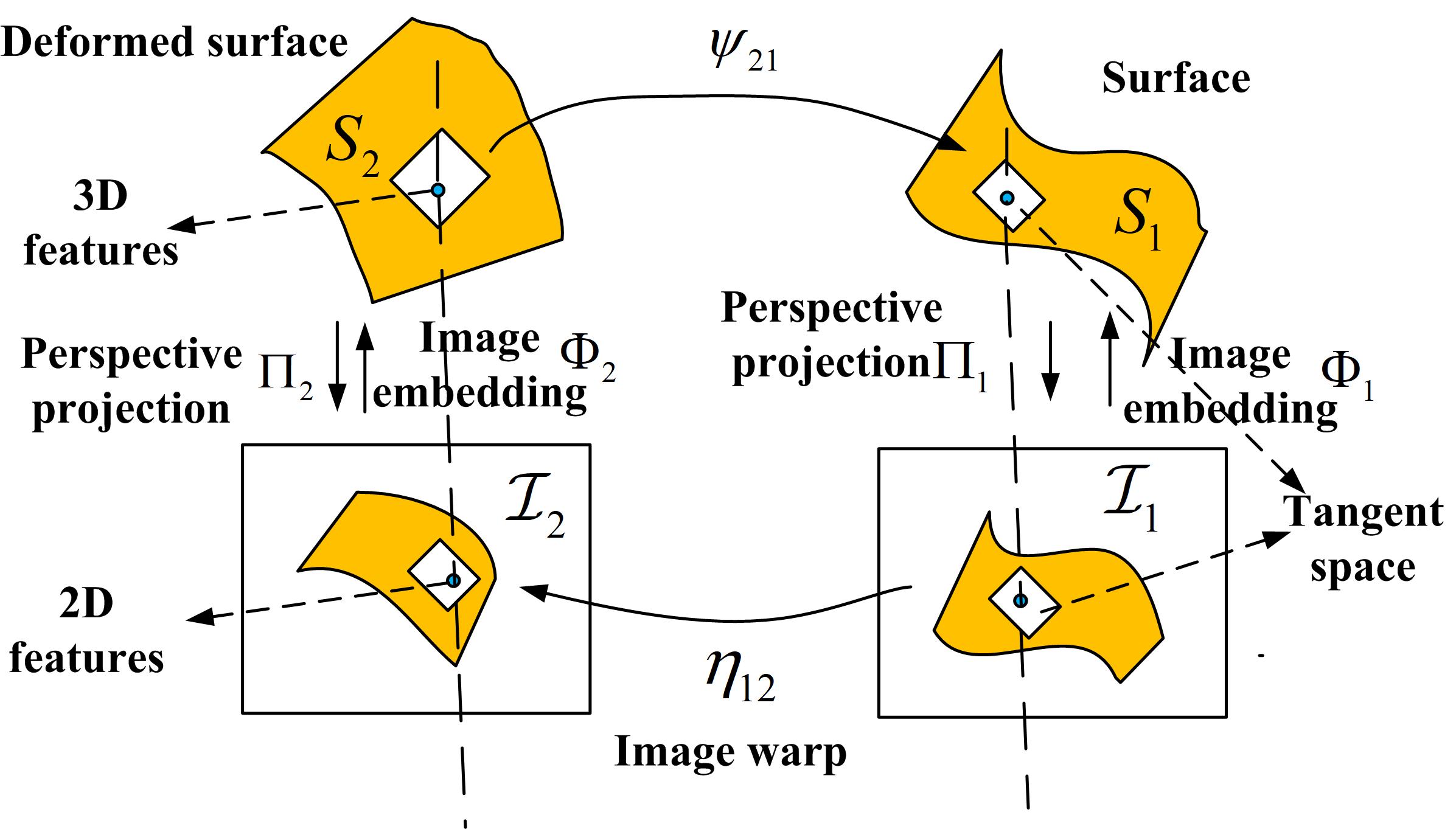}
	\end{center}
	\caption{A 2-view model for NRSfM. }
	\label{figurelabe1}
\end{figure}
As shown in \figurename~\ref{figurelabe1}, assuming that the considered multiple deformation
surfaces are Riemannian manifolds, based on the calibrated monocular camera, we are considering the mappings between $i$-th and $j$-th  images $\mathcal{I}_i,~\mathcal{I}_j$ and their corresponding surfaces $\mathcal{S}_i,~\mathcal{S}_j$. The camera mapping from the surfaces to the images is the perspective projection $\Pi_i:~\mathcal{S}_i\in\mathbb{R}^3\rightarrow\mathcal{I}_i\in\mathbb{R}^2 $ and its inverse mapping is the image embedding, which can be assumed as the smooth function of which the variables are the pixels $(u,v)^\top$ on the images $\Phi_i:~\mathcal{I}_i\rightarrow \mathcal{S}_i$. For the perspective camera, the perspective projection $\Pi$ and the image embedding $\Phi$ are respectively denoted as\footnote{For brevity, we have ignored the subscript $i$ which means the index of the surface and the image. }:
\begin{equation}\label{E1}
	\begin{aligned}
		\bm{x}&=(u,v)^\top=\Pi(\bm{z})=(z_1/z_3,z_2/z_3)^\top,\\
		\bm{z}&=(z_1,z_2,z_3)^\top=\Phi(\bm{x})={\beta(u,v)}(u, v,1)^\top,
	\end{aligned}
\end{equation}
where $\bm{z}=(z_1,z_2,z_3)^\top,~z_3>0$ is a 3D feature on the surface, $\beta(u,v)$ means the smooth depth function of the pixel $(u,v)^\top$.\footnote{For brevity, we will ignore the variable $(u,v)$ and $\beta(u,v)$ is written as $\beta$ in the following text.} In this paper, the mappings $\Pi$ and $\Phi$ are represented using B-splines~\cite{25a}, which allows us to accurately obtain interpolation value, first- and second-order derivatives of these functions.
\subsection{Deformation Mapping}\label{s32}
The partial goal of the NRSfM method is to obtain the mappings $\Psi_{ij}: \mathcal{S}_i\rightarrow\mathcal{S}_j$ among the deformable surfaces. For the generic deformation, the NRSfM problem is not solvable. So as to limit the solution, the typical local assumptions on the deformation are restricted in the diffeomorphic (isometry, conformality, and equiareality).  According to~\cite{10}, for the sparse matched features, if $\Psi_{ij}$ results in the  diffeomorphic deformation of the surface, its approximated Jacobian matrix follows: $\mathbf{J}_{\Psi_{ij}}=\mbox{diag}(\lambda_1,\lambda_2,\lambda_3)\mathbf{R}$, where $\mbox{diag}(\lambda_1,\lambda_2,\lambda_3)$ means the scale matrix, $\lambda_i$ are scalars, and $\mathbf{R}\in SO(3)$ is a rotation matrix.\footnote{This property follows an approximation definition of the diffeomorphic mapping assuming the off-diagonal elements to be zero. The more general diffeomorphic mapping has six degrees of freedom for the scale matrix.} Following the same approximation, the Jacobian matrices of the isometric and conformal deformations will be degenerated to $\mathbf{J}_{\Psi_{ij}}=\mbox{diag}(1,~1,~1)\mathbf{R}=\mathbf{R}$ and $\mathbf{J}_{\Psi_{ij}}=\mbox{diag}(\lambda,~\lambda,~\lambda)\mathbf{R}=\lambda\mathbf{R}$ respectively, where $\lambda > 0$ is a conformal scale factor. It is noted that the conformal scale $\lambda$ is not the translation scale between different frames for monocular images corresponding to scale ambiguity. The scale ambiguity is solved by Step 3 in  Section~\ref{s52}.

\subsection{Selected Image Warp}\label{s33}
Based on the matched features obtained by the standard image matching methods, such as optical flow~\cite{26}, Scale Invariant Feature Transform (SIFT)~\cite{27}, and Speeded Up Robust Features (SURF)~\cite{28}, we can define a dense geometric transformation functions $\eta_{ij}: \mathbb{R}^2\rightarrow\mathbb{R}^2$ between the pair of images $\mathcal{I}_i$ and $\mathcal{I}_j$, called image warp. Our local physical method belongs to the point-wise method. The main part to share the information connection between different features is the image warp, so its accuracy and robustness are very important for the whole NRSfM framework. Each pair of images is possible to be used to compute the image warp. Considering the large computational cost of the image warp, a complete weighted undirected graph $\mathcal{G}_c=(\mathcal{V}_c,~\mathcal{E}_c,~w_c)$,  of which the nodes, the edges, and the weighted values are the images, the image warp, and the number of the matched features, is introduced to select the well-connected sub-graph and only the image warps corresponding to the edges of the selected sub-graph are computed. The tree-connectivity $t_w(\mathcal{G}) = \det(\bm{\mathcal{L}}_{w}^{\mathcal{G}})$ of the connected weighted undirected sub-graph $\mathcal{G}=(\mathcal{V},~\mathcal{E},~w)\subseteq \mathcal{G}_c$, which is the log-determinant function of the weighted Laplacian matrix, is used as the objective function of the selection method. Given the limited number of the edges $N_e$, the selection problem can be considered as the problem to select maximum spanning tree,\footnote{It is noted that, in order to recover the depth of the features in all the images, $N_e$ needs to satisfy $N_e\geq n_c-1$.} if $N_e= n_c-1$, and the famous $k$-edge selection problem ($k$-ESP,~ $k=N_e-n_c+1$), if $N_e> n_c-1$~\cite{29}. The corresponding edge selection problem is formulated as:
\begin{equation}\label{E10}
	\begin{aligned}
		&\max \limits_{\mathcal{G}}t_w(\mathcal{G}),~s.t.~|\mathcal{G}|=N_e,~\mathcal{E}\subseteq\mathcal{E}_c,~\mathcal{V}=\mathcal{V}_c.
	\end{aligned}
\end{equation}
This sub-graph selection problem is solved by the combination of  Kruskal’s maximum spanning tree algorithm (for maximum spanning tree problem, obtain $\mathcal{T}_{opt}$) and the greedy-based method with the rank-1 update (for $k$-ESP, obtain $\mathcal{G}_{opt}$)~\cite{3,38}. As a sub-modular maximization problem with a cardinality constraint, the greedy-based method for the $k$-ESP has some performance guarantee. Then, a well-connected sub-graph $\mathcal{G}_{opt}$ is obtained and its corresponding edges are used to calculate the image warps based on 2D Schwarzian derivatives under the infinitesimal planarity assumption~\cite{5}. The obtained results provide us the first-/second-order derivatives of the image warps and their inverse mappings.

\subsection{Moving Frames and Connections}
For the image embedding, a local frame of reference, \emph{moving frame}, $E=(\bm{e}_1,\bm{e}_2,\bm{e}_3)$, at surface $\mathcal{S}$ expressed in terms of $\Phi$ is given by
\begin{equation}\label{eq3}
	E=\left(\bm{e}_1=\frac{\partial~\Phi}{\partial~u},~\bm{e}_2=\frac{\partial~\Phi}{\partial~v},~\bm{e}_3=\bm{e}_1\times \bm{e}_2\right).
\end{equation}

Introducing the definition~\eqref{E1} of the image embedding $\Phi$ into the above equation, we have:
\begin{equation}\label{eq3add}
	\begin{aligned}
	&\bm{e}_1=\beta(y_1u+1,y_1v,y_1)^\top,\\
	&\bm{e}_2=\beta(y_2u,y_2v+1,y_2)^\top,\\
	&\bm{e}_3=\beta^2(-y_1,-y_2,y_1u+y_2v+1)^\top,
	\end{aligned}
\end{equation}
 where $y_1=\frac{1}{\beta}\frac{\partial~\beta}{\partial~u}$ and $y_2=\frac{1}{\beta}\frac{\partial~\beta}{\partial~v}$.

The local change in \emph{moving frames} is a linear function. The  coefficients $\Gamma^i_{jk}$, known as \emph{connections}, describe the fundamental  properties of the surfaces. 
\begin{equation}\label{eq1}
	\begin{aligned}
		\frac{\partial~\bm{e}_j}{\partial~u}=&\Gamma^1_{j1}\bm{e}_1+\Gamma^2_{j1}\bm{e}_2+\Gamma^3_{j1}\bm{e}_3,~j=1,2,3\\\frac{\partial~\bm{e}_j}{\partial~v}=&\Gamma^1_{j2}\bm{e}_1+\Gamma^2_{j2}\bm{e}_2+\Gamma^3_{j2}\bm{e}_3.
	\end{aligned}
\end{equation}

\subsection{Metric Preservation under Conformal Deformation }
In \figurename~\ref{figurelabe1}, considering two features $\bar{\bm{x}}\in\mathcal{I}_1$ and $\bm{x}\in\mathcal{I}_2$ on two images $\mathcal{I}_1$ and $\mathcal{I}_2$, their corresponding 3D features $\bar{\bm{z}}\in\mathcal{S}_1$ and ${\bm{z}}\in\mathcal{S}_2$ on surfaces $\mathcal{S}_1$ and $\mathcal{S}_2$, and the mappings (image embedding $\Phi_1:~\bar{\bm{x}}\rightarrow \bar{\bm{z}}$ and $\Phi_2:~{\bm{x}}\rightarrow {\bm{z}}$; image warp $\eta_{12}:\bar{\bm{x}}\rightarrow{\bm{x}}$; deformation mapping $\Psi_{21}: {\bm{z}}\rightarrow\bar{\bm{z}}$) between them. We can get the maps and their Jacobian matrices following:
\begin{equation}\label{eq6}
	\begin{aligned}
	\Phi_1=\Psi_{21}\circ\Phi_2\circ\eta_{12}, \quad \mathbf{J}_{\Phi_1}=\mathbf{J}_{\Psi_{21}}\mathbf{J}_{\Phi_2}\mathbf{J}_{\eta_{12}}.
	\end{aligned}
\end{equation}
Under a local conformal deformation, we have $\mathbf{J}_{\Psi_{21}} = \lambda \mathbf{R}$. If $\lambda = 1$, the deformations are isometric. The 2D part of the moving frames of the mappings $\Phi_1$ and $\Psi_{21}\circ\Phi_2\circ\eta_{12}$ satisfy the following metric preservation equation~\cite{10}:

\begin{equation}\label{eq7a}
	\begin{aligned}
		&{\mathbf{J}_{\Phi_1}}^\top {\mathbf{J}_{\Phi_1}}=\lambda^2{\mathbf{J}_{\eta_{12}}}^\top{\mathbf{J}_{\Phi_2}}^\top\mathbf{J}_{\Phi_2}\mathbf{J}_{\eta_{12}}.\\
	\end{aligned}
\end{equation}

\section{Connections under Conformal Deformation}\label{s4}
In this section, we present all the novel conclusions, which are different from the ones in the literature, about the connection under conformal deformation.

Assuming that the depth function $\beta$ is second-order differentiable at every pixel, based on the definitions \eqref{eq3add} and \eqref{eq1}, we have:
\begin{equation}\label{eq5}
	\begin{aligned}
		&\left(\begin{array}{cccccc}\Gamma^1_{11}&\Gamma^1_{21}&\Gamma^1_{31}&\Gamma^1_{12}&\Gamma^1_{22}&\Gamma^1_{32}\\\Gamma^2_{11}&\Gamma^2_{21}&\Gamma^2_{31}&\Gamma^2_{12}&\Gamma^2_{22}&\Gamma^2_{32}\\\Gamma^3_{11}&\Gamma^3_{21}&\Gamma^3_{31}&\Gamma^3_{12}&\Gamma^3_{22}&\Gamma^3_{32}\end{array}\right)=\\
		&\left(\begin{array}{ccc}\beta_1u+\beta&\beta_2u&-\beta\beta_1\\\beta_1v&\beta+\beta_2v&-\beta\beta_2\\\beta_1&\beta_2&\beta\beta_1u+\beta\beta_2v+\beta^2\end{array}\right)^{-1}\\
			\end{aligned}
\end{equation}
\begin{equation}\nonumber
\begin{aligned}
		&\left(\begin{array}{ccc}\beta_{11}u+2\beta_1&\beta_2+\beta_{12}u&-\beta_{1}^2-\beta\beta_{11}\\\beta_{11}v&\beta_{12}v+\beta_{1}&-\beta_{1}\beta_{2}-\beta\beta_{12}\\\beta_{11}&\beta_{12}&T_1\end{array}\right.\\&\left.\begin{array}{ccc}\beta_{12}u+\beta_{2}&\beta_{22}u&-\beta\beta_{12}-\beta_1\beta_2\\\beta_{12}v+\beta_{1}&\beta_{22}v+2\beta_{2}&-\beta\beta_{22}-\beta_{2}^2\\\beta_{12}&\beta_{22}&T_2\end{array}\right),
	\end{aligned}
\end{equation}
where

\begin{equation}\label{eq3add_new11}
	\begin{aligned}&T_1=3\beta\beta_1+u\beta_1^2+u\beta\beta_{11}+v\beta_{12}\beta+v\beta_1\beta_2,\\ &T_2=u\beta_2\beta_1+u\beta\beta_{12}+3\beta\beta_2+v\beta_2^2+v\beta\beta_{22},\\
    &\beta_1=\frac{\partial~\beta}{\partial~u}=\beta y_1,~\beta_2=\frac{\partial~\beta}{\partial~v}=\beta y_2,\\
    &\beta_{11}=\frac{\partial^2~\beta}{\partial^2~u}=\beta y_{11},~\beta_{22}=\frac{\partial^2~\beta}{\partial^2~v}=\beta y_{22},\\
    &\beta_{12}=\frac{\partial^2~\beta}{\partial~u~\partial~v}=\frac{\partial^2~\beta}{\partial~v~\partial~u}=\beta y_{12}=\beta y_{21}.
	\end{aligned}
\end{equation}

Unlike the approaches in \cite{10} and \cite{23}, which assume infinitesimally planar surfaces and simplify their formulations by neglecting second-order derivatives, our method retains these terms to enable exact computation.

 Given  the moving frames {$\overline{E}(\Phi_1)={E}(\Psi_{21}\circ\Phi_2\circ\eta_{12})$ $=(\bar{\bm{e}}_1^*,\bar{\bm{e}}_2^*,\bar{\bm{e}}_3^*),~E(\Phi_2\circ\eta_{12})=(\bm{e}_1^*,\bm{e}_2^*,\bm{e}_3^*),~{E}(\Phi_2)=({\bm{e}}_1,$ ${\bm{e}}_2,{\bm{e}}_3)$} on $\mathcal{S}_1$ and $\mathcal{S}_2$ with image coordinates $\bar{\bm{x}}=(\overline{u},\overline{v})$ and ${\bm{x}}=(u,v)$ respectively,  we write $\mathbf{J}_{\Phi_1} = (\bar{\bm{e}}_1^*,\bar{\bm{e}}_2^*)$ and $\mathbf{J}_{\Phi_2} =$ $ ({\bm{e}}_1,{\bm{e}}_2)$. Using~\eqref{eq3}, the relation between moving frames is:
\begin{equation}\label{eq7}
	\begin{aligned}
		&(\bm{e}_1^*,\bm{e}_2^*)=\mathbf{J}_{\Phi_2}\mathbf{J}_{\eta_{12}},~\bm{e}_3^*=\bm{e}_1^*\times \bm{e}_2^*=\det(\mathbf{J}_{\eta_{12}})\bm{e}_3,\\
		&E(\Phi_2\circ\eta_{12})=(\mathbf{J}_{\Phi_2}\mathbf{J}_{\eta_{12}},\bm{e}_3\det(\mathbf{J}_{\eta_{12}}))=E(\Phi_2){\mathbf{J}_{\eta}}_{3\times 3},\\
		&(\bar{\bm{e}}_1^*,\bar{\bm{e}}_2^*)=\lambda\mathbf{R}(\bm{e}_1^*, \bm{e}_2^*),~\bar{\bm{e}}_3^*=\bar{\bm{e}}_1^*\times\bar{\bm{e}}_2^*=\mathbf{R}\lambda^2 \bm{e}_3^*,\\
		&\overline{E}(\Phi_1)=\mathbf{R}E(\Phi_2\circ\eta_{12})\Lambda=\mathbf{R}E(\Phi_2){\mathbf{J}_{\eta}}_{3\times 3}\Lambda,
	\end{aligned}
\end{equation}
where ${\mathbf{J}_{\eta}}_{3\times 3} =\mbox{diag}(\mathbf{J}_{\eta_{12}},\det(\mathbf{J}_{\eta_{12}}))$ and $\Lambda = \mbox{diag}(\lambda,\lambda,\lambda^2)$. \emph{It is noted that these constraints (especially last equation in~\eqref{eq7}) are different from the constraints (6) in the related work~\cite{10}.}\footnote{\cite{10} assumes deformations to be  infinitesimally linear. In the formulation, $\mathbf{J}_{\Psi_{21}} =\Upsilon \mathbf{R}$, $\Upsilon=\mbox{diag}({\lambda_1,\lambda_2,\lambda_3})$, $\lambda_1 \neq \lambda_2 \neq \lambda_3 $ and $\lambda_1,\lambda_2,\lambda_3 \approx 1$. This can be understood as as-rigid-as-possible transformation. This assumption causes the important conclusions in~\cite{10}: $\bar{e}_3^*=\bar{e}_1^*\times \bar{e}_2^*=\Upsilon\bm{R}{e}_3^*$ and $\overline{E}(\Phi_1)=\Upsilon\bm{R}E(\Phi_2\circ\eta_{12})$, which is different from our formula derivation.}


Under the different assumptions on the deformations ($\mathbf{J}_{\Psi_{21}} = \mbox{diag}({\lambda_1,\lambda_2,\lambda_3})$ $\mathbf{R}$), we investigate the invariant relationship of the connections of the mapping $\Phi_1$ and the composite mapping $\Phi_2\circ\eta_{12}$ and obtain the following results which are different from ~\cite{10}. We write the  connections in equation~\eqref{eq5} as $\Gamma=(\Gamma^i_{j1},\Gamma^i_{j2})$. 


\noindent {\bf Claim 1}. If the deformation mapping $\Psi_{21}$ is more general, such as diffeomorphic, the connection of the mapping $\Phi_1$ is not equal to the one of the mapping $\Phi_2\circ\eta_{12}$ unless additional special conditions hold. 
\begin{equation}
	\begin{aligned}
		\Gamma(\Phi_1)\neq \Gamma(\Phi_2\circ\eta_{12}),
	\end{aligned}
\end{equation}
where $\Gamma(\star)$ means the connections of the mapping $\star$. For the proof process, please refer to Appendix~\ref{App_1}.

Claim 1 shows that the connections of the mapping $\Phi_1$ and $\Phi_2\circ\eta_{12}$ are non-invariant  under the local generic diffeomorphic deformation without any assumptions. Based on the moving frame relations~\eqref{eq7}, we now derive the relation for connections under conformal deformations. We next show that like moving frames, connections across surfaces are related by image warps (up to first and second order) and conformal scale, $\lambda$. They are, however, invariant to the rotation $\mathbf{R}$.


\begin{theorem}\label{t2}
	For a conformal deformation, $\mathbf{J}_{\Psi_{21}}=\lambda \mathbf{R}$, the relation between connections ${\Gamma}(\Phi_1)$ and  $\Gamma(\Phi_2\circ\eta_{12})$ is invariant to rotation $\mathbf{R}$ and is not invariant to $\lambda$, given by
	\begin{align}\label{eq_861}
		\nonumber&~{\mathbf{J}_{\eta}}_{3\times 3}\Lambda{\Gamma}^i_{j1}(\Phi_1)\Lambda^{-1}= \\ \nonumber& \left({\Gamma}^i_{j1}(\Phi_2)\frac{\partial {u}}{\partial\overline{u}}+{\Gamma}^i_{j2}(\Phi_2)\frac{\partial {v}}{\partial \overline{u}}\right){\mathbf{J}_{\eta}}_{3\times 3}
		+\frac{\partial {\mathbf{J}_{\eta}}_{3\times 3}}{\partial \overline{u}},\\
		\nonumber &~{\mathbf{J}_{\eta}}_{3\times 3}\Lambda{\Gamma}^i_{j2}(\Phi_1)\Lambda^{-1}= \\& \left({\Gamma}^i_{j1}(\Phi_2)\frac{\partial {u}}{\partial \overline{v}}+{\Gamma}^i_{j2}(\Phi_2)\frac{\partial {v}}{\partial \overline{v}}\right){\mathbf{J}_{\eta}}_{3\times 3}
		+\frac{\partial {\mathbf{J}_{\eta}}_{3\times 3}}{\partial \overline{v}}.
	\end{align}
\end{theorem}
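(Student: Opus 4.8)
The plan is to obtain the connection relation by differentiating the moving-frame identity in the last line of~\eqref{eq7} and matching the result against the definition~\eqref{eq1} of the connections. First I would rewrite~\eqref{eq1} compactly: collecting $E=(\bm{e}_1,\bm{e}_2,\bm{e}_3)$ as a $3\times 3$ matrix, the defining equations read $\partial E/\partial u = E\,\Gamma^i_{j1}$ and $\partial E/\partial v = E\,\Gamma^i_{j2}$, where $\Gamma^i_{jk}$ denotes the $3\times 3$ matrix with row index $i$ and column index $j$, i.e.\ the two $3\times 3$ blocks of~\eqref{eq5}. The starting point is then the frame identity $\overline{E}(\Phi_1)=\mathbf{R}\,E(\Phi_2)\,{\mathbf{J}_{\eta}}_{3\times 3}\,\Lambda$.

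Next I would differentiate this identity with respect to $\overline{u}$ (and, identically, $\overline{v}$), treating the pointwise rotation $\mathbf{R}$ and conformal scale $\lambda$ (hence $\Lambda$) as locally constant, as dictated by the pointwise conformal model $\mathbf{J}_{\Psi_{21}}=\lambda\mathbf{R}$ of Section~\ref{s32}. On the left-hand side the connection definition for $\Phi_1$ gives $\partial\overline{E}(\Phi_1)/\partial\overline{u}=\overline{E}(\Phi_1)\,\Gamma^i_{j1}(\Phi_1)$, into which I substitute the frame identity. On the right-hand side the product rule applied to $E(\Phi_2){\mathbf{J}_{\eta}}_{3\times 3}$, combined with the chain rule $\partial E(\Phi_2)/\partial\overline{u}=(\partial E(\Phi_2)/\partial u)(\partial u/\partial\overline{u})+(\partial E(\Phi_2)/\partial v)(\partial v/\partial\overline{u})$ and the $\Phi_2$-connection definitions, converts the derivative of $E(\Phi_2)$ into $E(\Phi_2)\big(\Gamma^i_{j1}(\Phi_2)\,\partial u/\partial\overline{u}+\Gamma^i_{j2}(\Phi_2)\,\partial v/\partial\overline{u}\big)$.

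Equating both sides then yields $\mathbf{R} E(\Phi_2)\,{\mathbf{J}_{\eta}}_{3\times 3}\Lambda\,\Gamma^i_{j1}(\Phi_1)=\mathbf{R} E(\Phi_2)\big[(\Gamma^i_{j1}(\Phi_2)\,\partial u/\partial\overline{u}+\Gamma^i_{j2}(\Phi_2)\,\partial v/\partial\overline{u}){\mathbf{J}_{\eta}}_{3\times 3}+\partial{\mathbf{J}_{\eta}}_{3\times 3}/\partial\overline{u}\big]\Lambda$. Cancelling the common invertible left factor $\mathbf{R} E(\Phi_2)$ (here $\mathbf{R}\in SO(3)$ and $E(\Phi_2)$ is a frame, so both are invertible) and right-multiplying by $\Lambda^{-1}$ gives exactly the first line of~\eqref{eq_861}; repeating the argument verbatim in the $\overline{v}$ direction gives the second. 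The disappearance of $\mathbf{R}$ is precisely the asserted rotation invariance, while $\Lambda$ survives only through the conjugation $\Lambda(\cdot)\Lambda^{-1}$ acting on $\Gamma(\Phi_1)$, which is why the relation fails to be $\lambda$-invariant.

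I expect the main obstacle to be the justification for passing $\mathbf{R}$ and $\Lambda$ through the $\overline{u},\overline{v}$ derivatives as constants: were $\mathbf{R}$ to vary, a $\partial\mathbf{R}/\partial\overline{u}$ term would survive the left-cancellation and destroy rotation invariance, and a nonconstant $\Lambda$ would inject a spurious ${\mathbf{J}_{\eta}}_{3\times 3}(\partial\Lambda/\partial\overline{u})\Lambda^{-1}$ term into~\eqref{eq_861}. Hence the local constancy inherent in the pointwise conformal linearization of Section~\ref{s32} is exactly what the statement rests on and must be invoked explicitly. The only remaining care is bookkeeping the $3\times 3$ index convention so that the connection matrices always multiply the frame matrices from the right, and verifying the invertibility of $E(\Phi_2)$ so that the cancellation is legitimate.
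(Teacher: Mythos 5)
Your proposal is correct and follows essentially the same route as the paper's proof in Appendix~B: both start from the frame identity $\overline{E}(\Phi_1)=\mathbf{R}E(\Phi_2){\mathbf{J}_{\eta}}_{3\times 3}\Lambda$, differentiate it treating $\mathbf{R}$ and $\Lambda$ as locally constant, replace the frame derivatives on each side by the connection matrices acting from the right, cancel the invertible left factor $\mathbf{R}E(\Phi_2)$, and conjugate by $\Lambda$. The only cosmetic difference is that the paper works with differentials and connection one-forms $\bar\omega^i_j=\Gamma^i_{j1}(\Phi_1)d\bar u+\Gamma^i_{j2}(\Phi_1)d\bar v$ and separates the $d\bar u$, $d\bar v$ components at the end, whereas you differentiate directly in the $\overline{u}$ and $\overline{v}$ directions.
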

\begin{proof}
    For the proof process, please refer to Appendix~\ref{App_2}.
\end{proof}

Theorem \ref{t2} shows that the relationship of the connections ${\Gamma}^i_{jk}(\Phi_1)$ and  ${\Gamma}^i_{jk}(\Phi_2\circ\eta_{12})$ is rotational invariant  and is only related to the conformal scale $\lambda$.  We can compute the connections and the conformal scale of  $\mathcal{S}_1$ from those of $\mathcal{S}_2$ using $\eta_{12}$.~\cite{6} used the connections in 2D case to find a joint conformal NRSfM under the local infinitesimally planar approximation without considering the rotation scale $\lambda$. Our goal is, however, to recover $\lambda$ and get a better estimate on the local depth using the full information of the connections in 3D case and without any approximation in the differentiable order.
We will exploit the result in the metric preservation~\eqref{eq7a} and Theorem \ref{t2} to develop our algorithm.

\begin{corollary}\label{c2}
	If the deformation mapping $\Psi_{21}$ is local conformal, the second-order leading principal minors and the last elements of the connections $\Gamma^i_{jk}(\Phi_1)$ and  $\Gamma^i_{jk}(\Phi_2\circ\eta_{12})$ of the mapping $\Phi_1$ and $\Phi_2\circ\eta_{12}$ are invariant.
\end{corollary}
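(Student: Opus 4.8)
The plan is to recast Theorem~\ref{t2} as a pure similarity transformation relating $\Gamma^i_{jk}(\Phi_1)$ and $\Gamma^i_{jk}(\Phi_2\circ\eta_{12})$, and then read the invariant entries straight off the diagonal structure of $\Lambda$. The missing bridge is an expression for the connection of the \emph{composite} map $\Phi_2\circ\eta_{12}$ in terms of $\Gamma(\Phi_2)$. First I would start from the moving-frame identity $E(\Phi_2\circ\eta_{12})=E(\Phi_2){\mathbf{J}_{\eta}}_{3\times 3}$ given in~\eqref{eq7}, differentiate it with respect to the image-$1$ coordinate $\overline{u}$, apply the chain rule $\partial/\partial\overline{u}=(\partial u/\partial\overline{u})\,\partial/\partial u+(\partial v/\partial\overline{u})\,\partial/\partial v$ to the factor $E(\Phi_2)$, substitute the connection definition~\eqref{eq1}, and cancel the invertible frame $E(\Phi_2)$. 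This produces
\begin{equation}\nonumber
	{\mathbf{J}_{\eta}}_{3\times 3}\,\Gamma^i_{j1}(\Phi_2\circ\eta_{12})=\left(\Gamma^i_{j1}(\Phi_2)\frac{\partial u}{\partial\overline{u}}+\Gamma^i_{j2}(\Phi_2)\frac{\partial v}{\partial\overline{u}}\right){\mathbf{J}_{\eta}}_{3\times 3}+\frac{\partial {\mathbf{J}_{\eta}}_{3\times 3}}{\partial\overline{u}},
\end{equation}
with the analogous identity holding for $\overline{v}$ (block $k=2$).

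The right-hand side of this identity is \emph{identical} to the right-hand side of~\eqref{eq_861} in Theorem~\ref{t2}. Equating the two left-hand sides and cancelling the common invertible factor ${\mathbf{J}_{\eta}}_{3\times 3}$ then collapses Theorem~\ref{t2} into the clean conjugation
\begin{equation}\nonumber
	\Gamma^i_{jk}(\Phi_2\circ\eta_{12})=\Lambda\,\Gamma^i_{jk}(\Phi_1)\,\Lambda^{-1},\qquad k=1,2,
\end{equation}
in which the image-warp Jacobian and all derivative terms have disappeared, leaving only the conformal scale through $\Lambda=\mbox{diag}(\lambda,\lambda,\lambda^2)$.

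Finally I would read off the invariants from this diagonal conjugation. Since $\Lambda$ is diagonal, conjugation multiplies the $(p,q)$ entry of the $3\times3$ matrix $\Gamma^i_{jk}$ by the ratio $\Lambda_{pp}/\Lambda_{qq}$. For every $(p,q)$ in the leading $2\times2$ block this ratio is $\lambda/\lambda=1$, and for the bottom-right entry it is $\lambda^2/\lambda^2=1$, whereas all remaining off-block entries pick up a nontrivial power of $\lambda$. Hence the $2\times2$ leading principal submatrix (and therefore its second-order leading principal minor) together with the last $(3,3)$ element of $\Gamma^i_{jk}(\Phi_1)$ and $\Gamma^i_{jk}(\Phi_2\circ\eta_{12})$ coincide, which is the claim. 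The main obstacle is the first step: carefully carrying out the product- and chain-rule differentiation of the composite moving frame and recognising that its connection expression reproduces exactly the right-hand side of~\eqref{eq_861}; once that algebraic coincidence is secured, both the cancellation of ${\mathbf{J}_{\eta}}_{3\times 3}$ and the entrywise invariance check are immediate.
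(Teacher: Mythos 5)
Your proposal is correct, and it reaches the same cancellation that drives the paper's argument, but it organizes the computation differently and in a way worth noting. The paper's proof (Appendix~\ref{App_3}) never introduces $\Gamma^i_{jk}(\Phi_2\circ\eta_{12})$ explicitly: it partitions \eqref{eq_861} into $(2+1)\times(2+1)$ blocks, expands both sides against the block-diagonal matrices ${\mathbf{J}_{\eta}}_{3\times 3}$ and $\Lambda$ as in \eqref{eq_22}, and observes that the equations for the leading $2\times 2$ block and the $(3,3)$ entry carry the same power of $\lambda$ on both sides, which therefore cancels, while the off-diagonal blocks retain a residual factor of $\lambda$. You instead first establish the chain-rule transformation law ${\mathbf{J}_{\eta}}_{3\times 3}\,\Gamma^i_{jk}(\Phi_2\circ\eta_{12})=\bigl(\Gamma^i_{j1}(\Phi_2)\tfrac{\partial u}{\partial\overline{\bullet}}+\Gamma^i_{j2}(\Phi_2)\tfrac{\partial v}{\partial\overline{\bullet}}\bigr){\mathbf{J}_{\eta}}_{3\times 3}+\tfrac{\partial{\mathbf{J}_{\eta}}_{3\times 3}}{\partial\overline{\bullet}}$ by differentiating $E(\Phi_2\circ\eta_{12})=E(\Phi_2){\mathbf{J}_{\eta}}_{3\times 3}$ from \eqref{eq7} and using \eqref{eq1}; this identifies the right-hand side of \eqref{eq_861} and collapses the theorem to the conjugation $\Gamma^i_{jk}(\Phi_2\circ\eta_{12})=\Lambda\,\Gamma^i_{jk}(\Phi_1)\,\Lambda^{-1}$, from which the invariant entries are read off via the ratios $\Lambda_{pp}/\Lambda_{qq}$. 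Your route buys three things: it makes the corollary's literal comparison between $\Gamma(\Phi_1)$ and $\Gamma(\Phi_2\circ\eta_{12})$ explicit rather than implicit in a relation involving $\Gamma(\Phi_2)$ and the warp; it yields the slightly stronger conclusion that the entire leading $2\times 2$ block is invariant entrywise (not only its minor) while the $(1,2)$ and $(2,1)$ blocks scale by exactly $\lambda^{-1}$ and $\lambda$; and it avoids rewriting the long block expansion. What the paper's version buys is that the explicit four scalar/block equations it produces are reused directly in the proof of Corollary~\ref{c3} to extract the closed-form conformal scale, so the expansion is not wasted effort there. Both arguments rest on the same block-diagonal structure and are equally valid.
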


\begin{proof}
    For the proof process, please refer to Appendix~\ref{App_3}.
\end{proof}

\begin{corollary}\label{c3}
	For the conformal deformation $\Psi_{21}$, we can write $\Gamma^i_{j1}(\Phi_1)=(\bar{\bm{T}}^1_{kl})$, $\Gamma^i_{j2}(\Phi_1)=(\bar{\bm{T}}^2_{kl})$, $\Gamma^i_{j1}(\Phi_2)=({\bm{T}}^1_{kl})$, and $\Gamma^i_{j2}(\Phi_2)=({\bm{T}}^2_{kl}),~k,l=1,2$ as $2\times 2$ block matrices. Then, for two frames, only considering the rotational invariance of the connection in~\eqref{eq_861}, the conformal scale has a closed-form solution based on the sum of squares formulation: 
	\begin{equation}\label{eq_23add2}
		\begin{aligned}
		&\lambda=\frac{1}{8}\sum_{i=1}^{4}\sum_{j=1}^{2}b^i_j\\
&b^1_j=\frac{(\frac{\partial u}{\partial \bar{u}}{\bm{T}}^1_{21}+\frac{\partial v}{\partial \bar{u}}{\bm{T}}^2_{21})\mathbf{J}_{\eta_{12}}\bm{a}_{j}}{\det(\mathbf{J}_{\eta_{12}})\bar{\bm{T}}^1_{21}\bm{a}_{j}},\\
&b^2_j=\frac{{\bm{a}_{j}}^\top\mathbf{J}_{\eta_{12}}\bar{\bm{T}}^1_{12}}{{\bm{a}_{j}}^\top(\frac{\partial u}{\partial \bar{u}}{\bm{T}}^1_{12}+\frac{\partial v}{\partial \bar{u}}{\bm{T}}^2_{12})\det(\mathbf{J}_{\eta_{12}})},\\
&b^3_j=\frac{(\frac{\partial u}{\partial \bar{v}}{\bm{T}}^1_{21}+\frac{\partial v}{\partial \bar{v}}{\bm{T}}^2_{21})\mathbf{J}_{\eta_{12}}\bm{a}_{j}}{\det(\mathbf{J}_{\eta_{12}})\bar{\bm{T}}^2_{21}\bm{a}_{j}},\\
&b^4_j=\frac{{\bm{a}_{j}}^\top\mathbf{J}_{\eta_{12}}\bar{\bm{T}}^2_{12}}{{\bm{a}_{j}}^\top(\frac{\partial u}{\partial \bar{v}}{\bm{T}}^1_{12}+\frac{\partial v}{\partial \bar{v}}{\bm{T}}^2_{12})\det(\mathbf{J}_{\eta_{12}})},\\
		\end{aligned}
	\end{equation}
where $b^i_j\in\mathbb{R},~\bm{a}_{1}=(1,~0)^\top,~\bm{a}_{2}=(0,~1)^\top$.

\end{corollary}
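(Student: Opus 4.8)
The plan is to substitute the explicit block-diagonal forms $\Lambda = \mathrm{diag}(\lambda\bm{I}_{2\times 2},\lambda^2)$ and ${\mathbf{J}_{\eta}}_{3\times 3}=\mathrm{diag}(\mathbf{J}_{\eta_{12}},\det(\mathbf{J}_{\eta_{12}}))$ into the two connection identities of Theorem~\ref{t2} and to read off the resulting $2\times2$ block equation one block at a time. First I would record the effect of the conjugation $\Lambda(\cdot)\Lambda^{-1}$ on a $3\times3$ matrix written in the $2{+}1$ block partition: it fixes the $(1,1)$ and $(2,2)$ blocks, multiplies the $(1,2)$ block by $\lambda^{-1}$, and multiplies the $(2,1)$ block by $\lambda$. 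Left-multiplying by ${\mathbf{J}_{\eta}}_{3\times 3}$ then scales the block rows by $\mathbf{J}_{\eta_{12}}$ and by $\det(\mathbf{J}_{\eta_{12}})$ respectively. The invariance of the two diagonal blocks is precisely Corollary~\ref{c2}; the content of Corollary~\ref{c3} therefore lives in the two off-diagonal blocks, which are the ones that carry the factors $\lambda^{\pm1}$.

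The key simplification I would exploit is that, since ${\mathbf{J}_{\eta}}_{3\times 3}$ is block diagonal, so is its derivative; hence the inhomogeneous term $\partial {\mathbf{J}_{\eta}}_{3\times 3}/\partial\overline{u}$ (and its $\overline{v}$ counterpart) contributes only to the diagonal blocks and drops out of both off-diagonal block equations. This leaves clean homogeneous relations: from the first identity of Theorem~\ref{t2}, the $(2,1)$ block gives $\det(\mathbf{J}_{\eta_{12}})\,\lambda\,\bar{\bm{T}}^1_{21}=(\tfrac{\partial u}{\partial\overline{u}}\bm{T}^1_{21}+\tfrac{\partial v}{\partial\overline{u}}\bm{T}^2_{21})\mathbf{J}_{\eta_{12}}$ and the $(1,2)$ block gives $\lambda^{-1}\mathbf{J}_{\eta_{12}}\bar{\bm{T}}^1_{12}=\det(\mathbf{J}_{\eta_{12}})(\tfrac{\partial u}{\partial\overline{u}}\bm{T}^1_{12}+\tfrac{\partial v}{\partial\overline{u}}\bm{T}^2_{12})$, while the second identity produces the analogous pair with $\overline{v}$-derivatives and $\bar{\bm{T}}^2$.

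Each off-diagonal relation is a vector identity (a row vector for the $(2,1)$ blocks, a column vector for the $(1,2)$ blocks) rather than a scalar one, so to isolate $\lambda$ I would project onto the standard basis: right-multiplying the $(2,1)$ relations by $\bm{a}_j$ and left-multiplying the $(1,2)$ relations by $\bm{a}_j^\top$, for $j=1,2$. Solving each of the eight resulting scalar equations for $\lambda$ reproduces exactly the eight expressions $b^i_j$. Finally, since in the ideal noise-free case all eight ratios equal the common $\lambda$ but in practice they disagree, I would fit $\lambda$ by minimizing $\sum_{i,j}(b^i_j-\lambda)^2$; the stationarity condition is linear and its unique minimizer is the arithmetic mean $\tfrac18\sum_{i,j}b^i_j$, which is the claimed closed form.

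The main obstacle I anticipate is bookkeeping rather than conceptual. One must keep the non-square block shapes straight—the off-diagonal blocks are $2\times1$ and $1\times2$, so the two projection directions are genuinely different—confirm that the $\lambda$ and $\lambda^{-1}$ factors land on the blocks as claimed, and verify that the inhomogeneous derivative term is truly confined to the diagonal. One should also observe that each $b^i_j$ is well-defined only where its denominator is nonzero, which is exactly why averaging over the eight estimates, rather than relying on any single block, is the robust and natural way to state the result.
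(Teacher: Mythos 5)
Your proposal is correct and follows essentially the same route as the paper: extract the off-diagonal $(1,2)$ and $(2,1)$ blocks of the block form of~\eqref{eq_861} (where the inhomogeneous derivative term vanishes and the residual $\lambda^{\pm 1}$ factors survive), project the four vector identities onto $\bm{a}_1,\bm{a}_2$ to get eight scalar equations of the form $\lambda-\alpha_i=0$, and average. The only cosmetic difference is in the last step: you minimize $\sum_i(\lambda-\alpha_i)^2$ directly and identify the arithmetic mean as the unique stationary point, whereas the paper solves $\sum_i(\lambda-\alpha_i)^2=0$ as a quadratic and invokes the AM--QM inequality to argue the nearest real solution is the same mean; both yield $\lambda=\tfrac18\sum_{i,j}b^i_j$.
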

\begin{proof}
    For the proof process, please refer to Appendix~\ref{App_4}.
\end{proof}

In the later section, we will consider both the invariance of the connection in~\eqref{eq_861} and metric preservation in~\eqref{eq7a} to recover the conformal instead of using the connection invariance only, which leads to a non-linear equation with a higher dimension.

\section{Con-NRSfM Algorithm}\label{s5}

Our algorithm is built based on the rotational invariance property of the connections and the metric preservation of the moving frame (Theorem~\ref{t2} and~\eqref{eq7a}) using the parallel separable iterative framework and self-supervised convolutional network.
\subsection{Point-wise Solution}
For the $i$-th feature in the $j$-th shape, we can define several variables $y^{(i,j)}_1$, $y^{(i,j)}_2$, $y^{(i,j)}_{11}$, $y^{(i,j)}_{12}$, $y^{(i,j)}_{22}$, $\beta^{(i,j)}$, and $\lambda^{(i,j)}$, corresponding to the first-order terms $y_1$, $y_2$, the second-order terms $y_{11},~y_{12},~y_{22}$, the depth $\beta$, and the conformal scale $\lambda$. Based on Section~\ref{s33} and a given edge number $N_e$, we can generate a well-connected graph $\mathcal{G}_{opt}$~\cite{3}. Assuming that all the features are detected and tracked in all the images (this is only for simplification of the equation, later in Section~\ref{s6} we will show the proposed algorithm is fine for data missing), the NRSfM problem aims to compute the depths of $N_p$ 3D points from $N_m$ monocular images. This problem can be formulated as a graph optimization problem of which the variable dimension is $7N_pN_m$. For the $e$-th edge $(i_e,~j_e)\in\mathcal{G}_{opt}$, considering the $i$-th feature in the $i_e$-th and the $j_e$-th shape, the factors  $\bar{f}_{j'}(i,i_e,j_e)$ of this graph optimization problem are the virtual measurements written by the invariance relationships~\eqref{eq7a} and~\eqref{eq_861}. As shown in \figurename~\ref{figurelabe3}, we obtain a weighted non-linear least squares (NLLS) problem:
\begin{equation}\label{E20}
	\begin{aligned}
		&\min\sum_{(i_e,j_e)\in\mathcal{G}_{opt}}\sum_{i=1}^{N_p}\omega_{e}\sum_{j'=1}^{21}\|\bar{f}_{j'}(i,i_e,j_e)\|^2,\\
	\end{aligned}
\end{equation}
where 
\begin{equation}
	\begin{aligned}
		&\bar{f}_{j'}(i,i_e,j_e)=f_{j'}(\widetilde{\bm{x}}^{(i,i_e)}_{n},\widetilde{\bm{x}}^{(i,j_e)}_{n}, \lambda^{(i,i_e,j_e)}),\\
        &\widetilde{\bm{x}}^{(i,i_e)}_{n}=(y^{(i,i_e)}_1,y^{(i,i_e)}_2,y^{(i,i_e)}_{11},y^{(i,i_e)}_{12},y^{(i,i_e)}_{22},\beta^{(i,i_e)})^\top,\\
        &\widetilde{\bm{x}}^{(i,j_e)}_{n}=(y^{(i,j_e)}_1,y^{(i,j_e)}_2,y^{(i,j_e)}_{11},y^{(i,j_e)}_{12},y^{(i,j_e)}_{22},\beta^{(i,j_e)})^\top
	\end{aligned}
\end{equation} are the virtual measurements, $\omega_{e}$ is the weight of the edge $(i_e,j_e)$ in the selected graph $\mathcal{G}_{opt}$, $j'$ is the equation number of the virtual measurements. Because~\eqref{eq_861} corresponds to two $3\times3$ matrices and~\eqref{eq7a} related to three independent elements in a $2\times2$ matrix, the total number of the virtual measurements for one feature within two images is $2\times9+3=21$.

\begin{figure}[!ht]
	\begin{center}
	\includegraphics[width=0.8\linewidth]{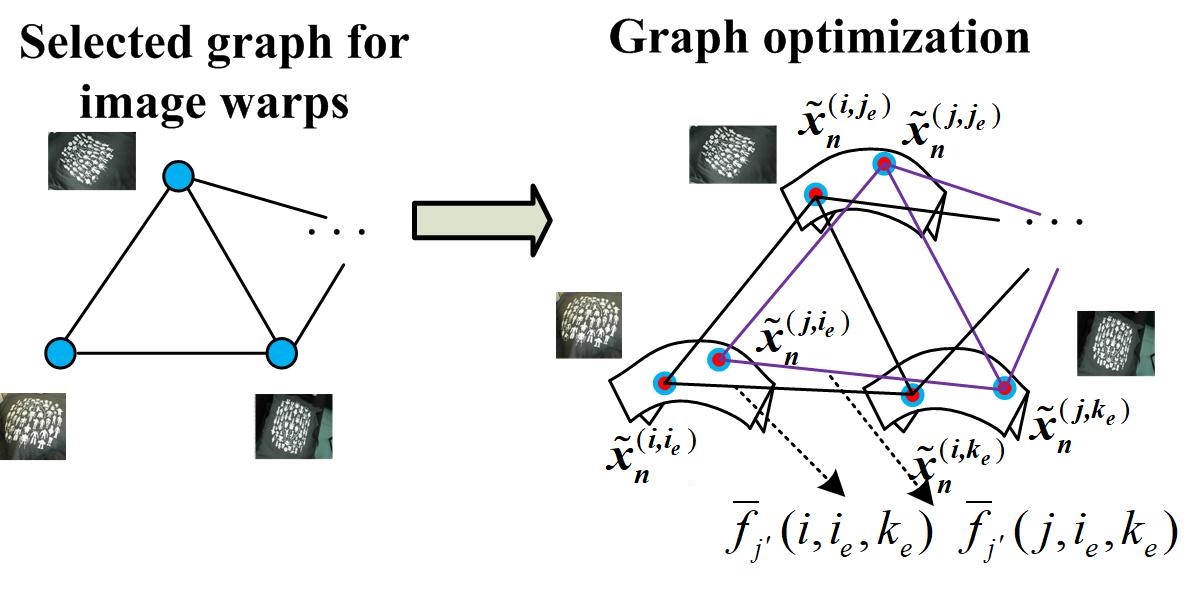}
	\end{center}
	\caption{Point-wise solution using graph optimization.}
	\label{figurelabe3}
\end{figure}

\subsection{Separable Framework using Parallel Strategy}\label{s52}
Typically, a weighted NLLS problem is solved using gradient-based methods~\cite{3}. To mitigate optimization sensitivity, reduce the impact of multiple local minima, and address issues related to unbounded scaling, we propose a separable framework. It decouples the sub-variables and leverages a parallel strategy to enhance efficiency and robustness. 

\textbf{Pre-Step. Variable optimization using isometric assumption}: Under the isometric assumption (a special case of the conformal assumption) and the IP assumption~\cite{5}, for two images, the invariance properties of Christoffel Symbols (CS)\cite{30} and MT from related work\cite{6} are utilized to construct four virtual measurements. The first-order terms, $y^{(i,j)}_1$ and $y^{(i,j)}_2$, are estimated by solving a weighted NLLS problem, where the virtual measurements act as factors. Since the measurements between 3D points are independent of feature IDs, the overall NLLS problem involving $N_p$ 3D points can be decomposed into $N_p$ independent sub-NLLS problems, each corresponding to specific features. These sub-problems are solved in parallel using multiple cores, as illustrated in \figurename~\ref{figurelabe4}. {Each core independently addresses a sub-problem, and the final solution is obtained by concatenating the results from all sub-problems.} The first-order terms are efficiently computed using the trust-region-reflective (TRR) algorithm, aided by the given sparse Jacobian matrix.
\begin{figure}[!ht]
	\begin{center}
		\includegraphics[width=0.9\linewidth]{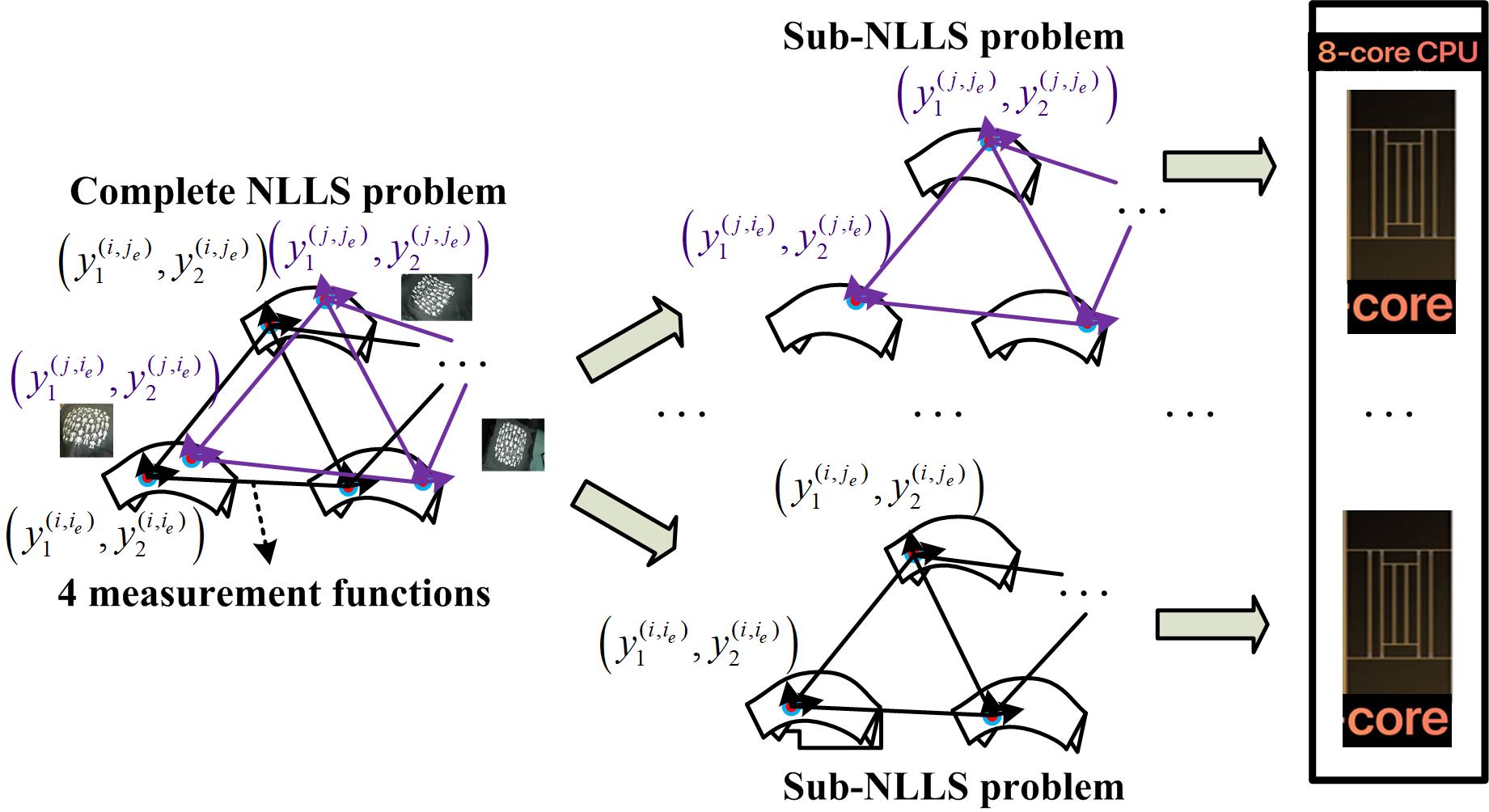}
	\end{center}
	\caption{Parallel method to solve the point-based problems.}
	\label{figurelabe4}
\end{figure}

  
  

By introducing the obtained first-order terms into the Jacobian matrix $\mathbf{J}_{\Phi_i}$ of the image embedding $\Phi_i$, the normal fields are obtained by normalizing the cross-product of two columns of the Jacobian matrix. Then, by integrating the normal fields~\cite{5}, the whole surfaces with the depth $\beta$ are recovered. Different from the parallel strategy in the first-order terms optimization, the depths of the points belonging to one frame are recovered together, which means that dividing the complete problem based on the points is impossible. Hence, each core runs the depth recovery corresponding to one frame, instead of a feature, as shown in \figurename~\ref{figurelabe5}. Then, the conformal scales $\lambda$ is given by the average of 11 equations in~\eqref{eq7a} and~\eqref{eq_23add2}.

\begin{figure}[!ht]
	\begin{center}
	\includegraphics[width=0.7\linewidth]{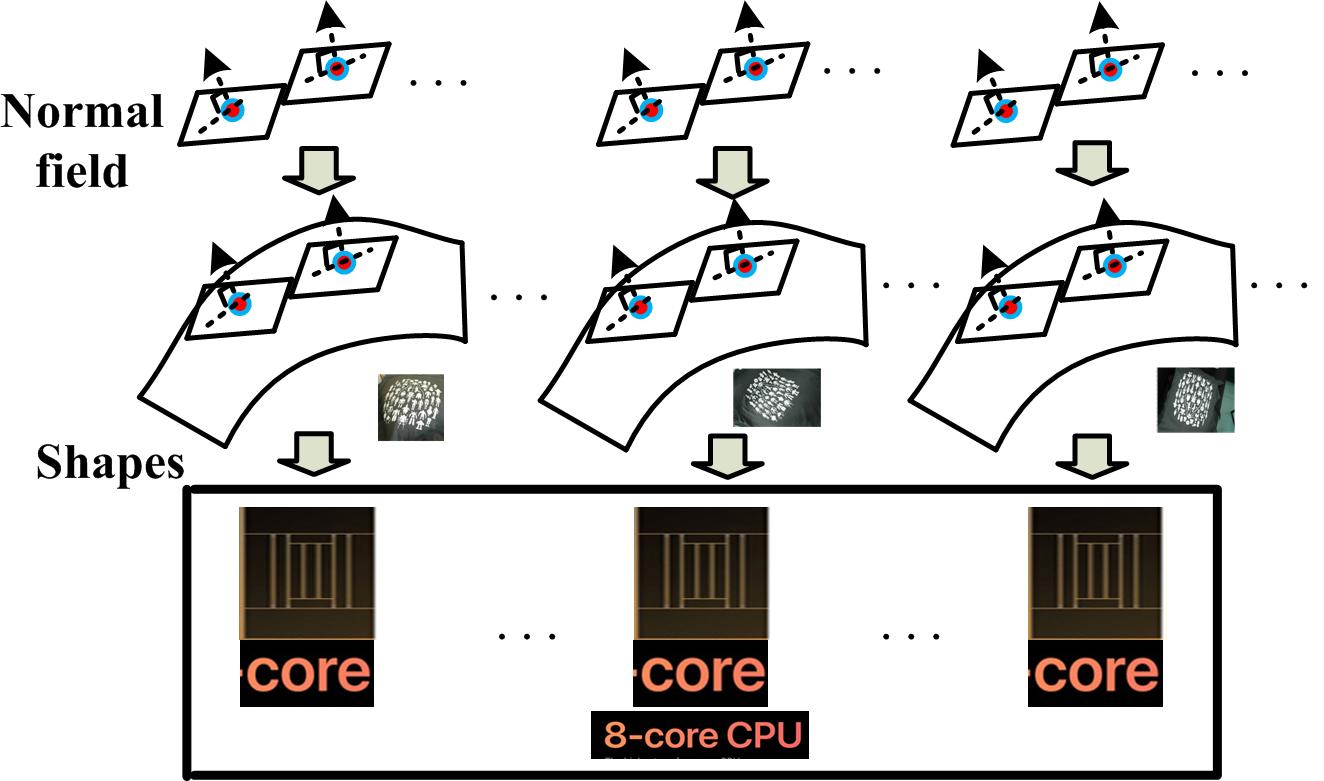}
	\end{center}
	\caption{Parallel method to solve the depth recovery.}
	\label{figurelabe5}
\end{figure}

\textbf{Step 1. Second-order terms optimization}:
At this stage, the first-order terms $y_1$, $y_2$, the depth $\beta$, and the conformal scale $\lambda$ as known constant values. The second-order terms $y^{(i,j)}_{11}$, $y^{(i,j)}_{12}$, and $y^{(i,j)}_{22}$ are considered as variables. Initially, these second-order terms are set to zero in the first iteration, while in subsequent iterations, the solution from the previous iteration is used as the starting value. Since metric preservation does not involve second-order information, only the rotational invariance property is applied here. For every pair of features corresponding to edges in the selected tree $\mathcal{G}_{opt}$, the connection~\eqref{eq5} is incorporated into the rotational invariance property~\eqref{eq_861}, yielding 18 virtual measurements. These measurements depend on the second-order terms as variables and the elements of equation~\eqref{eq_861} as factors. Using a parallel approach similar to that in \figurename~\ref{figurelabe4}, point-based sub-NLLS problems are solved efficiently. The second-order terms are computed using the TRR algorithm, with the optimization iterations capped at $N_t=3$. Subsequent steps for solving problems use the same optimization strategy and parameters.

\textbf{Step 2. First-order terms optimization using conformal assumption}: In this step, only the first-order terms $y^{(i,j)}_1$, $y^{(i,j)}_2$ are regarded as the variables in problem~\eqref{E20} and the other variables are considered as the known constant values based on the conformal assumption. The variables $y^{(i,j)}_1$, $y^{(i,j)}_2$ are obtained based the point-based sub-NLLS problems.


\textbf{Step 3. Depth computation}: We express local normals using first-order terms and integrate the normal field along the surface to recover depth up to a scale factor. This process is performed in parallel across different frames, similar to  Fig.~\ref{figurelabe5}.

\textbf{Step 4. Conformal scale optimization}: In the Pre-Step, the conformal scale is calculated using an approximate formulation. However, this solution is unstable and only valid when all factors in \eqref{eq_861} and \eqref{eq7a} are perfectly satisfied, which is unrealistic in real-world cases. In this step, similar to Steps 2 and 3, all previously computed terms are treated as constants, while the conformal scale $\lambda^{(i,i_e,j_e)}$ becomes the variable. Multiple sub-NLLS problems\footnote{Due to Corollary~\ref{c2}, second-order leading principal minors and the last elements of the connections are excluded.} are constructed using different features and solved efficiently.

\textbf{Terminal conditions}: At each iteration, we go through Step 1 to Step 4 and get a set of optimized parameters $\bm{x}^{(i,i_e)}_{n}(k)$ and  $\lambda^{(i,i_e,j_e)}(k)$, where $k$ is the iteration number. Then, we will check its convergence index $Terminal(k)=\sum_{i}\sum_{i_e}\|\bm{x}^{(i,i_e)}_{n}(k)-\bm{x}^{(i,i_e)}_{n}(k-1)\|+\sum_{i}\sum_{(i_e,j_e)\in\mathcal{G}_{opt}}\|\lambda^{(i,i_e,j_e)}(k)-\lambda^{(i,i_e,j_e)}(k-1)\|<\sigma$ using the results from the last two iterations. If the condition is met, the depth  $\beta^{(i,i_e)}$ is finalized as the solution.

\begin{remark}
{Our separable optimization framework is inspired by the principle of alternating optimization~\cite{41add}. Given that the underlying problem is a non-convex NLLS formulation, we cannot theoretically guarantee the convergency of using separable framework. However, in practice, the framework exhibits favorable numerical properties, including better conditioning of individual sub-problems, reduced memory usage, and empirically improved convergence behavior. We now consider the applied factors $\bar{f}_{j'}(i, i_e, j_e)$ in the NLLS formulation. The fixed coefficients associated with these factors are precomputed from image warping and encapsulate shared surface information. Once computed, these fixed coefficients enable each factor to impose only the connection constraint~\eqref{eq7a} and the metric tensor constraint~\eqref{eq_861} for the same spatial point observed across different frames, without introducing measurement constraints between different points within the same frame. Consequently, several key optimization steps in our framework, including the initialization in the Pre-step, second-order term optimization in Step 1, first-order term optimization in Step 2, and conformal scale adjustment in Step 4, are inherently independent across different points. This natural independence permits efficient parallelization across multiple cores without affecting convergence or final estimation accuracy. With the given derivative terms $y^{(i,j_e)}_1,y^{(i,j_e)}_2,y^{(i,j_e)}_{11},y^{(i,j_e)}_{12},y^{(i,j_e)}_{22}$ fused along the surface, the normal field of the whole surface is obtained.  The subsequent depth recovery, performed up to a scale factor using the estimated normal field, is also frame-independent. Thus, parallelization in Step 3 across different frames is theoretically sound and empirically valid. Overall, our parallel, separable framework achieves high accuracy, efficiency, and robustness. As shown in Section~\ref{s6}, it solves the problem reliably, even with zero or random initialization.}
\end{remark}

\subsection{Dense 3D point cloud with texture}\label{s62}
In certain NRSfM tasks, the desired output is a dense 3D point cloud with texture. To achieve this, a self-supervised convolutional neural network can be employed to generate the dense point cloud from a sparse normal field and the corresponding input images\footnote{Alternatively, if only a sparse point cloud is required, the classical integration method described in Step 3 can be used to directly recover the sparse point cloud.}.


To apply the self-supervised approach for automatic data generation, we randomly sample sparse 3D point clouds with a specified number of points, $n_p$, within a predefined range, including normalized pixel coordinates and depth values. Tangent vectors are computed along pixel directions for all sampled 3D features, and their corresponding normal vectors are derived, forming the normal field using the BBS tool\footnote{The BBS tool is a toolbox for efficiently handling bicubic B-splines, enabling interpolation of 3D point clouds~\cite{32}.}. This process allows the generation of numerous surfaces with their normal fields as inputs and depth values as outputs. To create smoother surfaces, we further employ fifth-degree polynomial surfaces to fit the randomly sampled point cloud and extract new points along the fitted surface.

 Convolutional neural networks are commonly used with image datasets, so the next step involves encoding the normal field inputs and depth outputs into RGB and depth images, respectively. Consider the obtained normal field $\{\bm{n}_{i}\in\mathbb{R}^3\},~i=1,2,\cdots,n_p$, corresponding to the 2D features $(u_i,v_i),~i=1,2,\cdots,n_p$. By combining the three components of the normal vector with the 2D features in a single frame $(u_i,v_i)~i=1,2,\cdots,n_p$, we can construct 3D point clouds. These point clouds can then be integrated into a dense 3D surface using the BBS tool. The three resulting surfaces corresponding to one frame are treated as the RGB channels of an image by interpolating all pixel positions  $(u_i,v_i)~i=1,2,\cdots,n_p$. In this way, the normal fields are encoded as multiple RGB images, which serve as inputs for the convolutional neural network. Similarly, the depth data is encoded into RGB images by assigning the same values to all three channels. Alternatively, the depth values corresponding to the 2D features can also be encoded into depth-specific images. To improve robustness, data augmentation is applied by adding Gaussian noise to the components of the input normal vectors. This improves the model's ability to generalize to noisy data during practical applications.

 \begin{figure}[!ht]
 	\begin{center}
 		\includegraphics[width=0.9\linewidth]{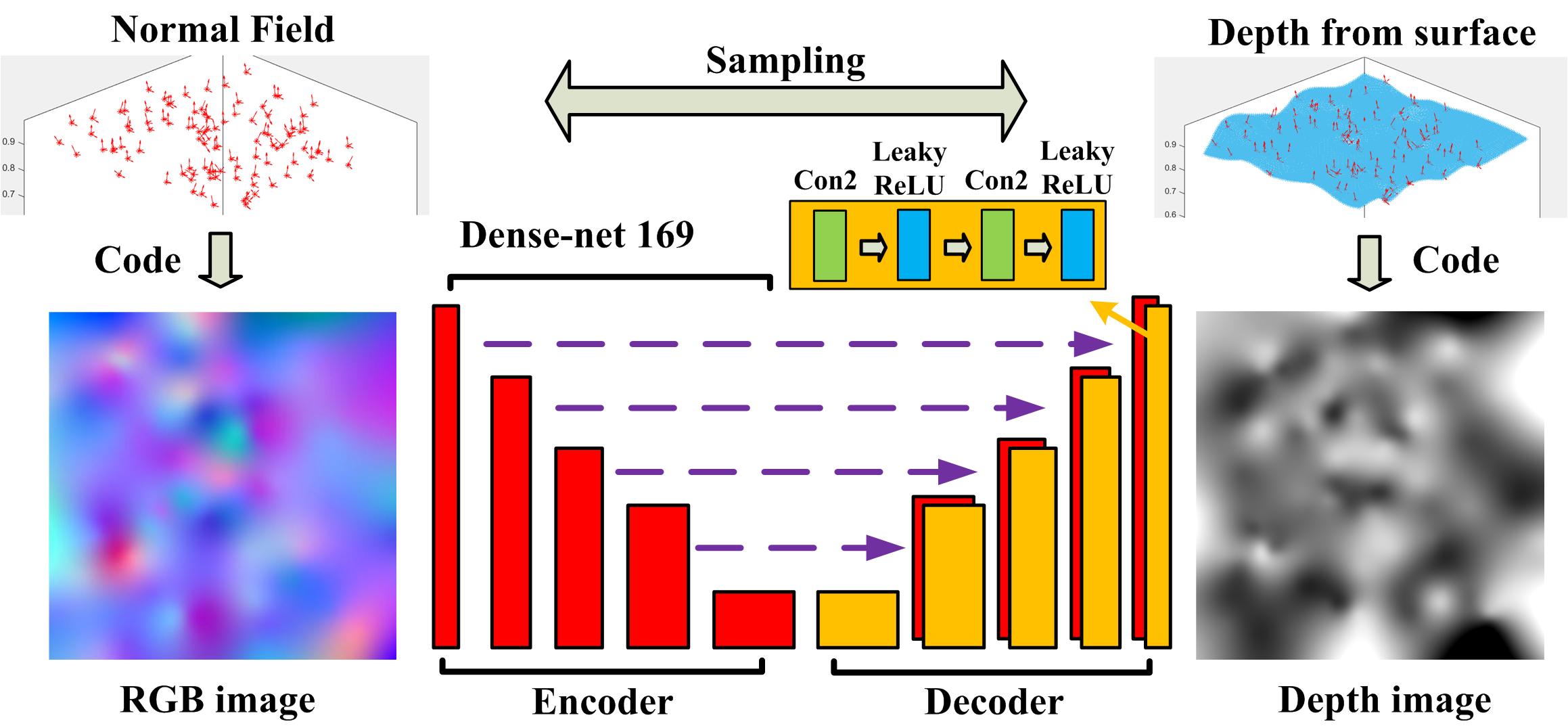}
 	\end{center}
 	\caption{Self-supervised network for depth recovery.}
 	\label{figurelabe6}
 \end{figure}
 

\figurename~\ref{figurelabe6} provides an overview of our framework for depth recovery from the normal field using a straightforward encoder-decoder network with skip connections. The architecture is a slightly modified version of a well-known network for monocular image-based depth estimation~\cite{32}. The encoder is based on DenseNet-169~\cite{33}, but unlike~\cite{32}, which employs transfer learning, our network does not use pre-trained weights from ImageNet~\cite{34}. This decision stems from the fact that our input images, derived from normal fields, differ significantly from traditional monocular images of real-world objects, rendering ImageNet pre-trained weights unsuitable for feature extraction in our case. The decoder consists of five basic blocks, each comprising convolutional layers, a leaky rectified linear unit (Leaky ReLU), and a 2$\times$ bilinear upsampling layer with concatenation. Our network adopts the same simple architecture approach as~\cite{32}, avoiding Batch Normalization~\cite{35} or other advanced layers described in~\cite{36}. Additionally, we enhance the architecture by appending an upsampling layer, a 2D convolutional layer, and a Leaky ReLU to align the output depth image with the input RGB size.

For the loss function, we only use the point-wise L1 loss defined as the differences between the ground truth depth values and the predicted depth values:

\begin{equation}\label{E18}
	\begin{aligned}
		L(\bm{p},\bar{\bm{p}})=\frac{1}{n_a}\sum_{i=1}^{n_a} |\bm{p}_i-\bar{\bm{p}}_i|_1,
	\end{aligned}
\end{equation}
where $n_a$ is the number of depth image pixels, $\bm{p}_i$ and $\bar{\bm{p}}_i$ are respectively the ground truth and prediction of the $i$-th pixel.

The primary advantage of the trained network is its ability to directly recover a dense point cloud, rather than being limited to a sparse solution. This makes our self-supervised convolutional neural network significantly faster and more robust compared to traditional integration and texturing methods~\cite{3}.
\subsection{Algorithm Summary}
Our Con-NRSfM method integrates differential geometry, a separable parallel framework, and a learning-based neural network. Using the edge selection approach described in Section~\ref{s33}, we identify a well-connected subgraph from a complete weighted graph. For each pair of connected images, the image warp is computed and used to formulate a point-wise weighted NLLS problem based on virtual measurements derived from metric preservation \eqref{eq7a} and rotational invariance of the connection \eqref{eq_861}. To address this large-scale NLLS problem, we employ a separable framework that decouples the first-order terms $y_1$, $y_2$, second-order terms $y_{11},~y_{12},~y_{22}$, depth $\beta$, and conformal scale $\lambda$, improving both robustness and solution accuracy. The process begins with simplifying assumptions to solve an isometric NRSfM efficiently, providing the first-order terms. Depth values are then recovered using an integration approach, and conformal scales are quickly estimated using an approximate closed-form solution. Next, we apply a point-based parallel iterative strategy to optimize the first-order terms, second-order terms, and conformal scale sequentially using the proposed NLLS formulation \eqref{E20}. Finally, the depth and corresponding 3D dense point cloud with texture are recovered from the normal field, leveraging the first-order terms and a pre-trained self-supervised network described in Section~\ref{s62}. The complete pipeline is summarized in Algorithm~\ref{alg:euclid} and illustrated in the flow chart in Fig.~\ref{figurelabexx}.

\begin{algorithm}[!ht]  
	\caption{Con-NRSfM Algorithm}  
	\label{alg:euclid}  
	\small{\begin{algorithmic}[1]  
			\Require{2D sparse point clouds $(u,v)$ with feature correspondences in different frames or multiple monocular images $\mathcal{I}_i$, the given edge number $N_e=n_c-1+k$.}
			\Ensure{The 3D point clouds $\bm{z}$ with depth or dense surfaces.}	
\Statex{ \hspace{-2em} \textcolor{magenta}{Pre-Step: Variable optim. using isometric assumption (line 1-7)}}
			\State {Build a complete graph $\mathcal{G}_c=(\mathcal{V}_c,~\mathcal{E}_c,~w_c)$ using feature matching for every image pair;}
			\State {Select a well-connected subgraph $\mathcal{G}_{opt}$ using the Kruskal’s maximum spanning tree algorithm and the greedy-based method;}
			\State {Compute the image warps based on 2D Schwarzian derivatives;}
			\State {Compute first-order terms using the isometric assumption;}
			\State {Obtain the normal field $\{\bm{n}_{i}\in\mathbb{R}^3\}$ by Jacobian matrix $\mathbf{J}_{\Phi_i}$;}
			\State {Recover $\beta$ by integrating the normal field $\{\bm{n}_{i}\}$ in parallel;}
			\State Compute conformal scale $\lambda_c$ using average value;
			\While{$Terminal(k)\geq\sigma$}\Comment{\textcolor{magenta}{Terminal conditions}}
			
			\hspace{-1.15cm}\textcolor{magenta}{Step 1: Second-order terms optim. (line 9-10)}
			
			\State{Get the second-order terms $y_{11}$, $y_{12}$, and $y_{22}$ in parallel given the constant values: the first-order terms $y_{1,c}$, $y_{2,c}$, the depth $\beta_{c}$, and the conformal scale $\lambda_{c}$;}
            \State{$y_{11}$$\rightarrow$$y_{11,c}$, $y_{12}$$\rightarrow$$y_{12,c}$, $y_{22}$$\rightarrow$$y_{22,c}$}

 \hspace{-1.15cm}\textcolor{magenta}{Step 2: First-order optim. using conformal assumption (line 11-12)}
            
			\State{Get the first-order terms $y_{1}$ and $y_{2}$ in parallel by solving~\eqref{E20} given the constant values: the second-order terms $y_{11,c}$, $y_{12,c}$, $y_{22,c}$, the depth $\beta_{c}$, and the conformal scale $\lambda_{c}$;}
 	\State{$y_{1}$$\rightarrow$$y_{1,c}$, $y_{2}$$\rightarrow$$y_{2,c}$}
 	
 \hspace{-1.15cm}\textcolor{magenta}{Step 3: Depth computation (line 13-14)}

			\State{Recover $\beta$ by integrating the normal field $\{\bm{n}_{i}\}$ in parallel;}    	\State{$\beta$$\rightarrow$$\beta_{c}$;}
 	
 \hspace{-1.15cm}\textcolor{magenta}{Step 4: Conformal scale optim. (line 15-16)}
			
			\State{Get the conformal scale $\lambda$ in parallel by solving~\eqref{E20} given the constant values: the first-order terms $y_{1,c}$, $y_{2,c}$, the second-order terms $y_{11,c}$, $y_{12,c}$, $y_{22,c}$, and the depth $\beta_{c}$;}
			\State{ $\lambda$$\rightarrow$$\lambda_{c}$;}
			\EndWhile
			\If{ Images $\mathcal{I}_i$ are available $\bigcap$ require dense point clouds} Recover the dense point clouds using network in Section~\ref{s62};
			\EndIf
			\State \textbf{return} Sparse point cloud or dense surface with texture.
	\end{algorithmic} }
\end{algorithm} 

 \begin{figure}[!ht]
	\begin{center}
		\includegraphics[width=0.8\linewidth]{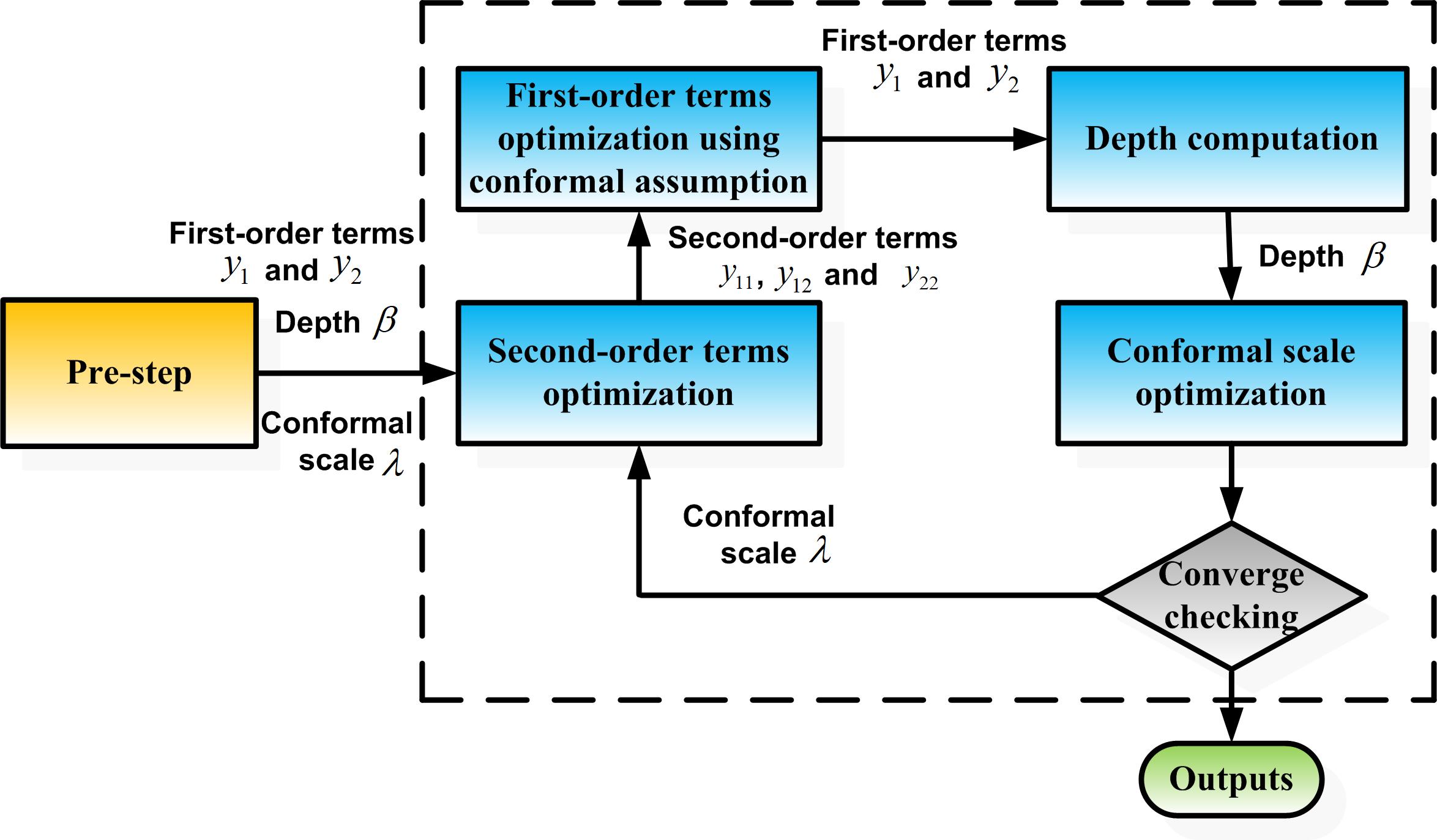}
	\end{center}
	\caption{Flow chart for the core of Con-NRSfM algorithm.}
	\label{figurelabexx}
\end{figure}

\section{Simulation and Experiments}\label{s6}
	In this part, we present simulations and experiments
with synthetic and real datasets using  C++ (Section~\ref{s64}) and MATLAB (other subsections) on a Dell
G5-5500 laptop with an Intel Core i7-10870H 2.20 GHz processor. Our network is pre-trained using Tensorflow 2.0. Our method is compared with some state-of-the-art
NRSfM methods by the shape error and \% 3D
error, which are commonly used in the NRSfM literature~\cite{22,3}, computed as RMSEs between the reconstructed and ground-truth results. All error computation of the recovered shapes is performed after using ABSOR function by finding optimal rotation, scale, and translation~\cite{41_new}.

\subsection{Simulation Dataset}
The theorem proposed in Section~\ref{s4} will be verified and then the performance of the proposed method will be evaluated using two synthetic datasets against other methods, including \textbf{infP}~\cite{6}\footnote{It shows a similar performance with its extended version \textbf{iso}~\cite{7}.}, \textbf{Diff}~\cite{10}, \textbf{Ch17}~\cite{4}, \textbf{SDP17}~\cite{25a}\footnote{Due to the lack of the open source, we re-implement this algorithm.}, and \textbf{Go20}~\cite{3}\footnote{The edge number needs to be set in this method. We use the default edge number with $n_c-1$ for \textbf{Go20} and our framework with no additional edges.}.
\subsubsection{Theorem Verification}
The proposed theorems and corollaries show the important invariance property of the connections undergoing different deformations. Here we mainly focus on the numerical verification of 
the relationship~\eqref{eq_861} in Theorem 1. The conformal deformation is simulated using 11 balls with different radii $R_i$ and central coordinates $(x_i,y_i,z_i)^\top$, where $i$ is the index of the balls. In the local coordinate system with the camera center as origin, all the features $(x_k^f,y_i^f,z_i^f)^\top$ on the partial surface of the ball, the depths $\beta_i=z_i$, and their first/second-order derivatives are written as the function of the pixels $(u_k,v_k)^\top$ using the perspective projection. For the other balls, the matched features $(\bar x^f_i,\bar y^f_i,\bar z^f_i)^\top$ and their corresponding pixels $(\bar u_k,\bar v_k)^\top$ are computed as: 
\begin{equation}
	\begin{aligned}
		&\left(\begin{array}{c}\bar x^f_i\\\bar y^f_i\\\bar z^f_i\end{array}\right)=
		&\left(\begin{array}{c}R_j/R_i(x_k^f-x_i)+ x_j\\R_j/R_i(y_k^f-y_i)+ y_j\\R_j/R_i(z_k^f-z_i)+ z_j\end{array}\right),
	\end{aligned}
\end{equation}
 and  $(\bar u_k,\bar v_k)^\top=(\bar x^f_i/\bar z^f_i,\bar y^f_i/\bar z^f_i)^\top$. Similarly, we can get their depths $\bar \beta_i=\bar z_i$, first/second-order derivatives, and the image warps using the analytical relationship between the different pixels $(\bar u_k,\bar v_k)^\top$ and $( u_k, v_k)^\top$. The conformal scale between two balls is computed as $\lambda_{ij}=R_i/R_j$.

Based on these analytical equations, 100 matched features are generated for each ball. All the obtained parameters are introduced to the rotational invariance property as shown in~\eqref{eq_861}. As an example, the (1,~1)-th elements at both sides of \eqref{eq_861} corresponding to 100 features and two balls are presented in \figurename~\ref{fig:graph81}. In some references~\cite{5,10}, the assumption of the infinitesimal planarity is used, which means that only the first-order terms are considered. Hence, as a comparison, connection~\eqref{eq5} ignoring the second-order terms is also introduced to the invariance property~\eqref{eq_861}. The similar results are also shown in \figurename~\ref{fig:graph82}. We can find that, when second-order terms in connections are considered, the simulation results completely satisfy the rotational invariance shown in Theorem~\ref{t2}. The simulation results only considering the first-order terms only roughly follow the rotational invariance in Theorem~\ref{t2}. We define an index for results using only first-order terms:
\begin{equation}\label{E18x}
	\begin{aligned}
		Index=\sum_{k=1}^{100}\sum_{i=1}^{3}\sum_{j=1}^{3}\frac{1}{900}\frac{|\Theta^l_{ij}(k)-\Theta^r_{ij}(k)|}{\text{max}_{k}~\Theta^l_{ij}(k)-\text{min}_{k}~\Theta^l_{ij}(k)},
	\end{aligned}
\end{equation}
where $\Theta^l_{ij}(k)$ and $\Theta^r_{ij}(k)$ respectively mean the $(i,j)$-th element of the left-hand and right-hand sides of \eqref{eq_861} for the $k$-th feature; $\text{max}_{k}$ and $\text{min}_{k}$ mean the maximal and minimal values for all the features.  Using this definition. we test multiple simulated balls (1 ball with 10 matched balls) with different centers and radii. For the case with first-order terms only, the indexes of different balls are respectively $6.82\%,~7.03\%,~9.69\%,~8.07\%,~9.89\%,~11.6\%,~7.70\%,$ $~6.01\%,~8.98\%,$ and $12.20\%$. For the case using both first-order and second-order terms, all the indexes are equal to 0\% consistently, which validates the correctness of Theorem~\ref{t2}.


\begin{figure}[!htb]
	\minipage{\columnwidth}
	\centering
	\subfloat[\footnotesize First and second-order terms. The result shows that the left-hand side is exactly the same as the right-hand side, when both the first and second-order terms are used.]{\includegraphics[width=0.8\columnwidth]{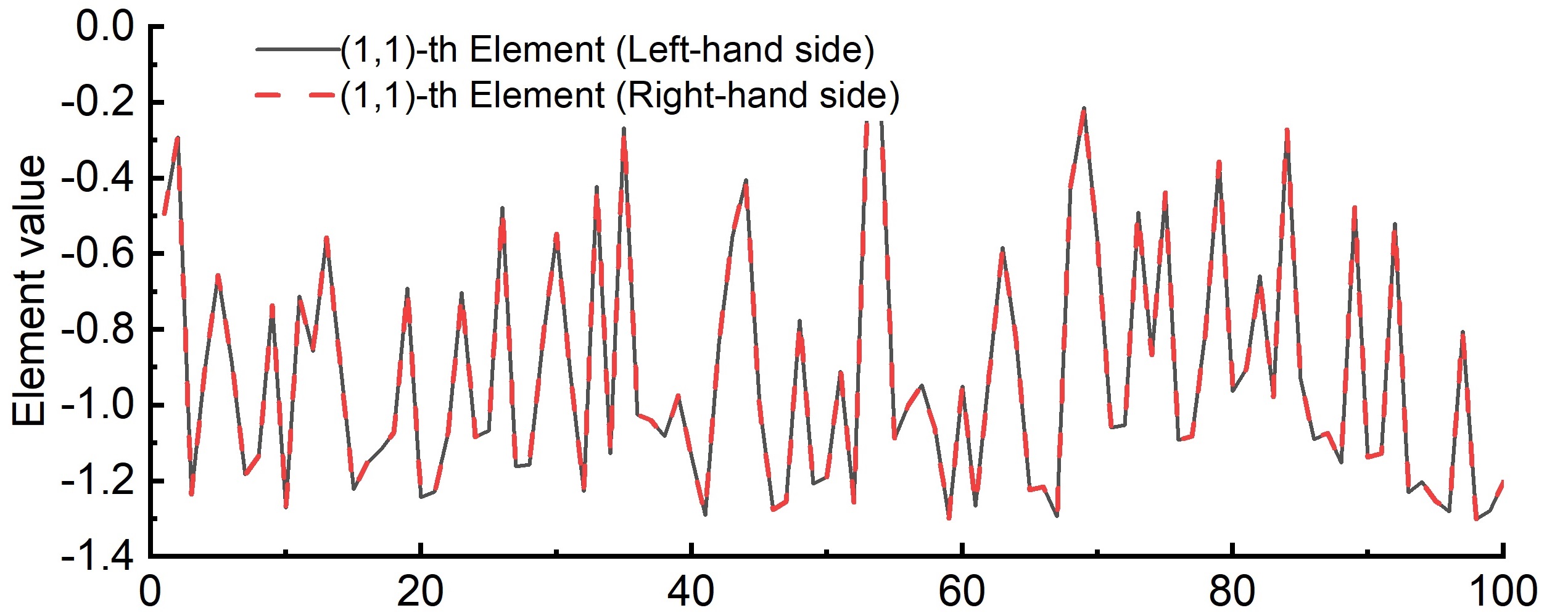}\label{fig:graph81}}
	\\
	\subfloat[\footnotesize{First-order terms only. The result shows that the left-hand side is approximately equal to the right-hand side, when only the first terms are used.}]{\includegraphics[width=0.8\columnwidth]{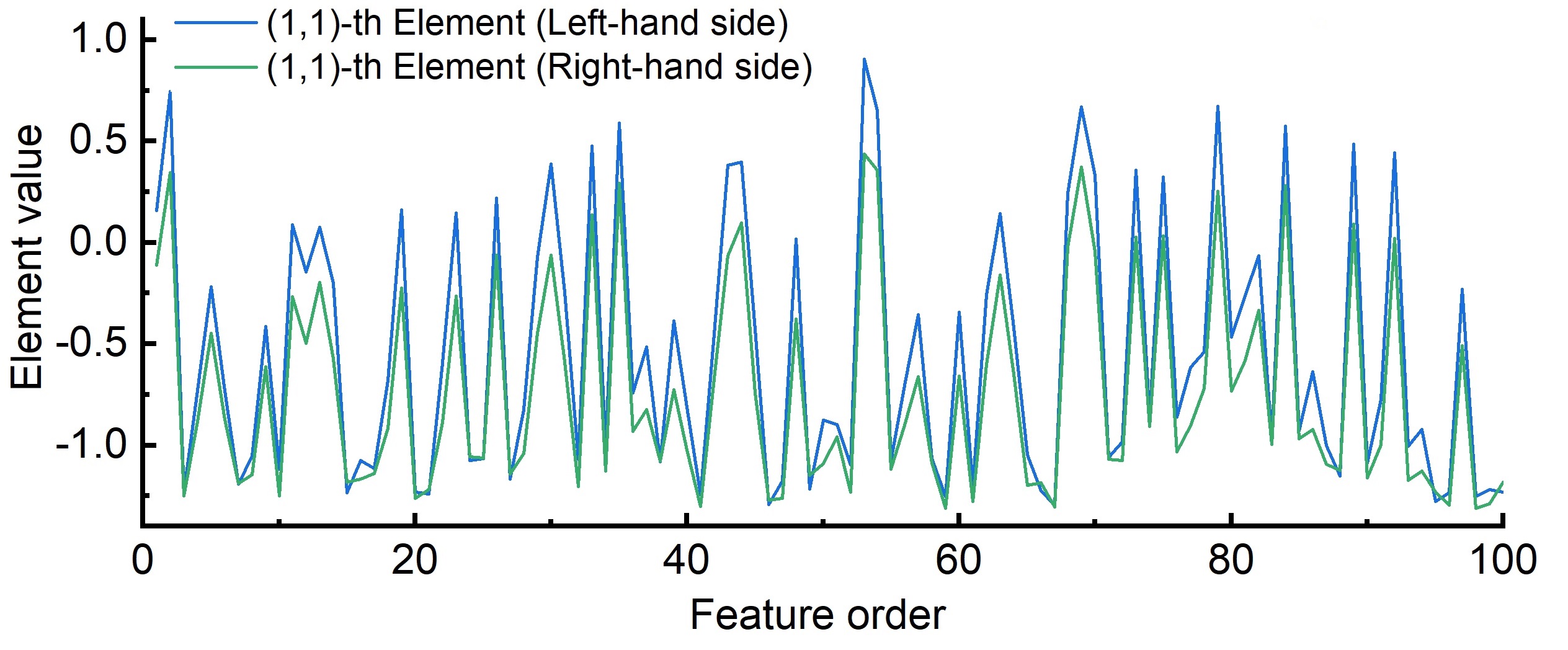}\label{fig:graph82}}
	\caption{{The (1, 1)-th element of the first equation of~\eqref{eq_861} shows the invariance property. }}
	\label{figurelabe1x}
	\endminipage
\end{figure}

\subsubsection{Synthetic Datasets}
We simulate a synthetic conformal dataset
with 7 deforming scenes and 100 features using the perspective projection model to test the performance of our method. This conformal dataset is built based on 7 ball surfaces with different radii and central coordinates. For this dataset, because the calibrated features beyond the vision range $\mathcal{R}_v$\footnote{Because we only sample the random data in a given range, if the vision range $\mathcal{R}_v$ and the deformation range $\mathcal{R}_d$ of the feature measurements are outside the given range, our trained network cannot offer accurate depth estimation results.}, which is set as $[-0.5,0.5]$ to $u$-axis and $[-0.5,0.5]$ to $v$-axis for our  network, we do not use the deep learning network in Section~\ref{s62}. As an example, the reconstructed result for two deforming shapes using our method is shown in \figurename~\ref{fig:graph5and8}.  The mean \%3D errors are respectively \textbf{0.8052}\%(\textbf{Ours}), 4.0454\%(\textbf{Diff}), 1.0643\%(\textbf{infP}), 1.4486\% (\textbf{Ch17}),  1.4545\%(\textbf{SDP17}), and 0.9792\%(\textbf{Go20}). The average shape errors are respectively  $\textbf{6.0822}^{\circ}$(\textbf{Ours}), $24.9474^{\circ}$(\textbf{Diff}), $7.2102^{\circ}$(\textbf{infP}), $13.0426^{\circ}$(\textbf{Ch17}), $12.9205^{\circ}$(\textbf{SDP17}), and $7.4458^{\circ}$(\textbf{Go20}). Based on the obtained results for these datasets, our method shows the best performance. 

\begin{figure}[!htb]
	\minipage{\columnwidth}
	\centering
	\hspace{.1cm}
	\subfloat[\footnotesize{Shape 2}]{\includegraphics[width=0.45\columnwidth,height=1.3in]{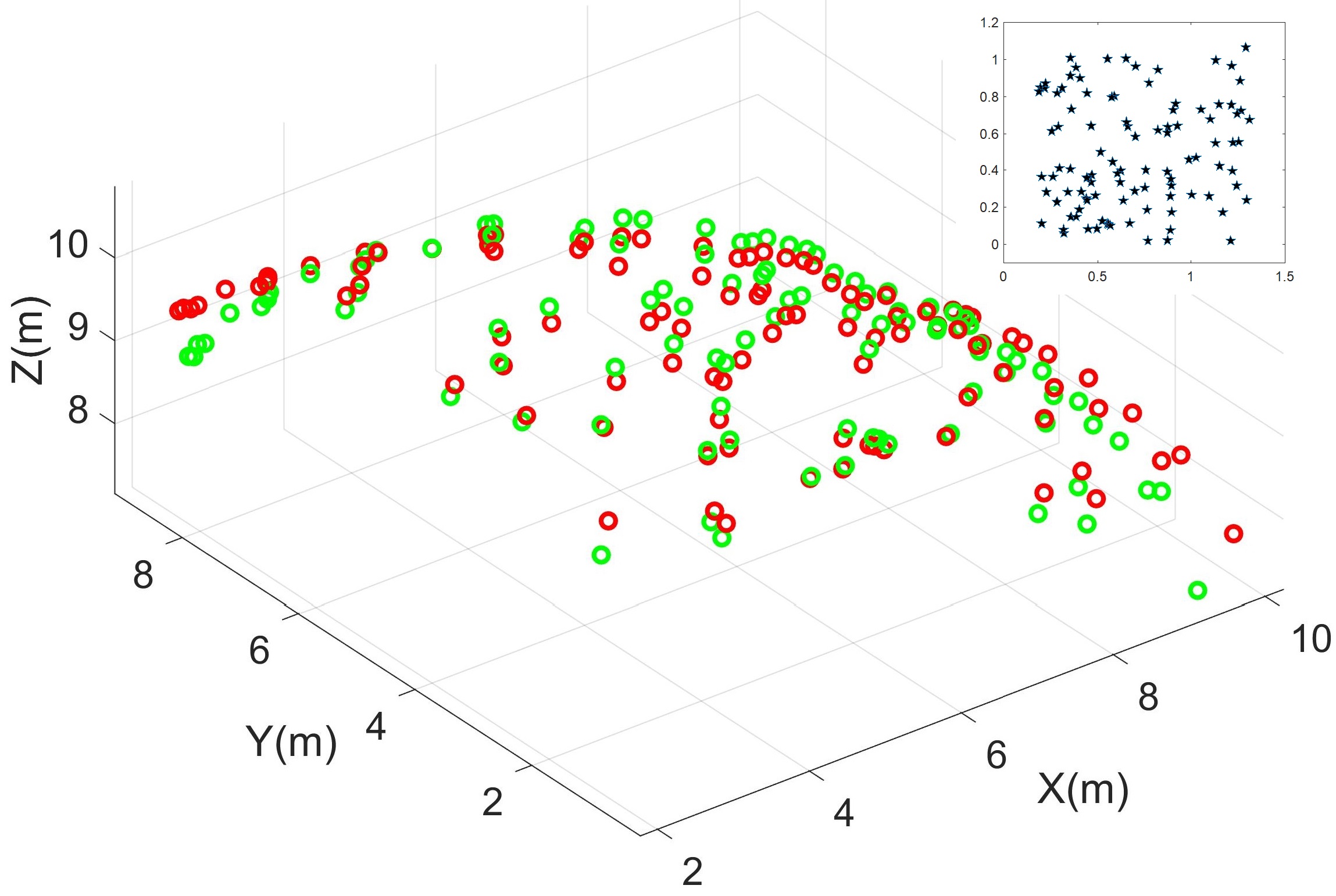}\label{fig:graph8}}
	\hspace{.1cm}
	\subfloat[\footnotesize{Shape 4}]{\includegraphics[width=0.45\columnwidth,height=1.3in]{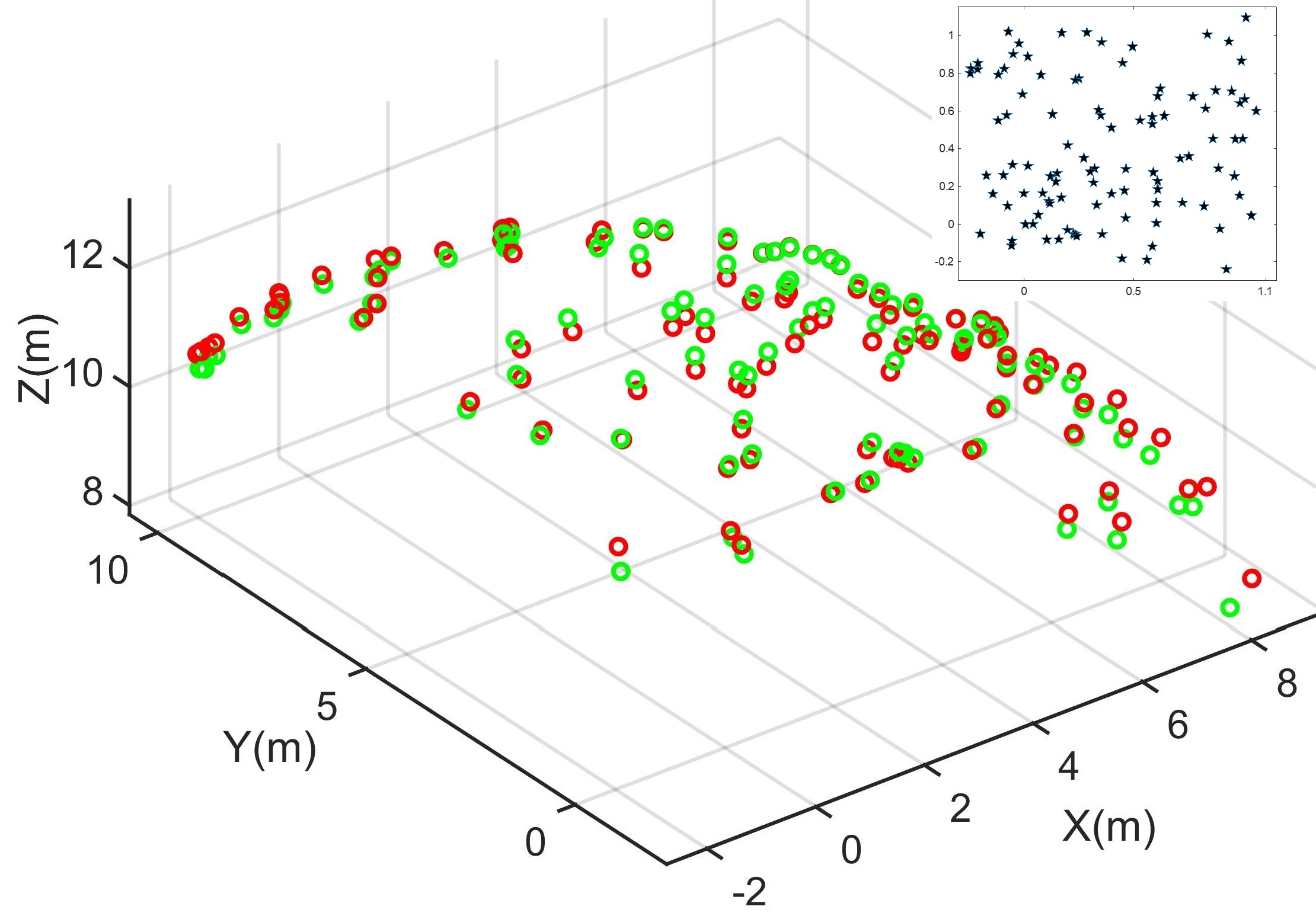}\label{fig:graph9}}
	\caption{{Ground truth (green) and reconstructed results (red) for the synthetic dataset. Right-up sub-graphs are 2D input scenes.}}
	\label{fig:graph5and8}
	\endminipage
\end{figure}

In order to show the robustness of our proposed method, the comparison results with the changing data are presented. Table~\ref{t1zz} shows the mean \%3D error (in \%) and the mean shape error (in $^\circ$) when 0-70\% point data are missing from every image, including the first image. A feature, which is missing in the first image but is included in others will be regarded as an appearing feature\footnote{The \textbf{Diff} and \textbf{infP} methods cannot deal with the case with appearing features, so we ignore these appearing features in the recovered shape and the error computation. Our method and the \textbf{Ch17} method can easily solve the problem with the appearing features, so these features remain.}. The final results in Table~\ref{t1zz} show the high accuracy and good robustness of our proposed method.

\begin{table}[!ht]
	\small
	\caption{{Comparison using missing and appearing data}}
	\label{t1zz}
	\begin{center}
		\begin{tabular}{|c|c|c|c|c|c|c|}
			\hline
			Missing&{\textbf{infP}$^*$}  & {\textbf{Ours}}&{\textbf{Diff}$^{**}$}  \\
			\cline{1-4}
			0\%&1.06\%-$7.21^\circ$&\textbf{0.81}\%-$\textbf{6.08}^\circ$&4.05\%-$24.95^\circ$
			\\
			10\%& 1.23\%-$8.62^\circ$& \textbf{1.10}\%-$\textbf{8.38}^\circ$&- 
			\\
			20\%& 1.47\%-$10.25^\circ$& \textbf{1.18}\%-$\textbf{9.78}^\circ$&- \\
			30\%&2.01\%-$15.39^\circ$& 1.74\%-$\textbf{13.59}^\circ$&- \\
			{40\%}&2.04\%-$15.97^\circ$& \textbf{1.66}\%-$\textbf{12.98}^\circ$&- \\
			{50\%}& 3.07\%-$20.53^\circ$&  \textbf{1.86}\%-$\textbf{13.21}^\circ$&-\\
			{{60\%}}& {3.48\%-$19.85^\circ$}&  {\textbf{1.87}\%-$\textbf{13.94}^\circ$}&{-}\\
			{{70\%}}& {4.05\%-$24.73^\circ$}&  {\textbf{2.39}\%-$\textbf{18.78}^\circ$}&{-}\\
			\hline
			Missing & {\textbf{Ch17}} & {\textbf{SDP17}}&\textbf{Go20}\\
			\cline{1-4}
			0\%& 1.45\%-$13.04^\circ$&1.45\%-$12.92^\circ$&0.98\%-$7.45^\circ$
			\\
			 10\%& 1.53\%-$14.05^\circ$&1.49\%-$14.01^\circ$&1.15\%-$8.44^\circ$
			\\
			20\%& 1.48 \%-$14.01^\circ$&1.51\%-$14.21^\circ$&1.28\%-$10.63^\circ$\\
			30\%& \textbf{1.63}\%-$15.24^\circ$&1.71\%-$15.52^\circ$&1.74\%-$13.67^\circ$\\
			{40\%}& 2.05\%-$16.49^\circ$&2.09\%-$16.37^\circ$&1.77\%-$14.02^\circ$\\
			{50\%} &10.76\%-$34.13^\circ$&8.97\%-$31.78^\circ$&1.99\%-$14.67^\circ$\\
			{{60\%}}& {14.06\%-$58.55^\circ$}&  {13.22\%-$49.31^\circ$}&{1.97\%-$14.98^\circ$}\\
			{{70\%}}& {16.93\%-$75.82^\circ$}&  {16.11\%-$68.99^\circ$}&{{2.44}\%-$19.04^\circ$}\\
			\hline
		\end{tabular}
		\\
	\end{center}
		\footnotesize{$^*$ Due to limitations in the Gloptipoly 3 toolbox~\cite{41xxxxxx} used in \textbf{infP}, errors arise when the synthetic dataset contains one or more features observed fewer than twice, causing the open-source code to fail in producing valid results. This issue does not affect the other compared methods. In cases with a high missing rate (e.g., 70\%), such errors occur frequently, and no valid output is returned. Therefore, we select a specific dataset in which each feature is observed at least twice to get the readable results and ensure a fair comparison.}\\
		\footnotesize{$^{**}$\textbf{Diff} is not very stable for the synthetic dataset with  missing features and no correct result is obtained using the open source code.}
\end{table}

We also simulate a challenging synthetic dataset with 10 deforming scenes and 600 features under both isometric and conformal deformations using perspective projection, as clearly indicated in \figurename~\ref{fig:graph8ass}. Here we only consider the sparse result. Therefore, the dense depth recovery network is not deployed. The reconstruct results for two deforming shapes using our method are shown in \figurename~\ref{fig:graph5and8ass}. Their mean \%3D errors are  \textbf{1.1241}\%(\textbf{Ours}), 2.9611\%(\textbf{Diff}),  1.2130\%(\textbf{infP}),  1.1891\% (\textbf{Ch17}),   1.3871\%(\textbf{SDP17}), and  1.2303\%(\textbf{Go20}). The average shape errors are respectively  $\textbf{9.6309}^{\circ}$(\textbf{Ours}), $16.2807^{\circ}$(\textbf{Diff}), $11.5302^{\circ}$(\textbf{infP}), $9.7148^{\circ}$(\textbf{Ch17}), $14.7723^{\circ}$(\textbf{SDP17}), and $ 11.9024^{\circ}$ (\textbf{Go20}). Results confirm our method’s superiority.

\begin{figure}[!htb]
	\minipage{\columnwidth}
	\centering
	\hspace{.1cm}
	\subfloat[\footnotesize{Shape 1}]{\includegraphics[width=0.45\columnwidth,height=1.3in]{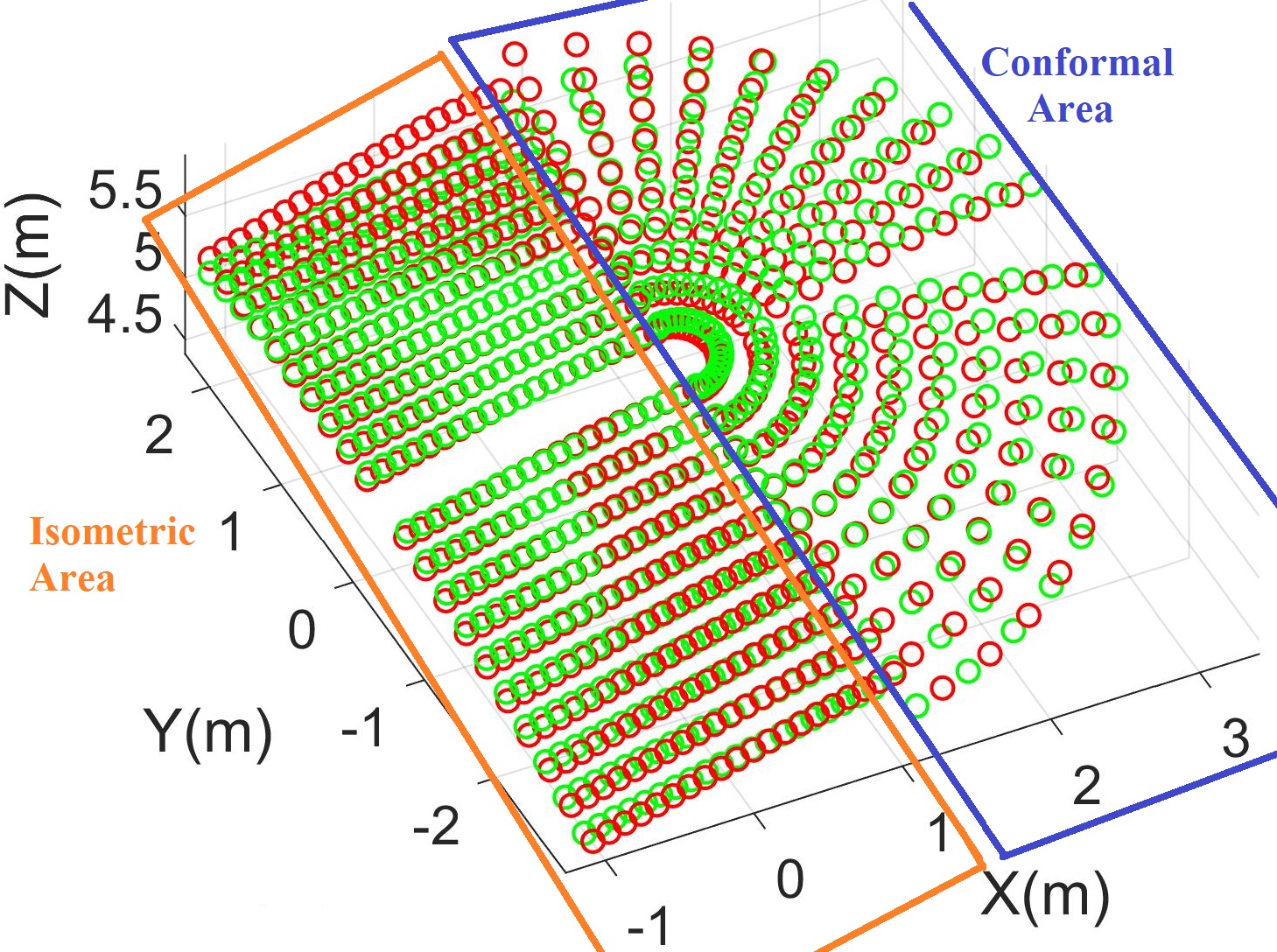}\label{fig:graph8ass}}
	\hspace{.1cm}
	\subfloat[\footnotesize{Shape 5}]{\includegraphics[width=0.45\columnwidth,height=1.3in]{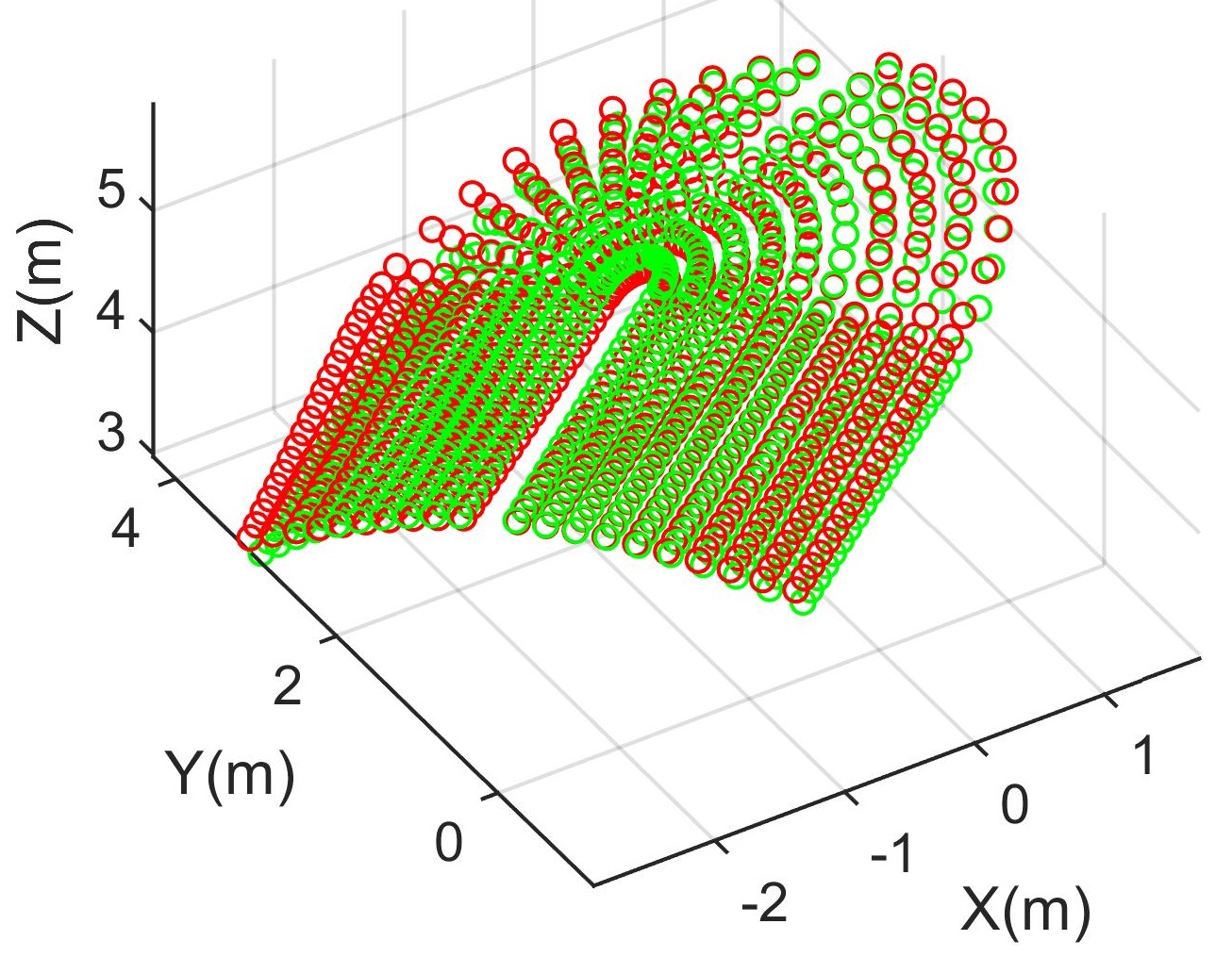}\label{fig:graph9ass}}
	\caption{{NRSfM ground truth (green) data and reconstructed results (red) for the synthetic dataset 2.}}
	\label{fig:graph5and8ass}
	\endminipage
\end{figure}

\subsection{Isometric Real Datasets}\label{62}
In this part, we perform experiments using both short-term and long-term real datasets to compare with the other five methods. The T-shirt dataset with 10 images and the Flag dataset with 30 images are the short-term datasets and the long-term datasets include  Rug and KinectPaper datasets.

\textbf{T-shirt dataset}: The T-shirt dataset~\cite{10} consists of 85 manually set
features correspondences across 10 different images of a T-shirt deforming isometrically. The features belong to the vision range $\mathcal{R}_v$ and the dataset includes the corresponding RGB images, so we implement our proposed method with the network in Section \ref{s62}. Using the full datasets and all the features, we recover the shape, generate, and texture the dense point cloud, as shown in \figurename~\ref{figurelabe2}.
\begin{figure}[!ht]
	\minipage{\columnwidth}
	\begin{center}
		\includegraphics[width=0.95\linewidth]{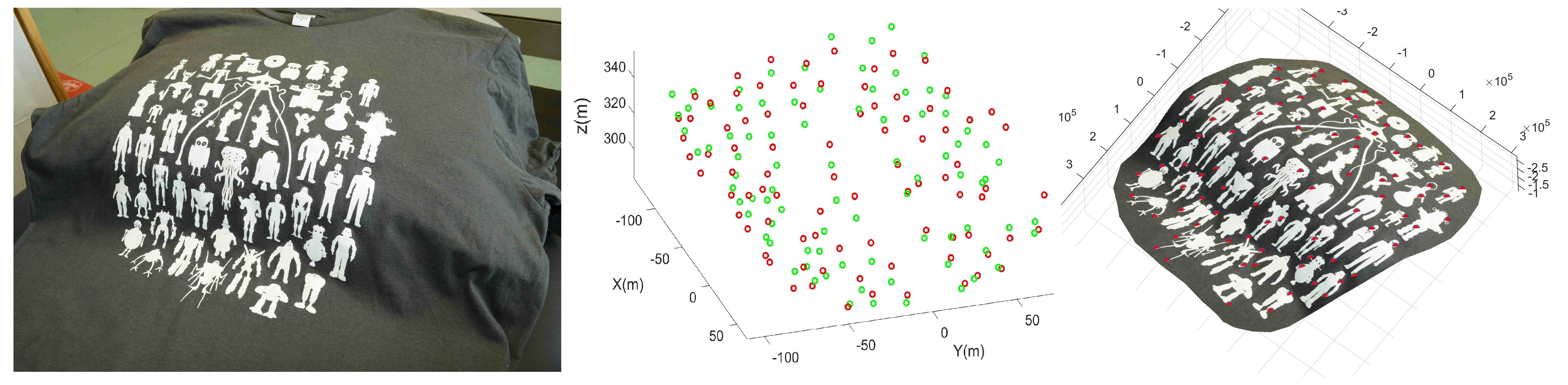}
	\end{center}
	\caption{The reconstructed dense point cloud with texture.}
	\label{figurelabe2}
	\endminipage\\
	\hspace{.1cm}
	\minipage{\columnwidth}
	\centering
	\subfloat[\footnotesize{$\%$ 3D error}]{\includegraphics[width=0.48\columnwidth,height=1.4in]{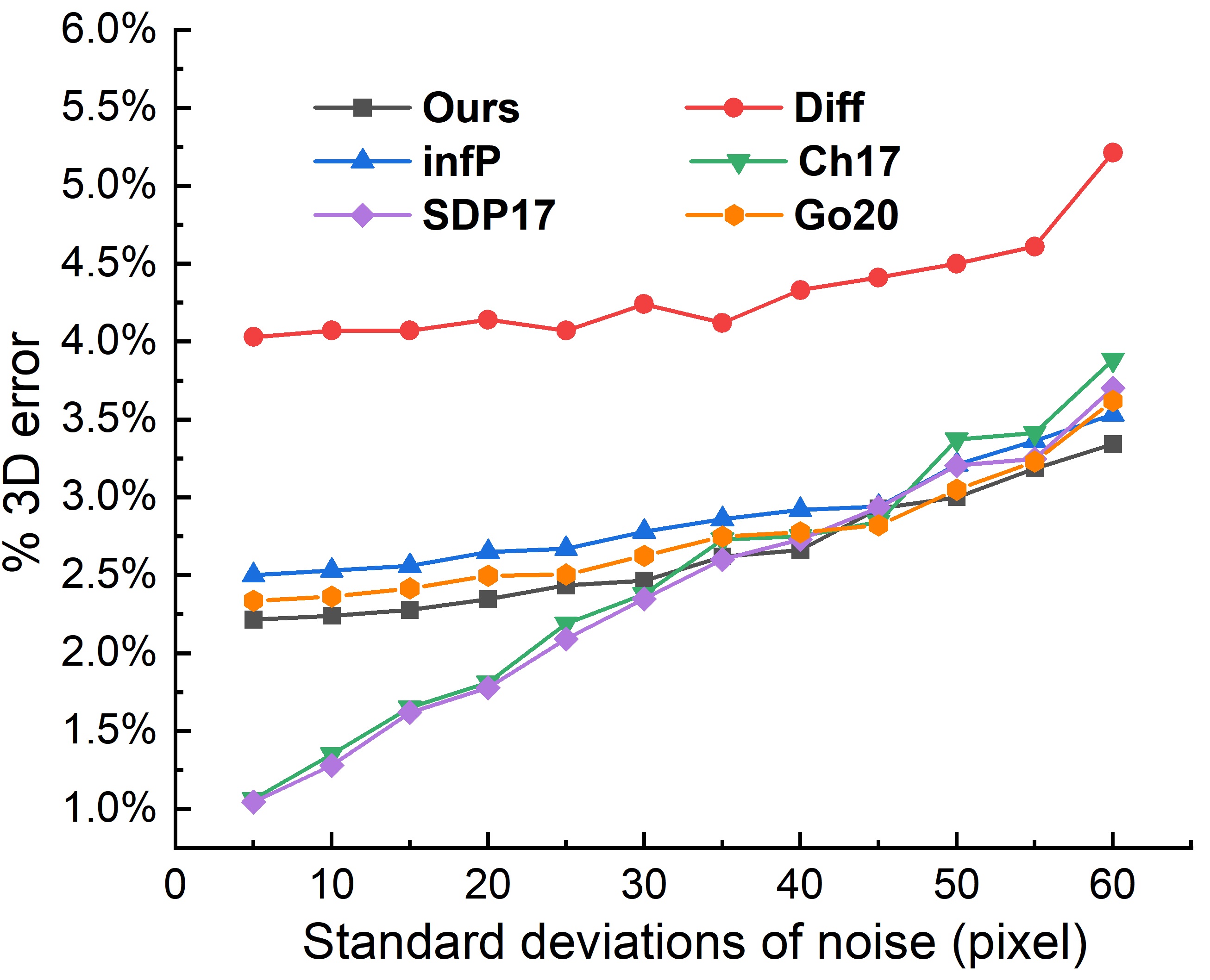}\label{fig:graph10}}
	\hspace{.1cm}
	\subfloat[\footnotesize{Shape error (degrees)}]{\includegraphics[width=0.48\columnwidth,height=1.4in]{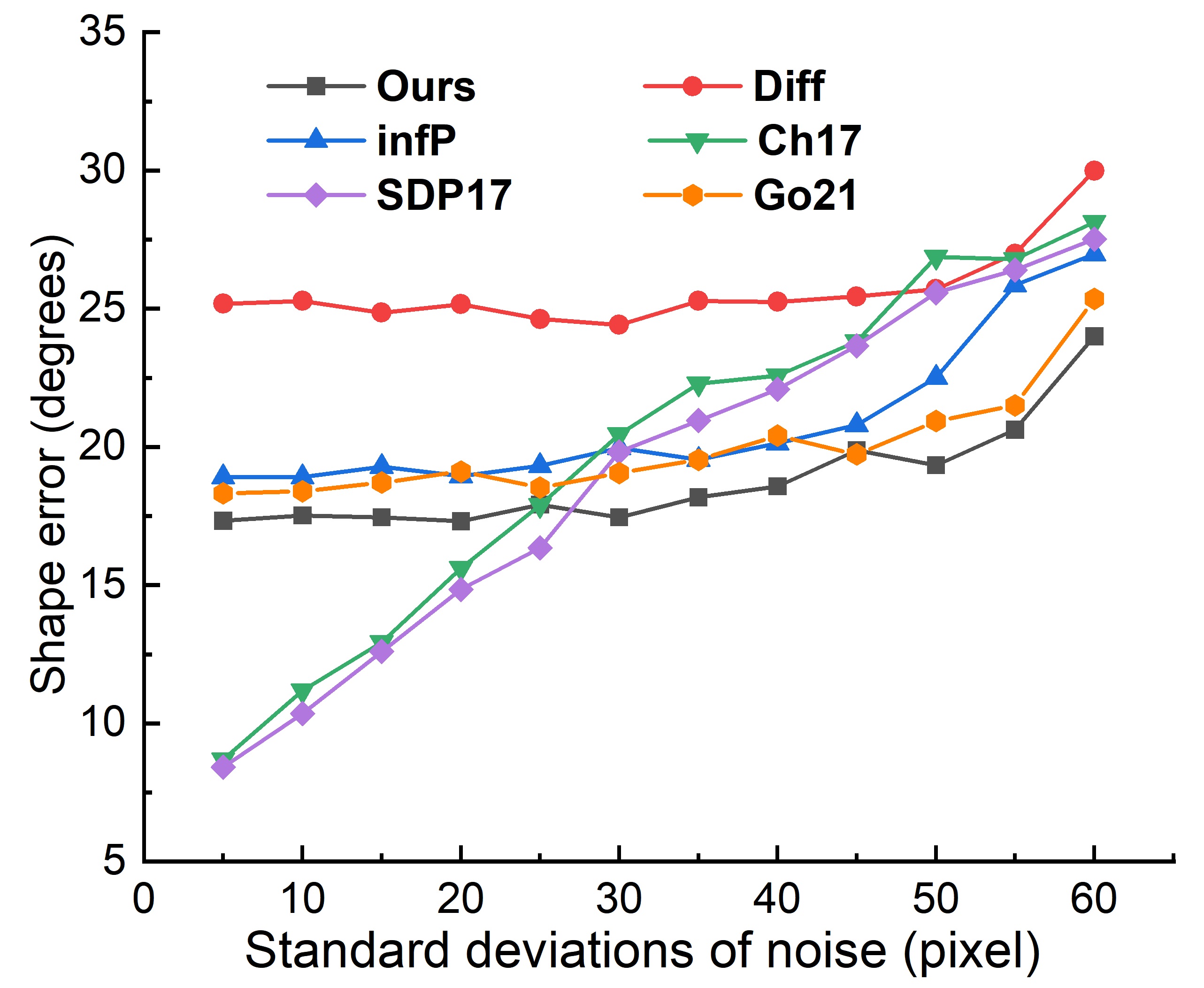}\label{fig:graph11}}
	\caption{Comparison results of the mean shape error and $\%$ 3D error using the T-shirt dataset with different levels of noise. }
	\label{fig:graph10and11}
	\endminipage
\end{figure}

In T-shirt dataset, we also show the accuracy of the obtained sparse point cloud using the integrating way. The mean \%3D errors of these methods are respectively 2.2401\%(\textbf{Ours}), 4.0234\%(\textbf{Diff}), 2.4932\%(\textbf{infP}), 0.8789\%(\textbf{Ch17}),  \textbf{0.7979}\% (\textbf{SDP17}), and 2.3296\%(\textbf{Go20}). The average shape errors are respectively  $17.5239^{\circ}$(\textbf{Ours}), $25.0743^{\circ}$(\textbf{Diff}), $7.2102^{\circ}$(\textbf{infP}), $6.9904^{\circ}$(\textbf{Ch17}), $\textbf{6.4235}^{\circ}$(\textbf{SDP17}), and $18.3721^{\circ}$(\textbf{Go20}). For this dataset, the \textbf{SDP17} and \textbf{Ch17} methods show the best performance. Our method, \textbf{Go20}, and  \textbf{infP} show a stable and fair performance.  The  \textbf{Diff} method does not work well on this dataset. T-shirt dataset is almost isometric (a special case of conformal). \textbf{Ch17} and \textbf{SDP17} use only point correspondences whereas \textbf{Ours} also uses their first and second-order derivatives. T-shirt has only 85 manually clicked point correspondences on 10 images, thus the computation of the first- and (especially) second-order derivatives is inaccurate. There is practically no noise on this dataset. This is why  \textbf{Ch17} and \textbf{SDP17} perform better than \textbf{Ours} on T-shirt.

When 50\% point data are missing from every image, including the first image, the mean \%3D errors (in \%) of these methods are respectively 3.05\% (\textbf{infP}), \textbf{2.77}\% (\textbf{Ours}), 3.82\% (\textbf{Diff}), 7.02\% (\textbf{Ch17}), 6.68\% (\textbf{SDP17}), and 2.78\% (\textbf{Go20}). The mean shape error are $21.05^\circ$ (\textbf{infP}), $\textbf{20.95}^\circ$ (\textbf{Ours}), $25.67^\circ$ (\textbf{Diff}), $34.20^\circ$ (\textbf{Ch17}), $30.12^\circ$ (\textbf{SDP17}), and ${20.98}^\circ$ (\textbf{Go20}). Our results indicate that the \textbf{Ch17} and \textbf{SDP17} methods are not reliable in the presence of many missing points, which is very common in  cases of the tracking lost in the real applications of NRSfM, such as deformable SLAM~\cite{6}.

We add Gaussian noises, of which the standard deviations range from 5 pixels to 50 pixels, to this dataset. The image size in this dataset is $4800\times3200$, so these levels of noise are reasonable. These six methods are compared with each other. The changes of the mean \%3D error (in \%) and the mean shape error (in $^\circ$) are shown in \figurename~\ref{fig:graph10and11}. 
We can find that our method is robust to noisy data and the \textbf{SDP17} and \textbf{Ch17} methods are sensitive to the noises.

\textbf{Flag dataset}: Based on a real flag and a virtual perspective camera, the Flag dataset~\cite{41} is a self-synthetic
data with 250 tracking features. We randomly select 30 images from the whole dataset with 450 frames and add Gaussian noises (with a standard deviation of 4 pixels) to each feature. The comparison results of the \% 3D error are shown in \figurename~\ref{fig:graph12and13}. The examples of the reconstructed results are shown in \figurename~\ref{fig:graph6and7_addx}.

\begin{figure}[!htb]
	\minipage{\columnwidth}
	\begin{center}
		\includegraphics[width=0.70\linewidth,height=1.8in]{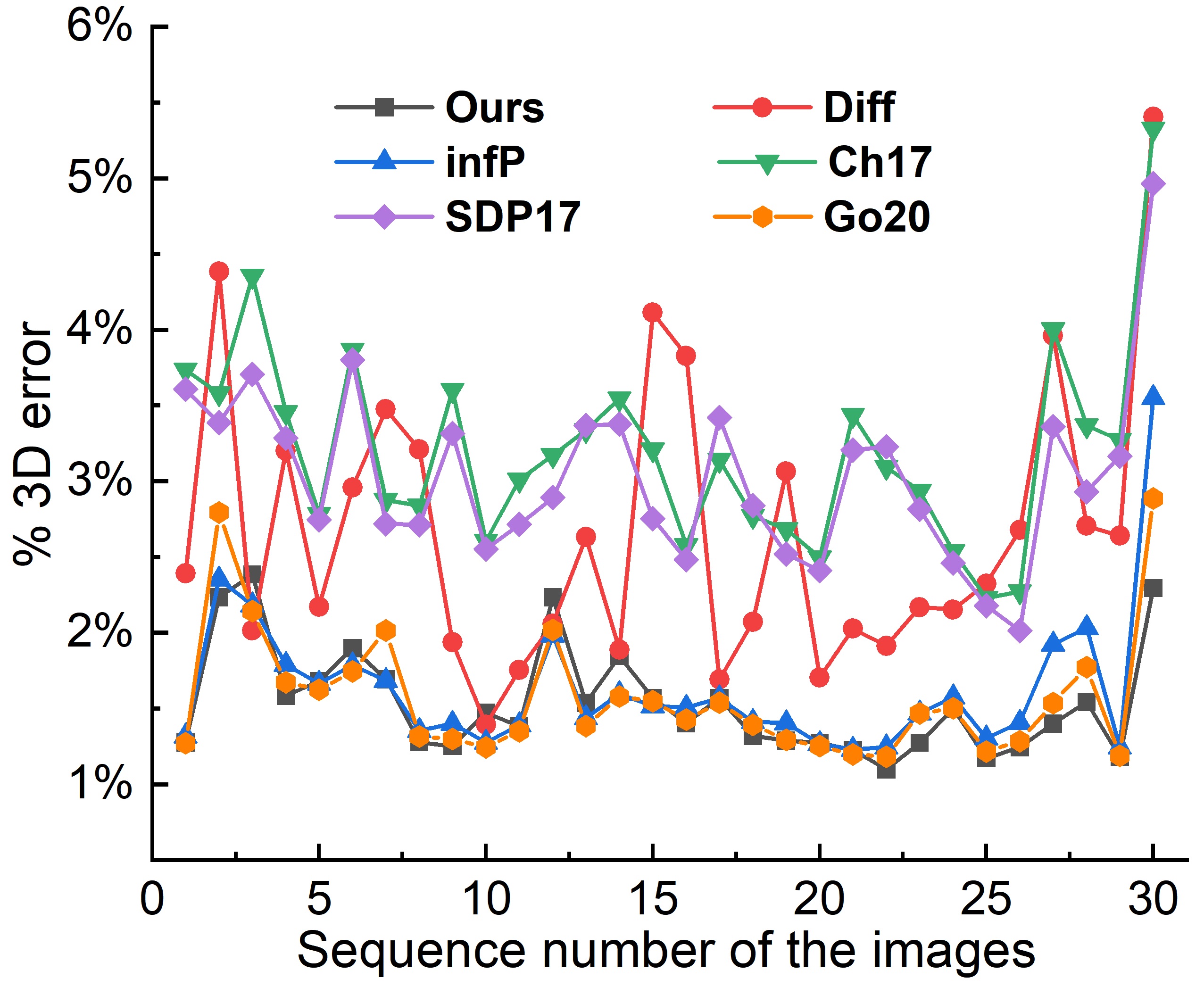}
	\end{center}
	\caption{Comparison results of the mean $\%$ 3D error using the Flag dataset.}
	\label{fig:graph12and13}
	\endminipage\\
	\minipage{\columnwidth}
	\begin{center}
	\includegraphics[width=\linewidth]{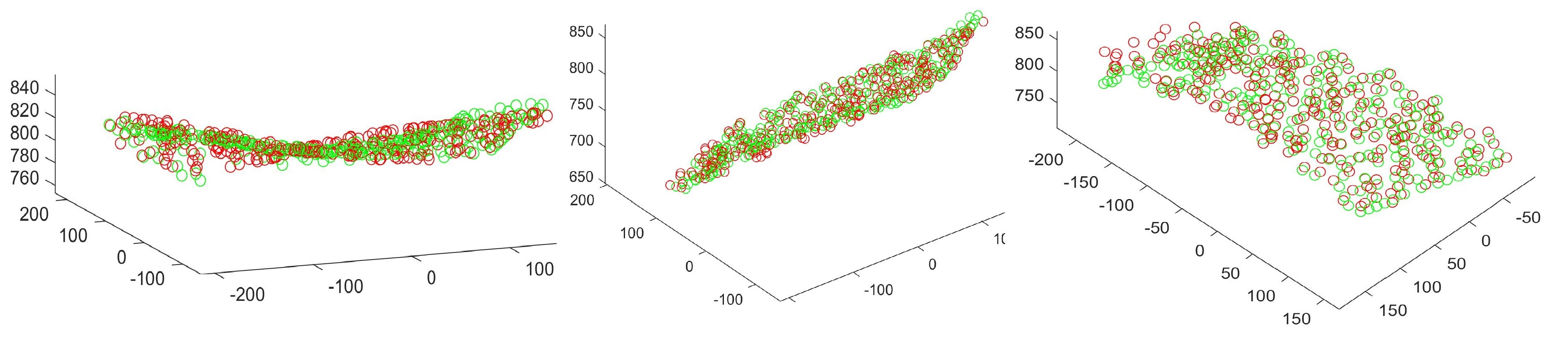}
	\end{center}
	\caption{The ground truth (green) and reconstructed results (red) of the Flag dataset.}
	\label{fig:graph6and7_addx}
	\endminipage\\
\end{figure}


The mean  \% 3D errors are respectively \textbf{1.5364}\% (\textbf{Ours}),  2.6631\% (\textbf{Diff}), 1.6289\% (\textbf{infP}), 3.0996\% (\textbf{Ch17}), 3.0309\% (\textbf{SDP17}), and 1.5710\% (\textbf{Go20}). These results show that our method has the best performance on
this dataset. \textbf{Ch17} and \textbf{SDP17} have the largest errors as compared to other methods. 

The following two long-term datasets, including Rug and KinectPaper datasets, are used to further verify the performance of our method in the isometric cases. Because the SOCP and SDP formulations are solved using the inner-point method, \textbf{Ch17} and \textbf{SDP17} take more than 3 hours for 60 images with 300 features. Therefore, we split the long-term sequences to sets of 30 images (Rug dataset) and 15 images (KinectPaper dataset), and then evaluated these two methods. The other methods, including  \textbf{infP}, \textbf{Diff}, \textbf{Go20}, and \textbf{Ours}, are based on the full datasets and they can be solved within 1 hour given the image warps. The analysis of the computational time will be further discussed in Section~\ref{s651}.

\textbf{Rug dataset}: {The Rug dataset~\cite{6} is a public isometric data with 159 images and 300 features showing the deformed rug from different views. This is a challenging dataset with outliers and poor given correspondences, due to the low frame-rate of the recorded sequences. With no missing features, the comparison results of the \% 3D error are shown in \figurename~\ref{fig:graph6and7_add}.} Reconstructed results by \textbf{Ours} are illustrated in \figurename~\ref{fig:graph6and7_add1x}.

\begin{figure}[!htb]
	\minipage{\columnwidth}
	\begin{center}
		\includegraphics[width=0.85\linewidth]{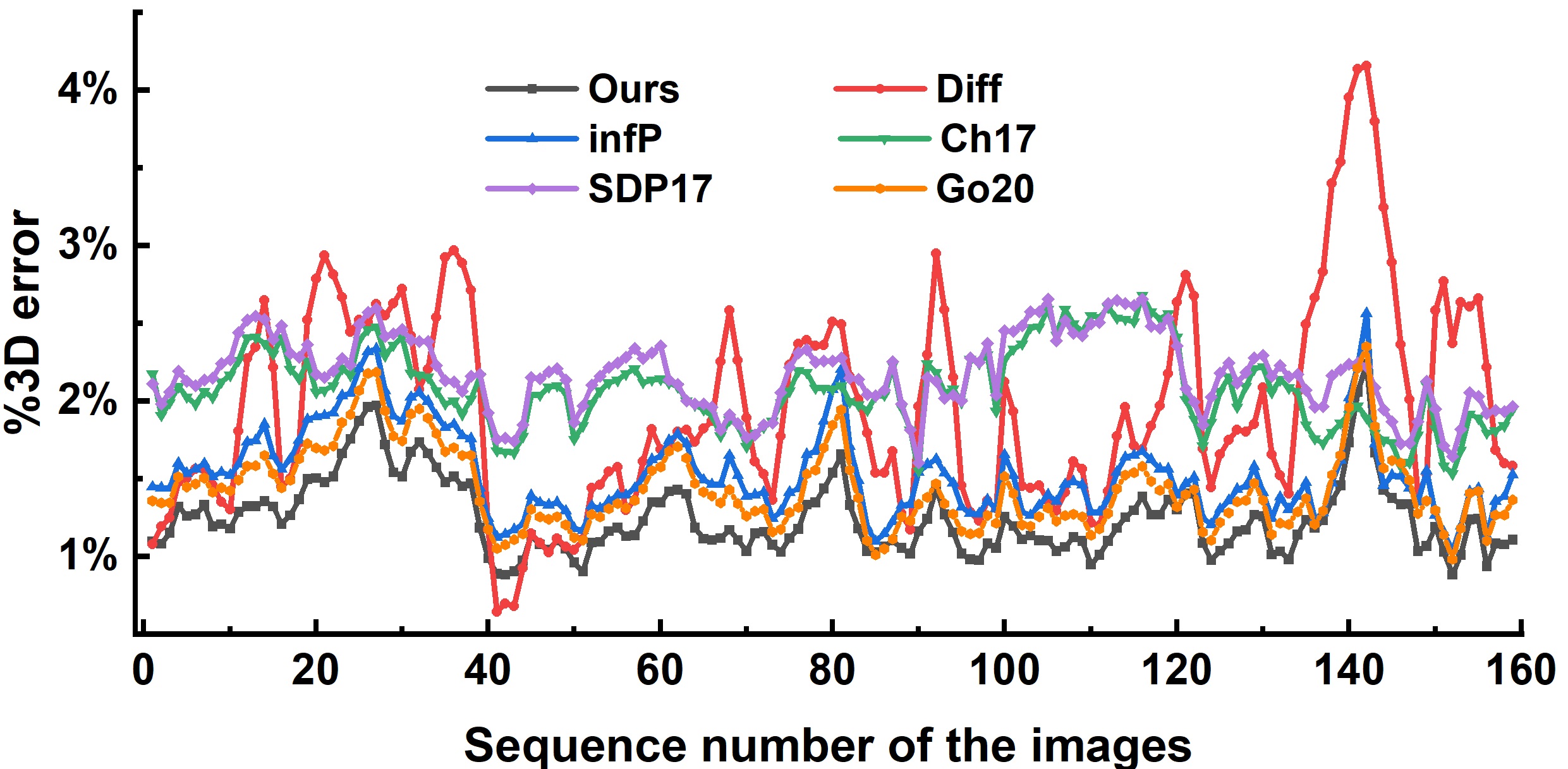}
	\end{center}
	\caption{The comparison results of the Rug dataset.}
	\label{fig:graph6and7_add}
	\endminipage\\
	\minipage{\columnwidth}
	\begin{center}
		\includegraphics[width=0.9\linewidth]{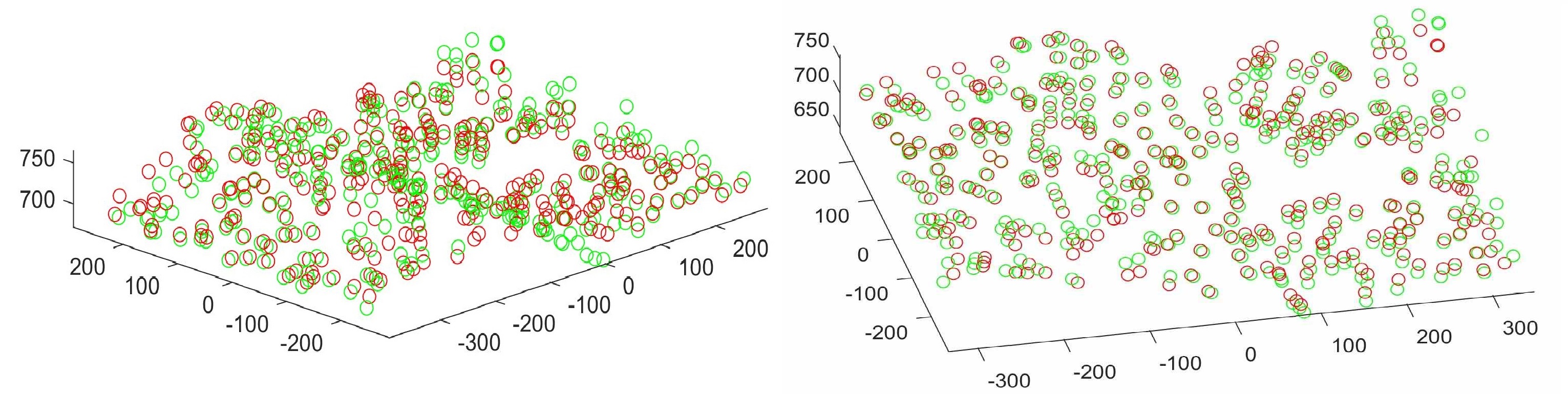}
	\end{center}
	\caption{The ground truth (green) and reconstructed results (red) of the Rug dataset.}
	\label{fig:graph6and7_add1x}
	\endminipage
\end{figure}

The mean \% 3D errors of all the images are respectively
\textbf{1.2511}\% (\textbf{Ours}),  1.970\% (\textbf{Diff}), 1.5286\% (\textbf{infP}), 2.082\% (\textbf{Ch17}), 2.1748\% (\textbf{SDP17}),  and 1.4296\% (\textbf{Go20}). It is clearly visible that our method performs better than the other methods.

\textbf{KinectPaper dataset} \cite{6}: It is a long sequence with 191 images and 1503 features. It shows the isometric deformations of a paper. Similar to the Rug dataset, this sequence also contains outliers. Six methods are applied to this dataset in order to compare their performance. Based on all the visible features, the comparison results of the \% 3D error are shown in \figurename~\ref{fig:graph6and7_add3}. The ground truth and the reconstructed results for the KinectPaper dataset are presented in \figurename~\ref{fig:graph6and7_add3x}.

\begin{figure}[!htb]
	\minipage{\columnwidth}
	\begin{center}
		\includegraphics[width=0.85\linewidth]{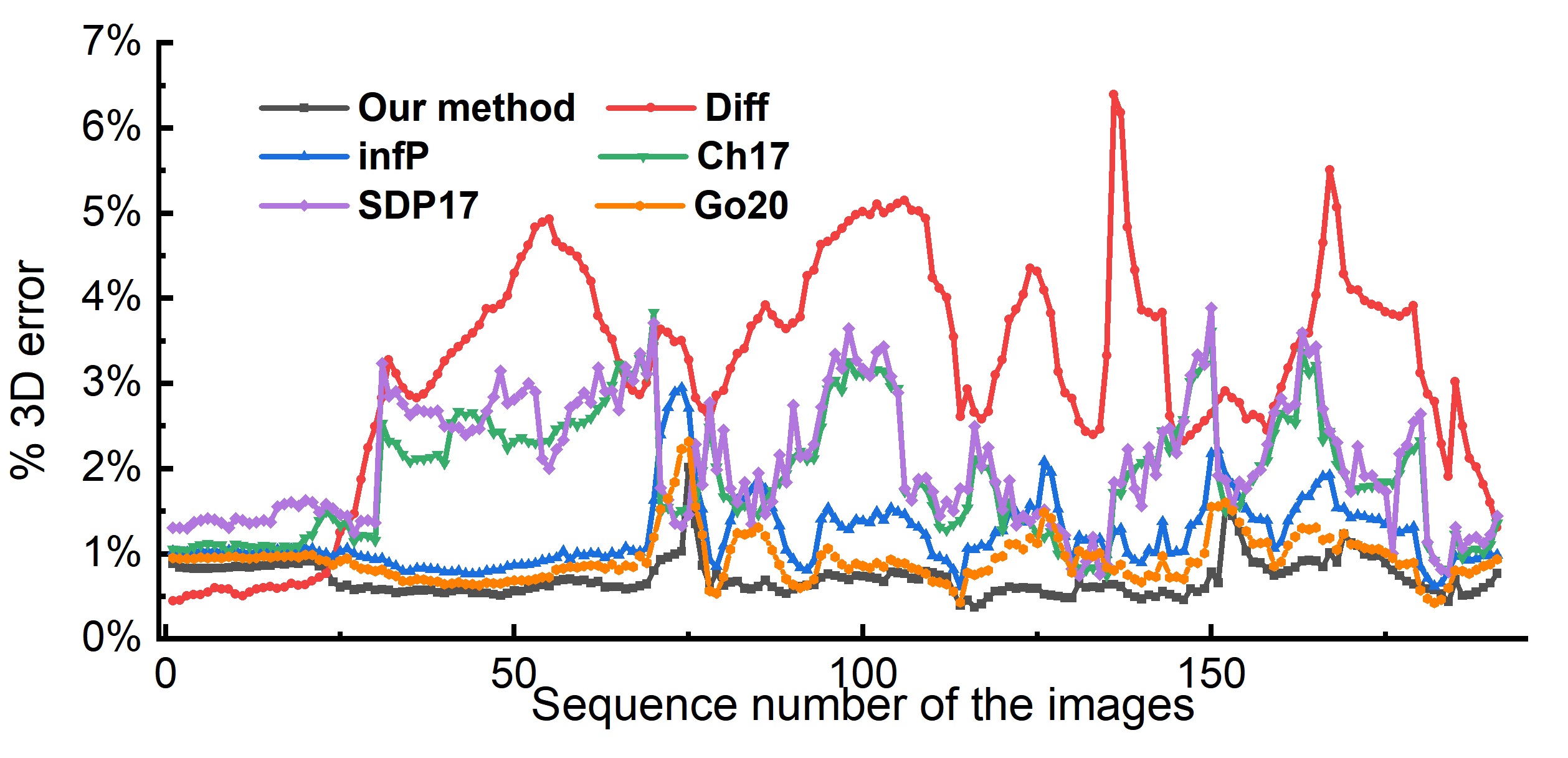}
	\end{center}
	\caption{The comparison results of the KinectPaper dataset.}
	\label{fig:graph6and7_add3}
	\endminipage\\
	\minipage{\columnwidth}
	\begin{center}
		\includegraphics[width=0.9\linewidth]{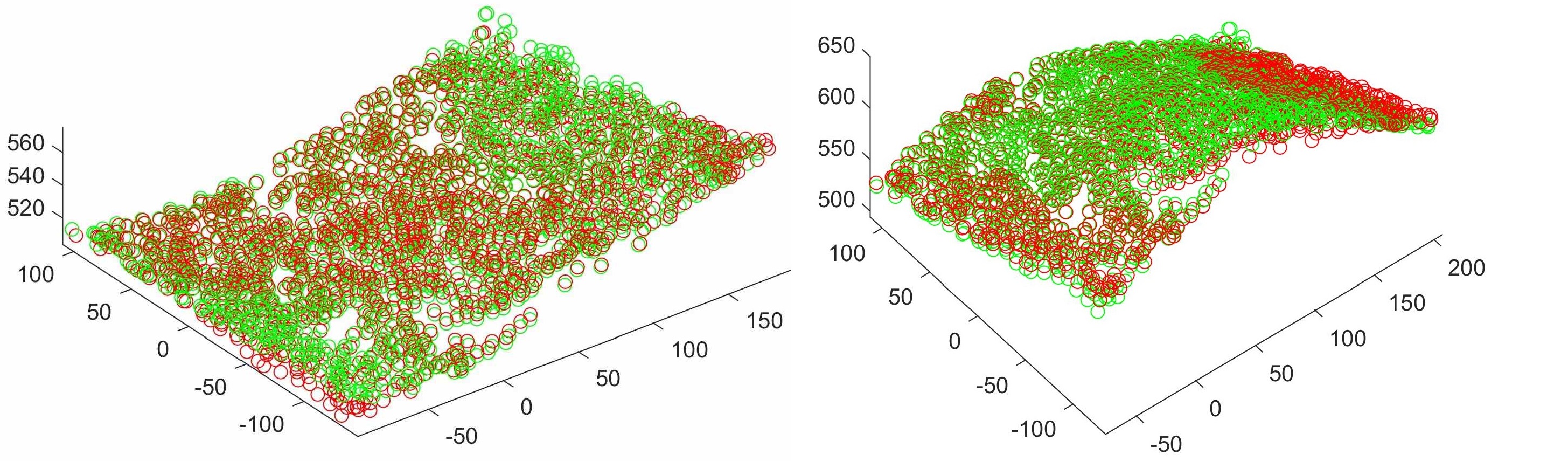}
	\end{center}
	\caption{The reconstructed results of the KinectPaper dataset.}
	\label{fig:graph6and7_add3x}
	\endminipage
\end{figure}
The mean \% 3D errors are respectively
\textbf{0.7011}\% (\textbf{Ours}),   3.1577\% (\textbf{Diff}), 1.2106\% (\textbf{infP}), 1.9164\% (\textbf{Ch17}), 2.0804\% (\textbf{SDP17}), and 0.9294\% (\textbf{Go20}). These results demonstrate that our method evidently outperforms the others.
\subsection{Real Datasets with More Generic Deformations}
	A key advantage of our method is its applicability to both isometric and conformal deformations. We therefore test it on the NRSfM Challenge Dataset~\cite{42} and a self-collected conformal dataset.

\textbf{NRSfM Challenge Dataset}. We apply our method in the NRSfM Challenge Dataset, which includes 5 image sequences called \emph{Articulated Joints}, \emph{Balloon Deflation}, \emph{Paper Bending}, \emph{Rubber Stretching}, and \emph{Paper Tearing}. They represent 5 different classical non-rigid deformations: articulated (piecewise-rigid), balloon (conformal), paper bending (isometric), rubber (elastic), and  paper being torn. Because of the virtual camera, the vision range is commonly outside the vision range $\mathcal{R}_v$ and only the integrating method is applied in this part. The image features are obtained using 6 different camera motions with orthogonal and perspective projections and the ground-truth for one frame is provided for each sequence. Our method is based on the assumption that the deforming surface is a manifold, for the stereoscopic datasets (like balloon), the back-side features will break the one-by-one mapping (image embedding) between our image and the visible surface. Therefore, we test the balloon and two Paper datasets without using missing (invisible) features and the Articulated and Stretching datasets using full features. The comparison results, including score, ground truth (green), and reconstructed shapes (red), among \textbf{Ours}, \textbf{Ch17}, \textbf{Pa21-R}~\cite{43}, \textbf{Pa21-S}~\cite{43}, \textbf{An17}~\cite{44}, \textbf{Lee16}~\cite{45}, \textbf{Diff}, \textbf{Closed}~\cite{31}, and \textbf{Best} (\textbf{Best} means the one that does best as reported in the benchmark statistics provided on the website~\cite{31}) are presented in Table~\ref{NRSfM}. The reconstructed results are shown in \figurename~\ref{fig:graph6and7_add2}.
\begin{table}[!ht]
	\small
	\caption{Results on the NRSfM challenge datasets.}
	\label{NRSfM}
	\begin{center}
		\begin{tabular}{|c|c|c|c|c|c|}
			\hline 
			Camera&\multicolumn{5}{c|}{NRSfM Challenge Dataset (Perspective projection)}\\
			\hline
			Method&Articulated&Balloon&Bending&Rubber &Tearing\\
			\hline
   			Case &full&missing&missing&full &missing\\
			\hline
			\textbf{Ch17}&91.6&53.5&63.8&62.5& 51.9
			\\
			\hline
			\textbf{An17}&65.1&55.2& 64.7 &48.1  &  50.8
			\\
			\hline
			\textbf{Lee16}&105.5&-& -&70.3&-\\
			\hline
			 \textbf{Pa21-R}&25.1&41.6&40.5&20.6&28.7\\
			\hline
			\textbf{Pa21-S}&26.0&40.7&40.4&20.6 &27.8\\
			\hline
			\textbf{Diff}&\textbf{21.3}& 37.8& 39.0 &30.3 &23.8\\
			\hline
			\textbf{Best}&40.7& 35.7& 39.0 &30.3 &24.9\\
   			\hline
			\textbf{Closed}&21.8& 38.1& 38.2 &17.2 &23.1\\
			\hline 
			\textbf{Ours}&22.8& \textbf{17.9}& \textbf{21.8} &\textbf{16.9} &\textbf{15.3}\\
			\hline 
			Camera&\multicolumn{5}{c|}{NRSfM Challenge Dataset (Orthographic projection)}\\
			\hline 
			\textbf{Ch17}&{88.7}& 52.6 & 65.0 & 66.3 & 57.2 \\
			\hline 
			\textbf{An17}&58.1&46.4 & 50.3 & 38.9 & 38.4\\
			\hline 
			\textbf{Lee16}&105.3&-&-&69.2 &-\\
			\hline 
			\textbf{Pa21-R}&21.8& 27.2&32.3 &23.1 & 20.7 \\
			\hline 
			\textbf{Pa21-S}&22.0& 27.3& 32.2& 33.0&  21.0\\
			\hline 
			\textbf{Diff}&18.7&33.6 & 34.0&\textbf{17.1} & 18.8\\
			\hline 
			\textbf{Best}&35.5&33.8 & 37.0& 22.9& 18.3\\
            \hline
			\textbf{Closed}&20.1& 26.8& 32.1 &17.2 &18.3\\
			\hline 
			\textbf{Ours}&\textbf{13.2}& \textbf{17.1}& \textbf{21.3} &19.3 &\textbf{17.0}\\
			\hline
		\end{tabular}\\
	\end{center}
\footnotesize{$-$	indicates that method failed to return a result due to missing data. Most of these compared results are obtained from Table 5 in Reference \cite{31} considering full or missing cases.}
\end{table}

\begin{figure}[!ht]
	\begin{center}
		\includegraphics[width=0.85\linewidth]{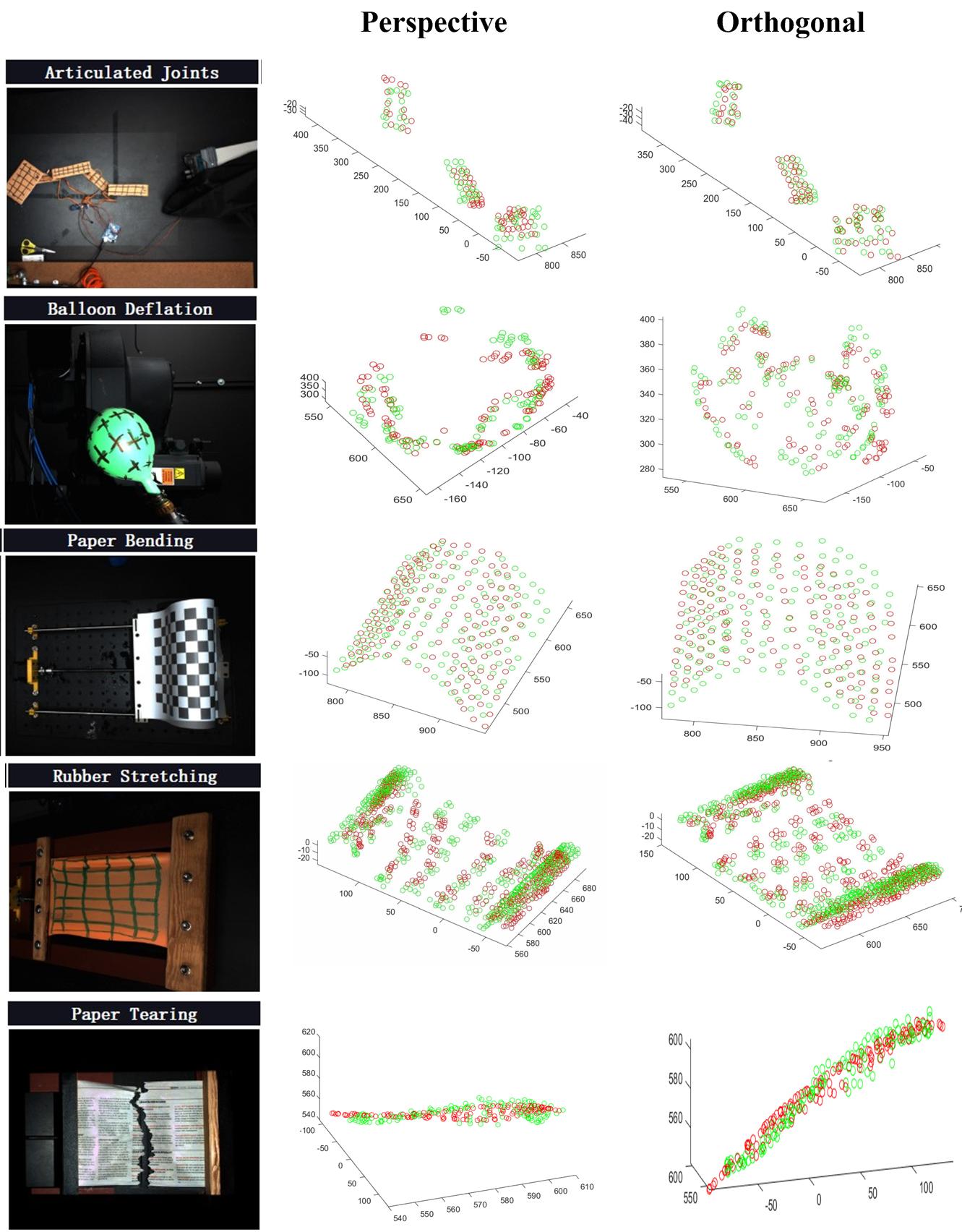}
	\end{center}
	\caption{NRSfM challenge dataset, ground truth (green), and reconstructed results (red) using \textbf{Ours}.}
	\label{fig:graph6and7_add2}
\end{figure}

Results show our method performs robustly and competitively with SOTA in both perspective and orthographic cases on datasets with generic deformations.

\textbf{Self-Collected Conformal Dataset}. Existing real-world datasets capturing conformal deformation are limited. To evaluate the performance of our framework under elastic deformation, we collected a custom dataset using the Intel RealSense D435i depth camera. This dataset consists of 13 images of a deforming balloon. To enhance texture and facilitate feature detection, visual markers were affixed to the balloon’s surface. Sparse 2D features were extracted using the SIFT, and feature correspondences were established across frames. The corresponding depth values from the depth images were used as ground truth\footnote{The D435i depth camera offers about 5–40 mm accuracy within 1–2 meter range, limiting ground truth precision.}. In this dataset, the mean 3D reconstruction errors for each method are as follows: \textbf{2.2634}\% (\textbf{Ours}),  3.2822\% (\textbf{infP}), N/A (\textbf{Ch17}), 2.9578\% (\textbf{Pa21-S}), and 3.4588\% (\textbf{Go20}) and the corresponding average shape errors are: $\textbf{8.0563}^{\circ}$ (\textbf{Ours}), $ 26.3880^{\circ}$ (\textbf{infP}), N/A (\textbf{Ch17}), $23.4654^{\circ}$ (\textbf{Pa21-S}), and $37.1828^{\circ}$ (\textbf{Go20})\footnote{\textbf{Ch17} method encounters memory limitations on a desktop with 32 GB RAM, and thus its results are unavailable. The result for \textbf{infP} is obtained by excluding features that are observed fewer than two times.}. Our method achieves the best performance on this dataset. Fig.~\ref{fig:graphdsss} shows the reconstructed results for the first image alongside the ground truth.

\begin{figure}[!htb]
	\minipage{\columnwidth}
	\centering
	\subfloat[\footnotesize{{The first frame in self-collected conformal dataset with detected SIFT features.}}]{\includegraphics[width=0.45\columnwidth,height=1.3in]{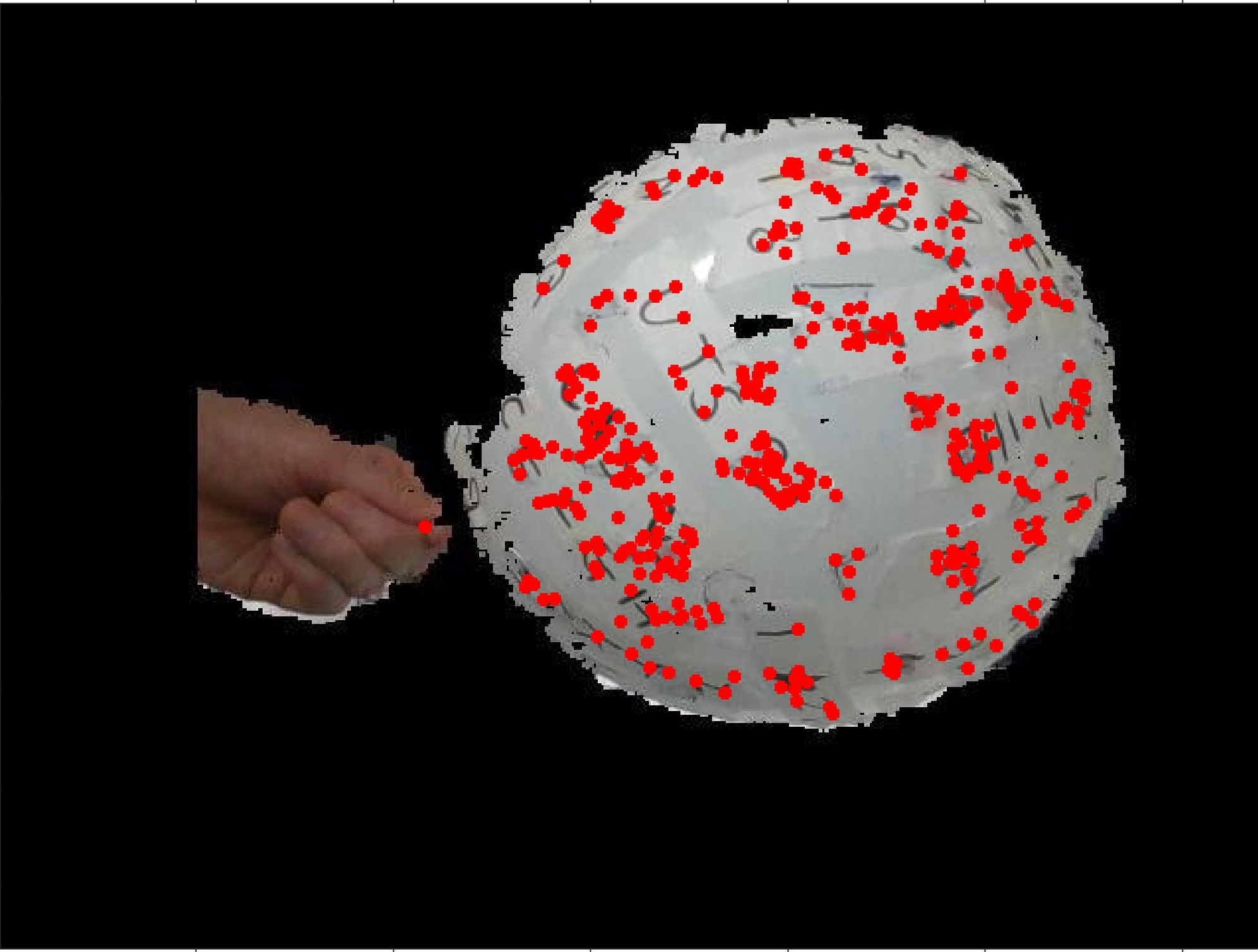}\label{fig:grapha}}
	\hspace{.1cm}
	\subfloat[\footnotesize{{Ground truth (green) and reconstructed result (red) using \textbf{Ours}.}}]{\includegraphics[width=0.48\columnwidth,height=1.4in]{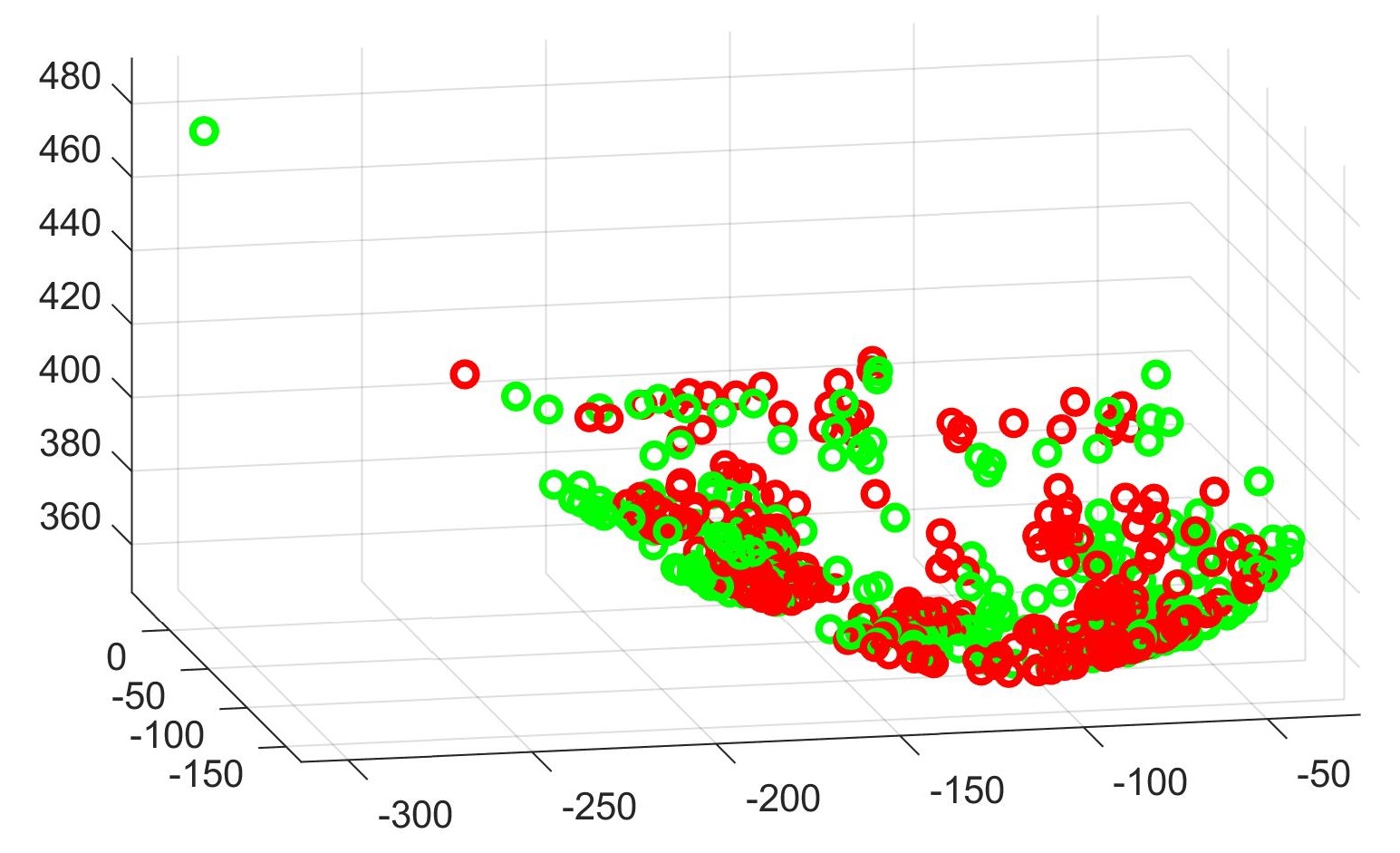}\label{fig:graphb}}\\
	\subfloat[\footnotesize{{Textured result using network}}]{\includegraphics[width=0.8\columnwidth,height=1.3in]{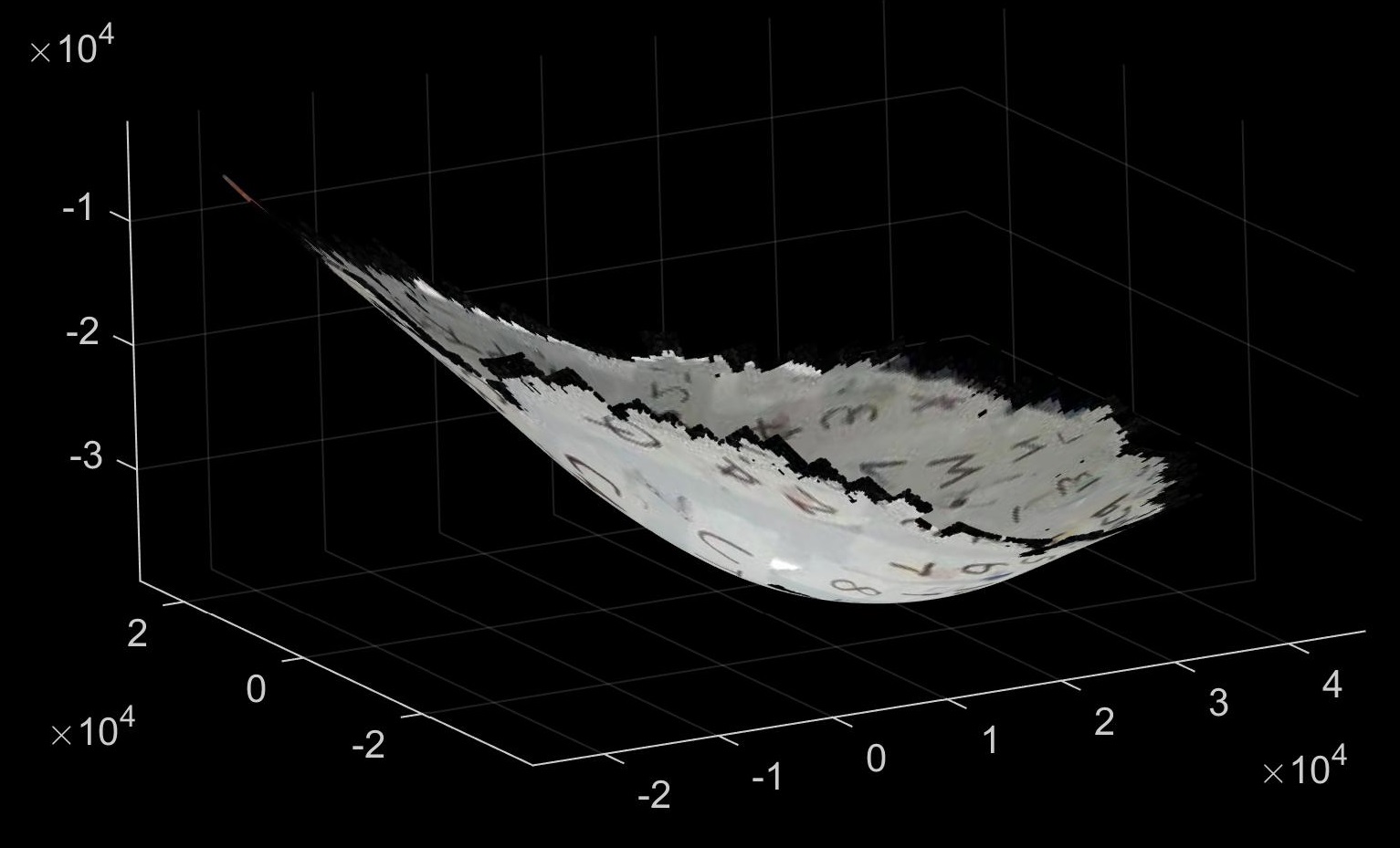}\label{fig:graphf}}
	\caption{{Reconstructed results of the self-collected dataset.}}
	\label{fig:graphdsss}
	\endminipage\\
\end{figure}

\subsection{Deformable SLAM Datasets}\label{s64}
The deformable SLAM is currently considered as an important application of the NRSfM method. In~\cite{8}, the NRSfM method is used to build the sparse local map fusing the tracking camera and the dense map combining with the template mesh. In this part, in order to further compare with other methods, the datasets, including Mandala dataset~\cite{8} and Hamlyn dataset~\cite{46,47}, are introduced using our proposed method. We use the front-end of the Def-SLAM~\cite{8} with ORB descriptor~\cite{48} to pick out and track the 2D features in the deformable environment. The ground truth depths of both datasets are obtained from the corresponding stereo sequence.

\textbf{Mandala dataset} The Mandala dataset is composed of 5 sequences ($640\times480$ pixels at 30 fps) with exploratory trajectories observing a textured kerchief deforming near-isometrically. Using the Mandala3 dataset, we generate an NRSfM dataset based on the 9-th keyframe consisting of 9 images with 411 tracking normalized features. Some features are missing during the deforming process and the example image with tracking features is shown in \figurename~\ref{fig:grapha}. The corresponding reconstructed results for the 9-th keyframe are shown in \figurename~\ref{fig:graphb} and \figurename~\ref{fig:graphf}. This dataset is very challenging with much missing data and unreliable data association. The mean \% 3D errors are respectively \textbf{1.2768}\% (\textbf{Ours}), 1.7442\% (\textbf{Diff}), 1.7389\% (\textbf{infP}), and 1.4955\% (\textbf{Go20}). The average shape errors are $\textbf{16.14972}^\circ$ (\textbf{Ours}),  $20.0113^\circ$ (\textbf{Diff}), $19.2663^\circ$ (\textbf{infP}), and $ 17.2663^\circ$ (\textbf{Go20}). The \textbf{Ch17} and \textbf{SDP17} methods generate very poor results with more than 10\% 3D error and $30^\circ$ shape errors and we can consider them as a failure to deal with this dataset. These experimental results demonstrate that our method evidently outperforms the others for this dataset.

\begin{figure}[!htb]
	\minipage{\columnwidth}
	\centering
	\subfloat[\footnotesize{The 9-th keyframe in Mandala3 dataset. }]{\includegraphics[width=0.38\columnwidth,height=1.2in]{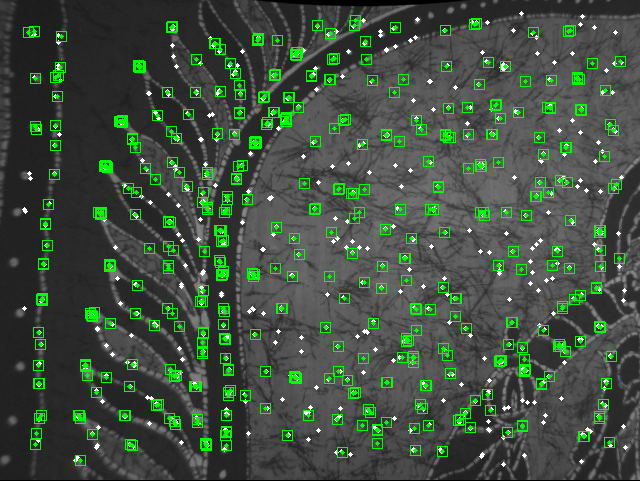}\label{fig:grapha}}
	\hspace{.1cm}
	\subfloat[\footnotesize{Ground truth (green) and reconstructed result (red) using \textbf{Ours}.}]{\includegraphics[width=0.58\columnwidth,height=1.2in]{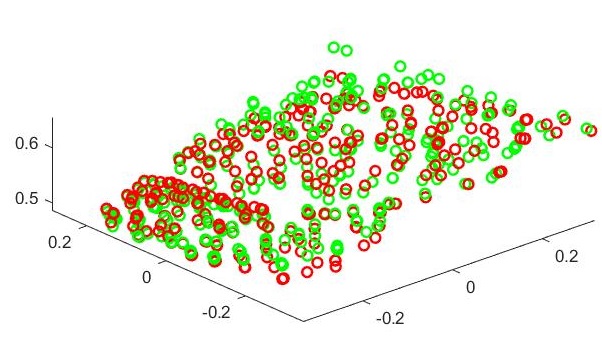}\label{fig:graphb}}\\
	\subfloat[\footnotesize{Dense result with texture using network}]{\includegraphics[width=0.8\columnwidth,height=1.3in]{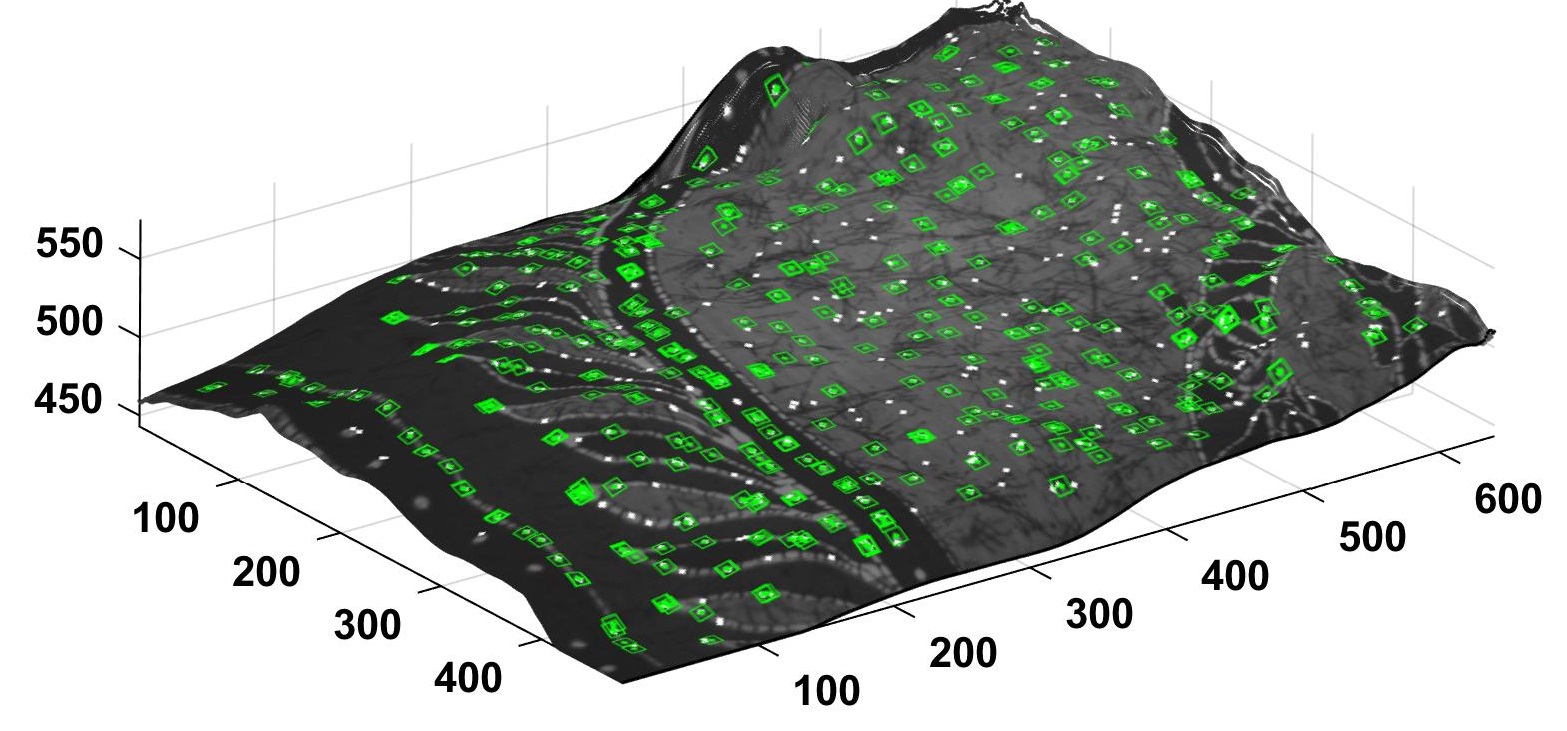}\label{fig:graphf}}
	\caption{Reconstructed results of Mandala3 dataset.}
	\label{fig:graphd}
	\endminipage\\
\end{figure}

\textbf{Hamlyn dataset} The Hamlyn dataset includes six different sequences and we pick out the SeqHeart sequence to evaluate our algorithm~\cite{8}. This sequence shows a non-rigid beating heart observed by a fixed camera. Using the deformable SLAM front-end, we generate an NRSfM dataset consisting of 7 images with 325 tracking normalized features. A lot of features are missing during the deforming process and the 4-th keyframe image with tracking features is shown in \figurename~\ref{fig:grapha}. This dataset is also very challenging with a lot of missing data and unreliable data association. The mean \% 3D errors are respectively 2.5431\% (\textbf{Ours}),  2.5415\% (\textbf{infP}), 3.4061\% (\textbf{Ch17}), 3.1733\% (\textbf{SDPP7}),  and 2.5322\% (\textbf{Go20}). The average shape errors are $27.9817^\circ$ (\textbf{Ours}), $27.9921^\circ$ (\textbf{infP}), $30.4817^\circ$ (\textbf{Ch17}), $29.4817^\circ$ (\textbf{SDP17}), and $28.1123^\circ$ (\textbf{Go20}). The results of this image sequence show that our method, the \textbf{infP} method, and the \textbf{Go20} method perform better than the others.
\begin{figure}[!htb]
	\minipage{\columnwidth}
	\centering
	\subfloat[\footnotesize{The 4-th keyframe in SeqHeart dataset. }]{\includegraphics[width=0.38\columnwidth,height=1.2in]{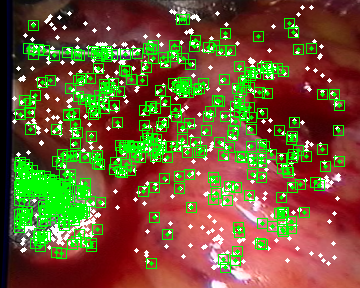}\label{fig:grapha1}}
	\hspace{.1cm}
	\subfloat[\footnotesize{Ground truth (green) and reconstructed result (red) using \textbf{Ours}.}]{\includegraphics[width=0.58\columnwidth,height=1.2in]{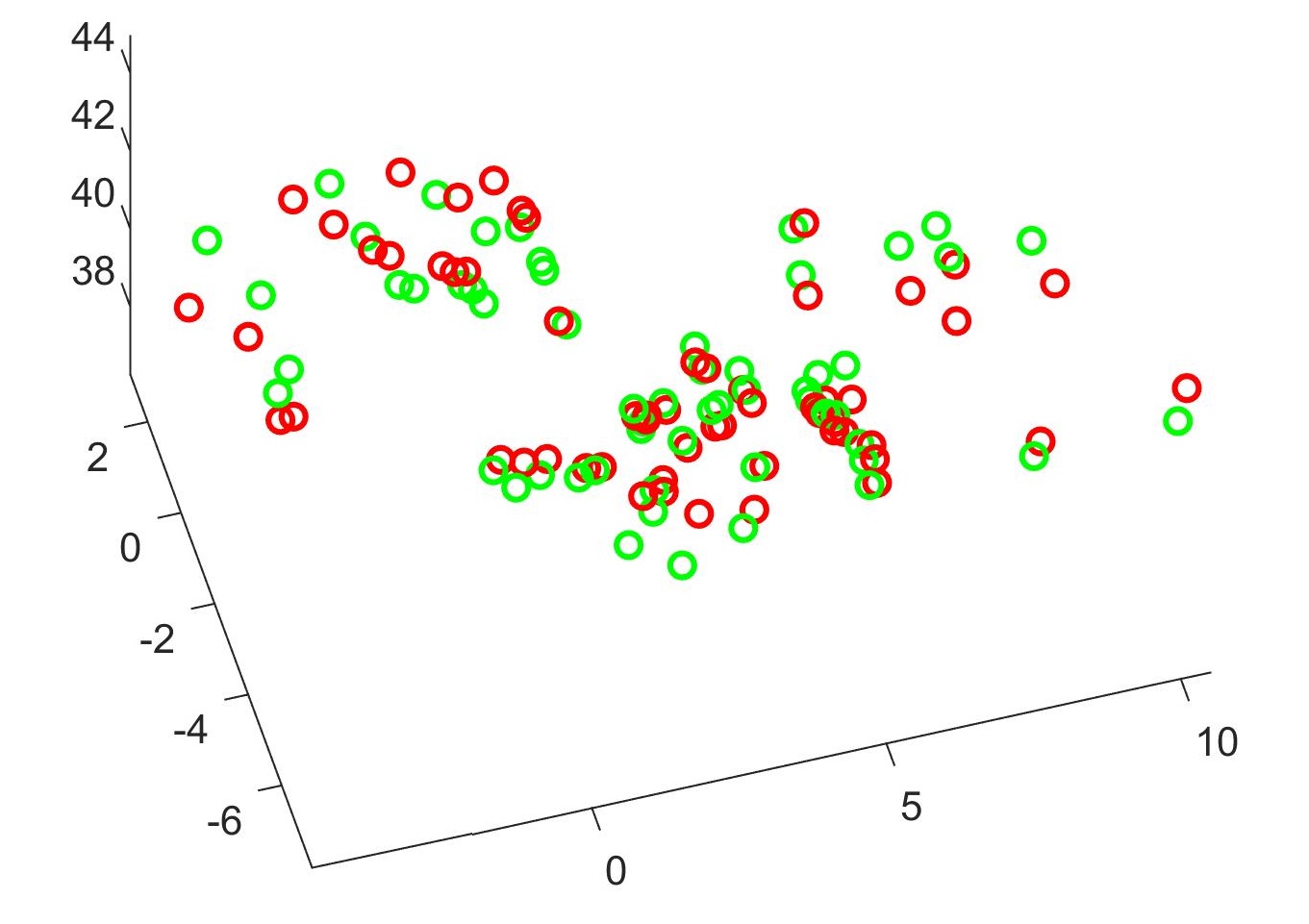}\label{fig:graphb1}}\\
	\subfloat[\footnotesize{Dense point cloud with texture using network}]{\includegraphics[width=0.8\columnwidth,height=1.3in]{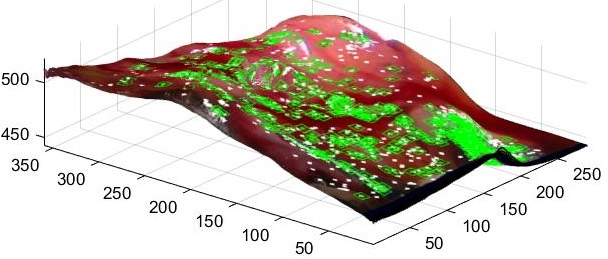}\label{fig:graphf1}}
	\caption{Reconstructed results of the 4-th keyframe of the SeqHeart sequence in the Hamlyn dataset.}
	\label{fig:graphd1}
	\endminipage\\
\end{figure}

\subsection{Dense datasets}

{Recently, unsupervised neural networks such as $\textbf{Si20}$~\cite{21add}, \textbf{LASR}~\cite{add_new_1}, and \textbf{ViSER}~\cite{add_new_2} have demonstrated superior performance compared with several classical dense NRSfM methods, including \textbf{TB}~\cite{51}, \textbf{MP}~\cite{52}, \textbf{DSTA}~\cite{53}, \textbf{GM}~\cite{54}, and \textbf{JM}~\cite{55}. To demonstrate the efficiency of the proposed method, we compare its performance with these state-of-the-art learning-based approaches using two publicly available dense datasets: Paper (Dense) and Tshirt (Dense)~\cite{21add}. Both datasets contain a large number of feature correspondences or images, which leads to high computational complexity. We evaluate the performance of \textbf{Ours} against \textbf{An17}~\cite{44}, \textbf{Pa19}~\cite{6}, \textbf{Pa21-R}~\cite{43}, \textbf{Pa21-S}~\cite{43}, \textbf{Si20}, \textbf{LASR}, and \textbf{ViSER}, and report the mean 3D error\footnote{Note that the mean 3D error used here differs from the previously used mean \% 3D error. Its definition is given in~\cite{21add}. The reported results of \textbf{LASR} and \textbf{ViSER} are obtained using their default configurations without additional tuning, and we exclude some non-front, backside points for a fair comparison. If we consider the normal field, which represents the shape, the disadvantage of these two methods becomes even more pronounced. Our method is evaluated on a downsampled dataset with 2000 features.} in Table~\ref{NRSfMx}. Fig.~\ref{fig:graphdxx} visually illustrates the reconstruction results.}


\begin{table}[!ht]
	\small
	\caption{Mean 3D error results on the dense datasets (\%).}
	\label{NRSfMx}
	\begin{center}
		\begin{tabular}{|c|c|c|c|c|c|}
			\hline
			&Paper&Tshirt&&Paper&Tshirt\\
			\hline
			\textbf{An17}&4.48&2.76&\textbf{Pa19}&5.47& 3.91 
			\\
			\hline
			\textbf{Pa21-R}&5.33&3.99&\textbf{Pa21-S}&5.52 & 4.02  
			\\
			\hline
			\textbf{Si20}&3.32&3.09 &\textbf{LASR}& 5.56  & 8.49
			\\
			\hline
			\textbf{ViSER}&3.74& 6.33& \textbf{Ours}&\textbf{3.01}   &   \textbf{2.33}
			\\
			\hline
		\end{tabular}\\
	\end{center}
\end{table}

\begin{figure}[!htb]
	\minipage{\columnwidth}
	\centering
	\subfloat[\footnotesize{{Obtained meshes (example) for Paper dataset.}}]{\includegraphics[width=0.9\columnwidth]{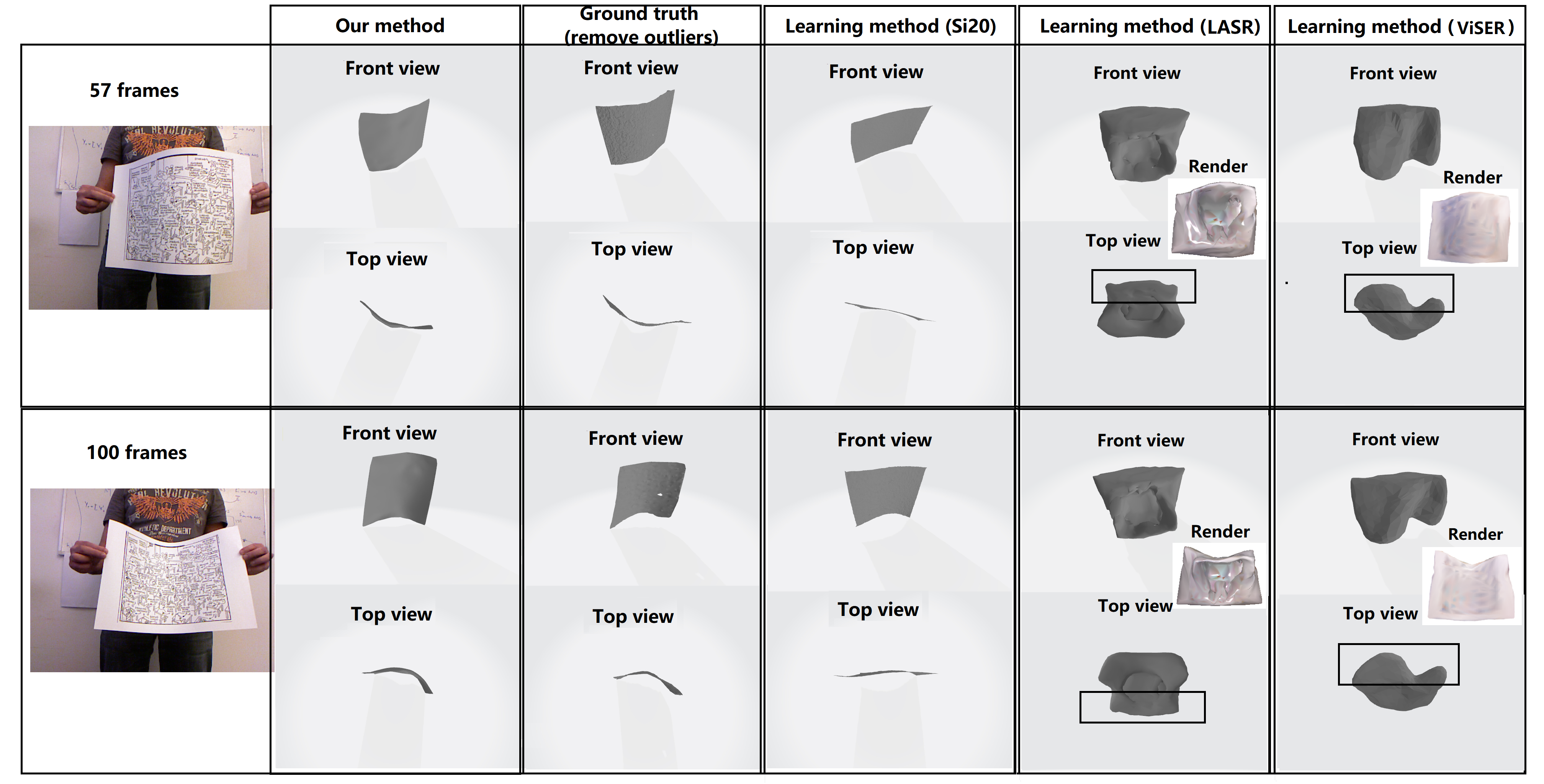}\label{fig:graphfxx}}\\
	\subfloat[\footnotesize{{Obtained meshes (example) for Tshirt dataset.}}]{\includegraphics[width=0.9\columnwidth]{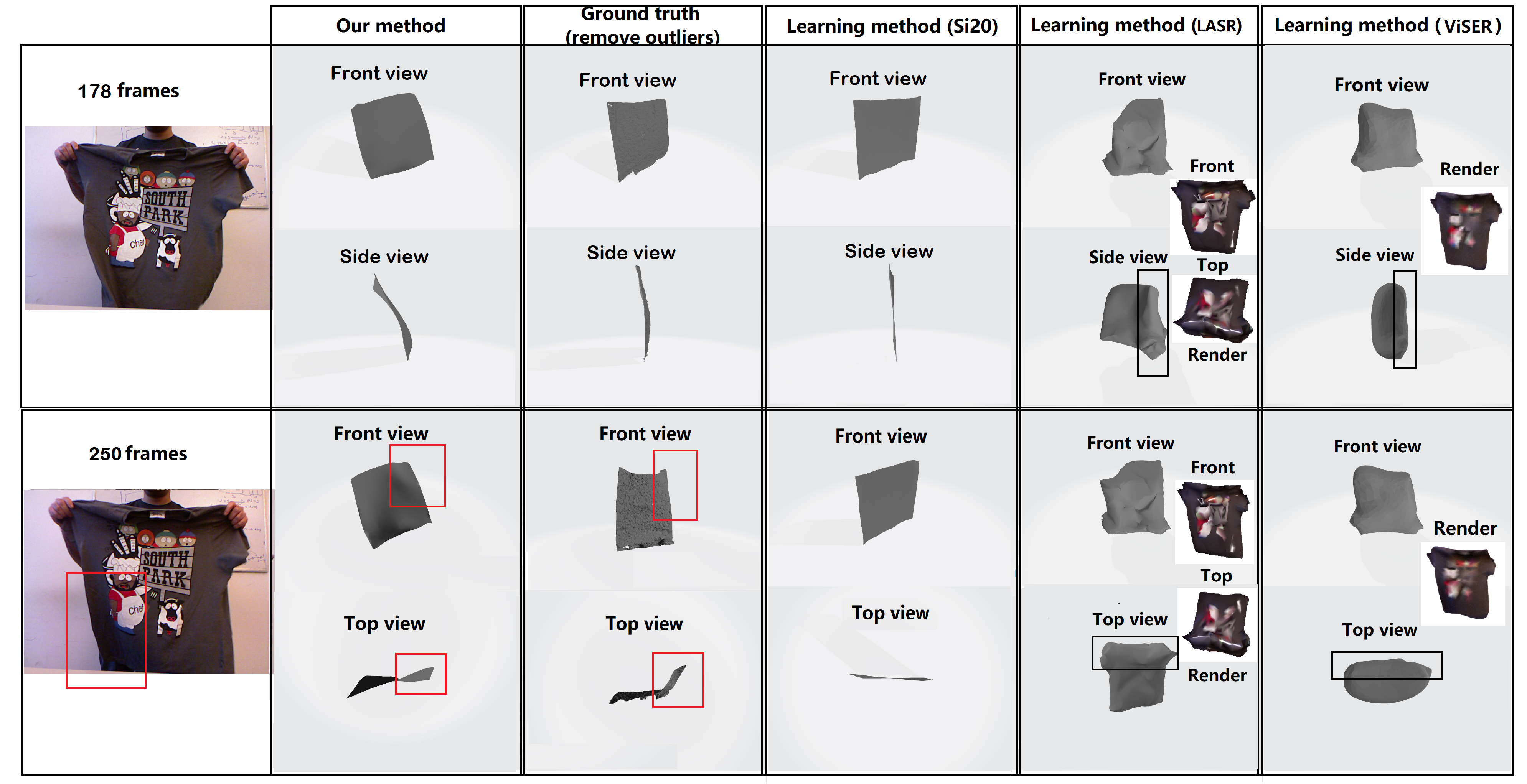}\label{fig:graphfxxx}}
\caption{Recovered results of two datasets.}
	\label{fig:graphdxx}
	\endminipage
\end{figure}


{The results show that our method achieves the best overall performance. Based on the reported results in their papers and our findings in Table~\ref{NRSfMx}, \textbf{ViSER} and \textbf{LASR} are better suited for spatially convex, deforming objects (e.g., animals and humans) that undergo large motions and are observed from diverse viewpoints, as they reconstruct shapes from a spherical prior. In other words, the target shape is topologically equivalent to a sphere. However, these methods are ill-suited for reconstructing thin Riemannian manifolds or surfaces, particularly when camera motion is small and restricted, which is the typical setting considered in this paper.} In fact, in the Paper (Dense) dataset, its ground truth includes a few outliers. All the above results in Table~\ref{NRSfMx} only remove the outliers with too large distance and the threshold is set as 100~\footnote{We reuse datasets, evaluation Matlab function, and results offered by~\cite{21add}, so the threshold is officially given.}. We find that this threshold is not enough to remove all outliers. When the threshold is reduced to 70, which helps to remove more outliers, the mean 3D error of our method will be 2.82\%. Fig.~\ref{fig:graphd1xs} shows an example of outliers in the 75th frame.
\begin{figure}[!htb]
	\minipage{\columnwidth}
	\centering
	\subfloat[\footnotesize{remain outliers}]{\includegraphics[width=0.31\columnwidth,height=1in]{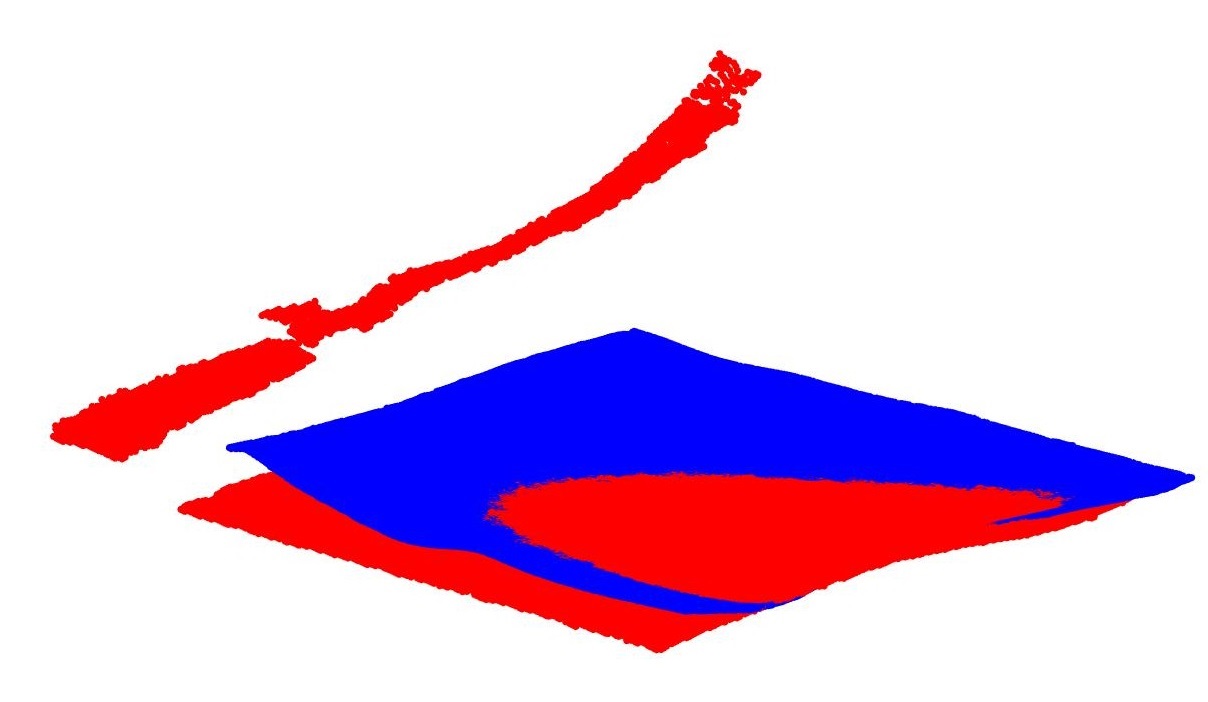}\label{fig:grapha1}}
	\hspace{.1cm}
	\subfloat[\footnotesize{threshold= 100}]{\includegraphics[width=0.31\columnwidth,height=1in]{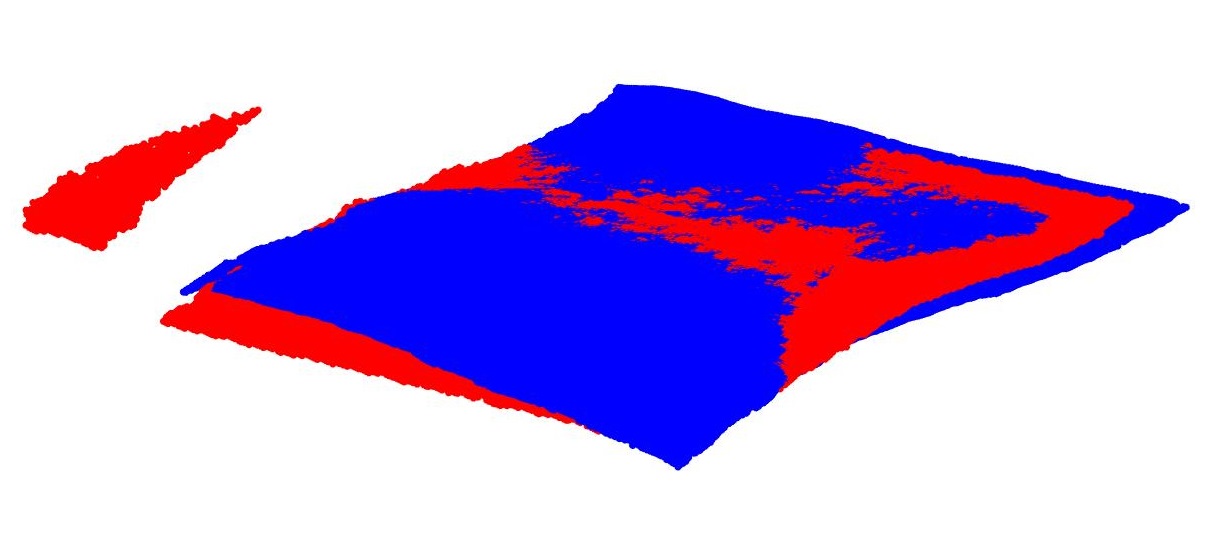}\label{fig:graphb1}}	
	\hspace{.1cm}
	\subfloat[\footnotesize{threshold= 70}]{\includegraphics[width=0.31\columnwidth,height=1in]{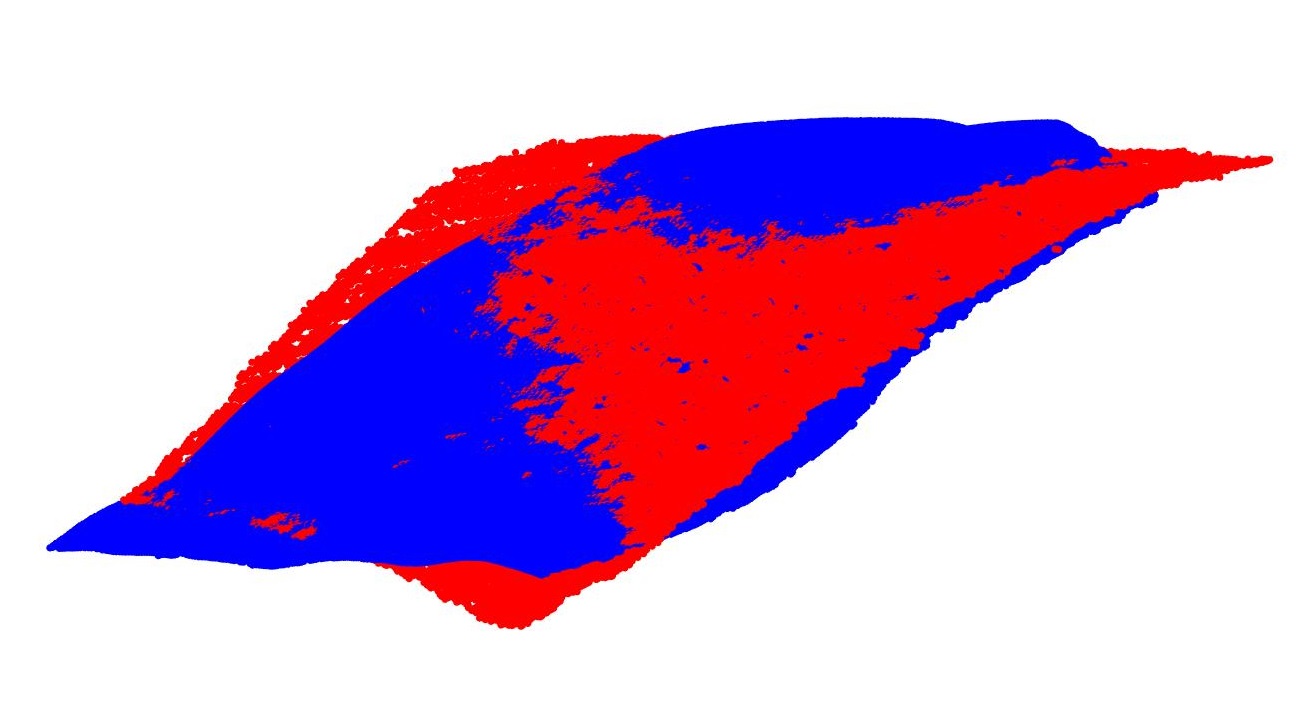}\label{fig:graphb1}}\\
	\caption{An example in 75-th frame (Paper dataset). Ground truth (red), including outliers, and reconstructed results (blue).}
	\label{fig:graphd1xs}
	\endminipage
\end{figure}

We note that these unsupervised methods differ from conventional learning, where a network is trained once and applied to new datasets. They use the networks to represent the deformation, which is more similar to the parametrization. Hence, they need to compute again for each new dataset, and the computational time will be discussed, shown in Section~\ref{s651}.

\subsection{Further Analysis}
\subsubsection{Network Result}
In this part, we evaluate the performance of our DenseNet-169-based U-Net with data augmentation against two alternatives: (1) the same network without data augmentation, and (2) a similar U-Net with a ResNet-50 encoder~\cite{55add}. In our approach, we augment the normalized normal field by injecting Gaussian noise with a mean of 0.2 and a variance of 0.1—equivalent to approximately 20\% noise. The noise-free subset is identical for both the augmented and non-augmented models. We report the training losses over 100 epochs in Fig.~\ref{fig:xxxzssq}, illustrating the convergence behavior of each model. Corresponding prediction results for a testing data point are shown in Fig.~\ref{fig:xxxzssq1}. For the clean testing dataset, the relative prediction error ratios (with our method as baseline) are 0.9571 for the non-augmented DenseNet model and 2.6069 for the ResNet-50 U-Net, where a smaller ratio indicates better accuracy. For the noisy testing dataset, the error ratios are 6.7416 (non-augmented) and 2.3806 (ResNet-50). Our trained network achieves comparable prediction accuracy to the version without data augmentation and significantly outperforms U-Nets with ResNet encoders on noise-free data. Under noisy conditions, our method demonstrates superior robustness compared to the other approaches.
 
\begin{figure}[!ht]
	\begin{center}
		\includegraphics[width=0.9\linewidth]{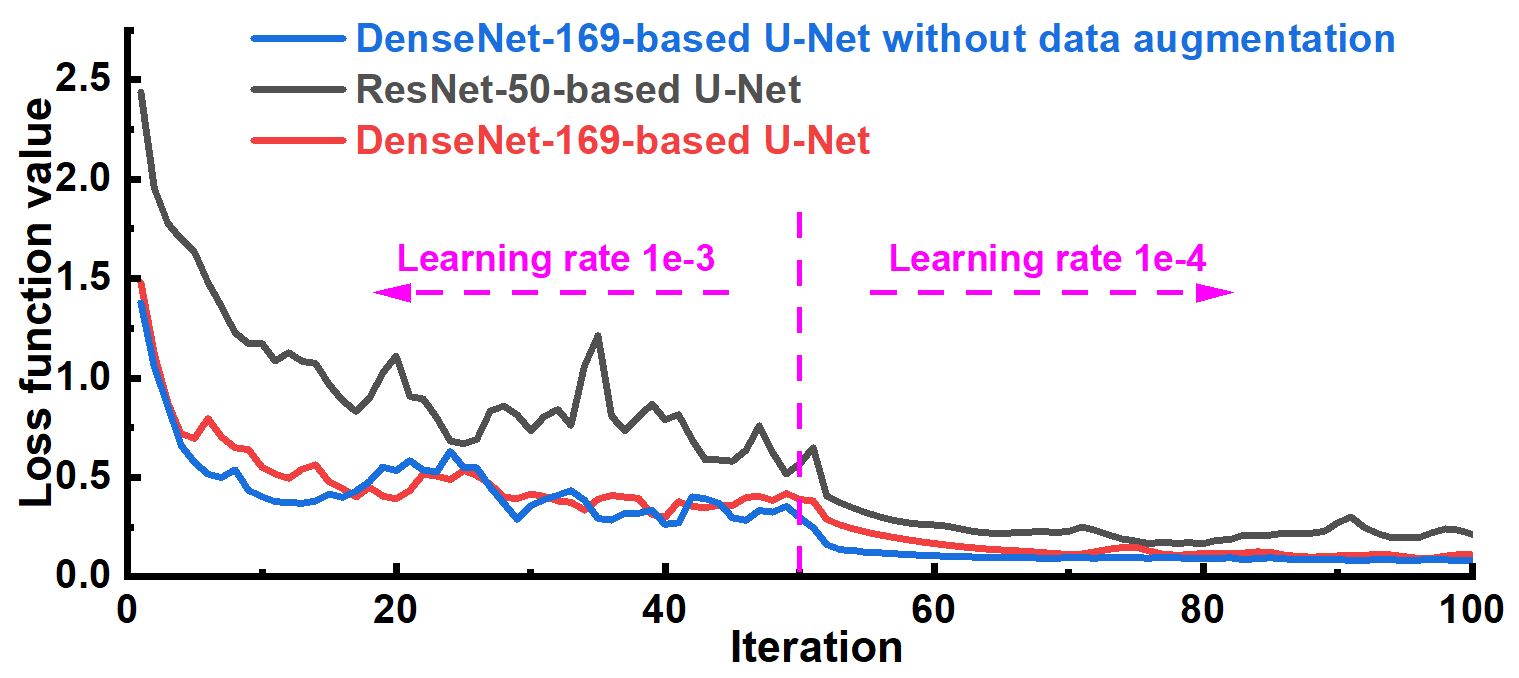}
	\end{center}
	\caption{Training loss over epochs.}
	\label{fig:xxxzssq}
\end{figure}

\begin{figure}[!ht]
	\begin{center}
		\includegraphics[width=0.7\linewidth]{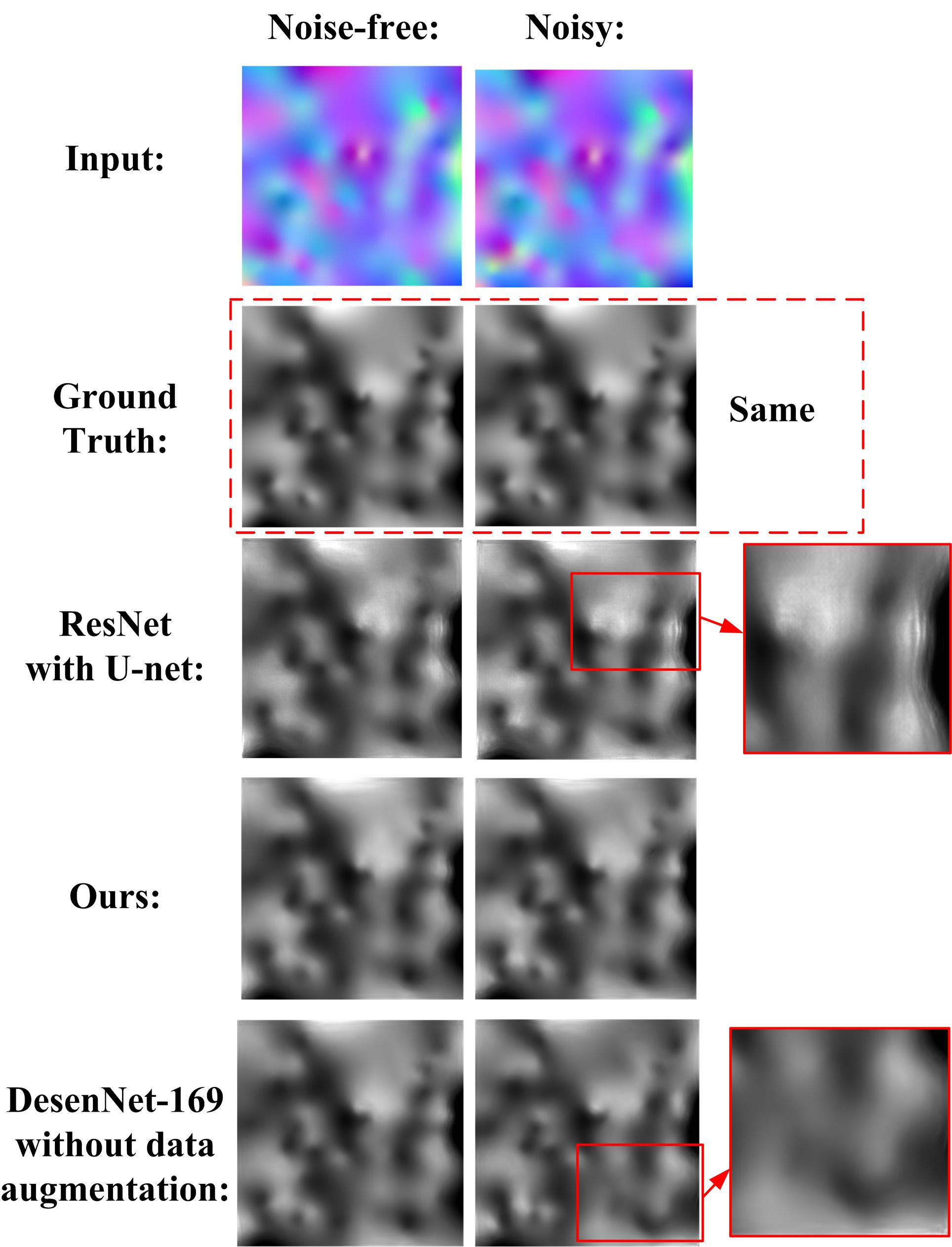}
	\end{center}
	\caption{Network prediction results.}
	\label{fig:xxxzssq1}
\end{figure}

\subsubsection{Computational Time Study}\label{s651}
Based on the hardware setting (Dell G5-5500 laptop with NVIDIA GeForce RTX 2070 MQ), we compare our method with the \textbf{infP} method, the \textbf{Diff} method, the \textbf{Ch17} method, and the \textbf{Go20} method in the running ability. The running times with the given image warps (without considering the texturing) using MATLAB R2021a are shown in  Table~\ref{NRSfM1}. The result shows that our method has an acceptable computational ability. The \textbf{infP} method is the fastest one and the \textbf{Ch17} method shows a much slower running ability compared with the others. In fact, in our implementation, Our results indicate that the \textbf{SDP17} commonly takes 2-3 times longer than the computational time of the \textbf{Ch17} method. For the learning method \textbf{Si20}, it costs more than 200 seconds/frame to recover the Paper (Dense) dataset within $5\times10^4$ iterations. \textbf{LASR} and \textbf{ViSER} cost about 45.10 seconds/frame and 50.36 seconds/frame to recover the Paper (Dense) dataset, respectively. \textbf{Ours} costs 11.50 seconds/frame using a downsample Paper (Dense) dataset with 2000 features/frame and reaching a better performance.

\begin{table}[!ht]
	\small
	\caption{Computational ability comparison (s/frame)}
	\label{NRSfM1}
	\begin{center}
		\begin{tabular}{|c|c|c|c|c|c|c|}
			\hline
			Datasets&{$N_m/N_p$}&\textbf{Ours}&\textbf{infP}&\textbf{Diff}&\textbf{Ch17}&\textbf{Go20}\\
			\hline
			Flag&30/250&2.08&0.41 &3.41&101.10& 0.33
			\\
			\hline
			Rug&159/300&1.69&0.31 &4.50&$>200$&1.48\\
			\hline
			Rug$\times 5$$^{**}$&791/300&1.61 &0.20  &4.66 &$\gg200$ &2.06\\
			\hline
			Rug$\times 10$&1581/300&1.62 &0.18  &4.67 &$\gg200$&2.89\\
			\hline
		\end{tabular}\\
		\end{center}
		\footnotesize{$^{**}$Rug$\times \star$ means to directly combine $\star$ Rug datasets with the same feature tracking as the normal Rug dataset.~{Our results indicate that our code can deal with the middle size dataset, like Rug$\times 50$ with 7901 frames and 300 feature/frame, and keep the similar running speed (1.398 s/frame) based on common desktop configuration with i7-13700k and 32 GB RAM.}}
\end{table}

 All these results reported in Table~\ref{NRSfM1} limit the recovered result to be sparse point features, instead of a dense point cloud. If we would like to perform the colorful 3D dense reconstruction, using our network combining the depth recovery and the texturing, our method will show some computational advantages. It commonly takes about 0.6s to recover the 3D dense point cloud with texture for one frame. For the other sparse methods, the depth recovery from the normal field and the texturing cost about 1.4s to operate one frame~\cite{3} with almost the same accuracy.

To further evaluate the effectiveness of the proposed parallel strategy, we compare our framework with a baseline version in which all parallel operations are disabled. The comparison is conducted using the Rug$\times\star$ datasets, where $\star = 2,~4,~6,~8,10$. Since the parallelization does not affect the estimation results, we report only the ratio of computational time between the parallel and non-parallel versions, as illustrated in Fig.~\ref{fig:xxxzq}. All experiments were conducted on a desktop with an Intel Core i7-13700K CPU, 32 GB of RAM, and MATLAB R2021a. The results clearly demonstrate that the parallel strategy significantly enhances computational efficiency on a multi-core CPU platform.

\begin{figure}[!ht]
	\begin{center}
    \includegraphics[width=0.8\linewidth]{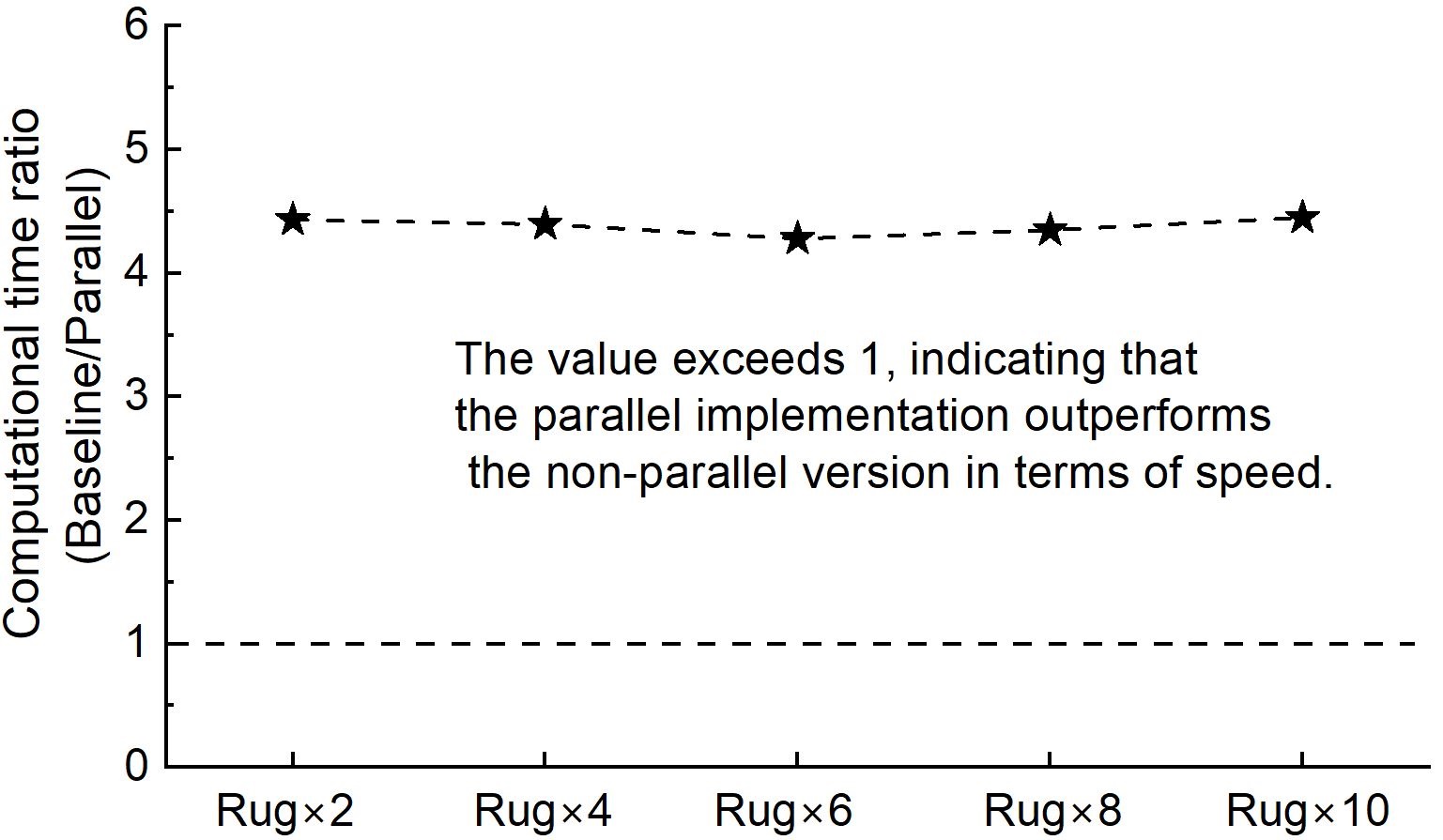}
	\end{center}
	\caption{Computational time ratio between two versions.}
	\label{fig:xxxzq}
\end{figure}

\subsubsection{Ablation Study}
In order to verify the importance of each component of our proposed method, we complete an ablation study in this sub-section. In our method, the computation of second-order derivatives helps to improve the reconstruction quality of surfaces, especially with high curvature.
The application of conformal scale helps to estimate a good scale of reconstruction with wider application scope. The separable framework improves the robustness and avoids the local minima. Table~\ref{table_4} shows the ablation study of each component.

\begin{table}[h]
	\footnotesize
	\renewcommand{\arraystretch}{1.3}
	\caption{Ablation study of each component}
	\label{table_4}
	\centering
	\begin{tabular}{|c||c|c|c|c|}
		\hline
		\% 3D and Shape errors& \multicolumn{2}{ c |}{Synthetic 1}& \multicolumn{2}{ c |}{T-shirt}\\
		\hline
		\textbf{Our framework} &0.81\%&$6.08^{\circ}$&2.24\%&$17.52^{\circ}$
		\\
		\hline
		Without Pre-Step&7.82\%&$41.35^{\circ}$&4.52\%&$44.83^{\circ}$\\
		\hline
		Without separable opt. &5.54\%&$33.50^{\circ}$&3.41\%&$27.87^{\circ}$\\
		\hline
		Without conformal  &4.53\%&$29.06^{\circ}$&2.73\%&$21.42^{\circ}$\\
		\hline
		Without second  &1.22\%&$10.47^{\circ}$&2.60\%&$20.48^{\circ}$\\
		\hline
	\end{tabular}
\end{table}

In the table, ``Our framework" means to use the complete Con-NRSfM method shown in Algorithm~\ref{alg:euclid}. ``Without Pre-Step" means to ignore the operations Pre-Step in Section~\ref{s52} and the others following the complete method. ``Without separable opt.” means joint optimization of all the variables, instead of the operations Step 1-4 in Section~\ref{s52}. ``Without conformal” means the method setting all the conformal scales as 1. ``Without second”  represents the method setting all the second-order derivatives as 0. Our experiments show that the introduction of the separable optimization framework, the conformal scale, and second-order term all help a lot in improving the result accuracy. Synthetic 1 dataset benefits more from the conformal scale than the near-isometric T-shirt dataset, as it follows the conformal deformation.

\subsubsection{Parameter Sensitivity}
In this section, we evaluate the parameter sensitivity of our algorithm to demonstrate its robustness with respect to key parameter settings. As a representative example, we examine the impact of the optimization iteration count $N_t$ used in Step 1, 2, and 4. Since our method follows the idea of alternating optimization, it is not necessary to fully optimize each variable in every iteration. Instead, the design encourages updating variables in a small number of steps while keeping others fixed.  We vary $N_t$ from 1 to 5 and report the corresponding results on the T-shirt and Flag datasets in Fig.~\ref{fig:xxxzqsss}. The results indicate that our method maintains stable performance across different values of $N_t$, highlighting its robustness to this parameter. To strike a balance between computational efficiency and accuracy, we set the default value of $N_t$ to 3 in our experiments\footnote{This experiment was conducted on a new desktop equipped with an Intel Core i7-13700K CPU. As a result, very small numerical differences may exist compared to the results presented in Section~\ref{62}.}.

\begin{figure}[!ht]
	\begin{center}
	\includegraphics[width=0.75\linewidth,height=1.6in]{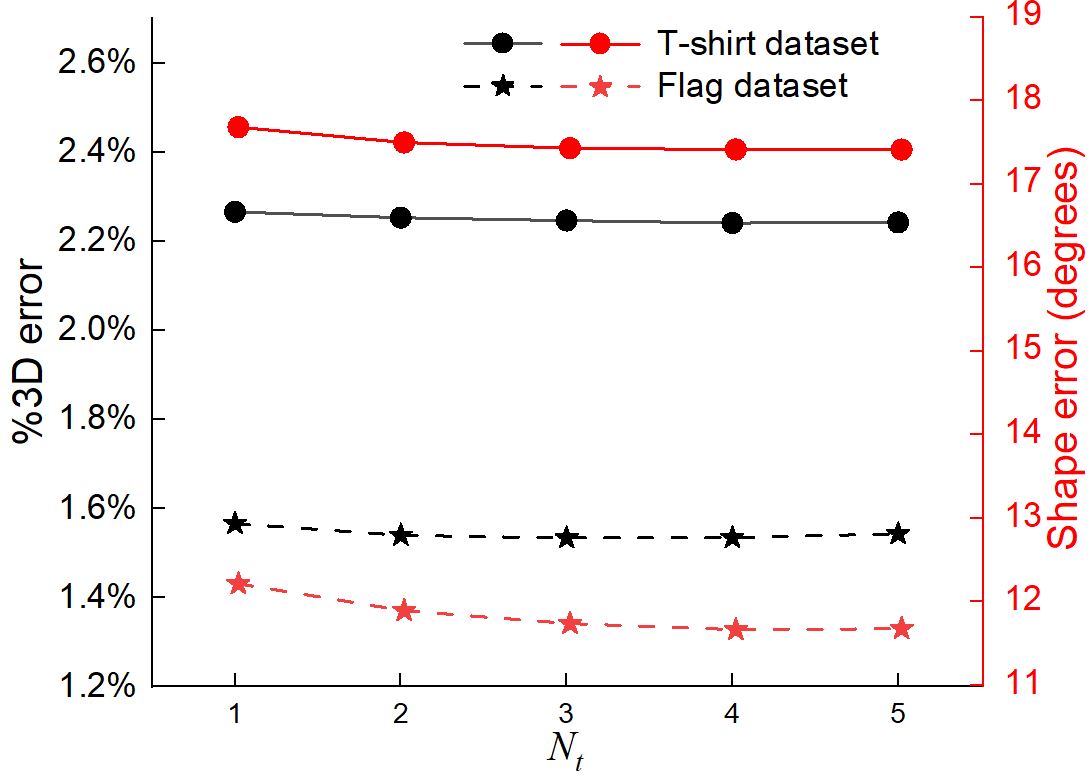}
	\end{center}
	\caption{Estimation results for two datasets with different $N_t$.}
	\label{fig:xxxzqsss}
\end{figure}

\subsubsection{Stability and Failure Cases}
Because of the accurate image warp, the point-wise graph optimization, and the separable optimization framework, our complete method shown in Algorithm~\ref{alg:euclid} is  very stable and robust with the initial value (Pure zeros/ones or random values), noises (\figurename~\ref{fig:graph10and11}), and missing data (Table~\ref{t1zz}). In all our experiments, the initial values of the variables are all zeros/ones without any specific prior. Our method can handle most NRSfM scenarios and the failure case did not happen in all our experiments. Because our method relies on image correspondences to formulate reconstruction constraints, our method will fail when the existing image matching methods fail to detect the image feature correspondences, which is also challenging for other feature-based approaches. 

\subsubsection{Limitations}
As mentioned in Section~\ref{s3b}, a key assumption of this work is that the deforming object is modeled as a Riemannian manifold, implying a continuous and smooth surface without discontinuities or self-intersections. Consequently, this method is not well-suited for reconstructing highly non-surface or large-scale deforming objects, such as complex human limb motions (Human3.6M~\cite{56}) or articulated hand movements (InterHand2.6M~\cite{57}). It is important to note that, although the point-wise formulation improves flexibility and robustness in handling sparse correspondences and moderate deformations, it still fundamentally assumes that all features lie on a continuous surface. That is, each feature is expected to have a valid mapping to a local patch of the underlying manifold. As a result, if the features originate from  regions with large-area self-intersections, occluded folds, or detached components, this assumption breaks down, and the reconstruction may become very inaccurate. Another limitation stems from the current implementation, which relies heavily on MATLAB's TRR-based optimization tools. This reliance results in memory constraints when processing large datasets such as Rug$\times 300$. The implementation handles medium-sized datasets like Rug$\times 50$ effectively, achieving comparable computational performance to other SOTA methods (Table~\ref{NRSfM1}). However, for larger datasets, a practical way is to divide the data into smaller subsets and perform reconstruction separately on each to avoid memory overflow.

\section{Conclusion and future work}
This paper introduces a novel theoretical framework, Con-NRSfM, for addressing conformal NRSfM problems. The framework leverages the rotational invariance of connections to express conformal constraints without relying on assumptions about surface geometry. The proposed formulation is solved using selected image warps, a separable parallel graph optimization strategy, and a self-supervised convolutional network. Our method has been extensively tested on a wide range of synthetic and real datasets featuring diverse baseline viewpoints and deformations. The results demonstrate that it outperforms SOTA methods in both accuracy and robustness. 

This paper marks an initial step toward applying differential geometric modeling in NRSfM. We aim to extend the framework to address broader deformation cases, including topological changes and complex surface dynamics. Future work will focus on key challenges such as faster reconstruction, novel constraints, denser outputs, and self-occlusion handling. Furthermore, we plan to integrate advanced pose estimation techniques, including learning-based and keypoint-tracking methods, to develop a comprehensive online visual deformable SLAM system. In parallel, 3D Gaussian Splatting (3DGS)–based SLAM~\cite{58} has surged in popularity because, by enforcing photometric consistency across (nearly) all pixels rather than sparse keypoints, it leverages far more of the image signal and typically achieves higher accuracy. Future work will introduce the novel local physical constraints into the 4DGS SLAM system~\cite{59}, combining image rendering with assumed physical information to enhance accuracy.

\section*{Acknowledgments}
We are grateful to Dr. Jose Lamarca for his support with~\cite{8} and datasets in Section~\ref{s64}, and to the reviewers for their constructive comments.

\section{Proof of Claim 1}\label{App_1}
\begin{proof}
	Let's consider the composite function $\Phi_1=\Psi_{21}\circ\Phi_2\circ\eta_{12}$, for the locally diffeomorphic mapping $\Psi_{21}$,  we have $(\bar{\bm{e}}_1^*,\bar{\bm{e}}_2^*)=\Upsilon\mathbf{R}(\bm{e}_1^*,\bm{e}_2^*),~\Upsilon=\mbox{diag}(\lambda_1,\lambda_2,\lambda_3)$. For the cross basis $\bar{\bm{e}}_3^*=\bar{\bm{e}}_1^*\times \bar{\bm{e}}_2^*$, we have: $\bar{\bm{e}}_3^*=\Upsilon^*\mathbf{R}\bm{e}_3^*,~\Upsilon^*=\mbox{diag}(\lambda_2\lambda_3,\lambda_1\lambda_3,\lambda_1\lambda_2)$. In short, the moving frame of the composite mapping $\Phi_1$, can be written as:
	\begin{equation}\label{eq11}
		\begin{aligned}
			&\overline{E}(\Phi_1)=E(\Psi_{21}\circ\Phi_2\circ\eta_{12})=(\bar{\bm{e}}_1^*,\bar{\bm{e}}_2^*,\bar{\bm{e}}_3^*)\\
			&=(\Upsilon\mathbf{R} (\bm{e}_1^*,\bm{e}_2^*),\Upsilon^*\mathbf{R}{\bm{e}}_3^*)=\Upsilon\mathbf{R}( \bm{e}_1^*, \bm{e}_2^*,\bm{C} \bm{e}_3^*)\\
			&=\Upsilon\mathbf{R}(\bm{e}_1,\bm{e}_2,\bm{C}\bm{e}_3)\mathbf{J}_{\eta 3\times 3},
		\end{aligned}
	\end{equation}
	where $\bm{C}=\left(\Upsilon\mathbf{R}\right)^{-1}\Upsilon^*$ $\mathbf{R}$.
	
	Based on the definition of the connections, we apply the derivative to the moving frame. The differential vectors are:
	\begin{equation}\label{eq12}
		\begin{aligned}
			&(d\bar{\bm{e}}_1^*,d\bar{\bm{e}}_2^*,d\bar{\bm{e}}_3^*)=\Upsilon\mathbf{R}(d\bm{e}_1, d\bm{e}_2,\bm{C} d\bm{e}_3)\mathbf{J}_{\eta 3\times 3}\\&+\Upsilon\mathbf{R}(e_1, e_2,\bm{C} \bm{e}_3)\mbox{diag}(d\mathbf{J}_{\eta_{12}},d\det(\mathbf{J}_{\eta_{12}})).
		\end{aligned}
	\end{equation}
	
	For the left hand side of the differential vectors~\eqref{eq12}, we can represent the connection by the coordinate axises. Let's consider
    \begin{equation}
		\begin{aligned}
			&d\bar{\bm{e}}_j^*=\frac{\partial~\bar{\bm{e}}_j^*}{\partial~\bar{u}}d\bar{u}+\frac{\partial~\bar{\bm{e}}_j^*}{\partial~\bar{v}}d\bar{v}\\
            &\frac{\partial~\bar{\bm{e}}_j^*}{\partial~\bar{u}}=\Gamma^1_{j1}(\Phi_1)\bar{\bm{e}}_1^*+\Gamma^2_{j1}(\Phi_1)\bar{\bm{e}}_2^*+\Gamma^3_{j1}(\Phi_1)\bar{\bm{e}}_3^*\\
            &\frac{\partial~\bar{\bm{e}}_j^*}{\partial~v}=\Gamma^1_{j2}(\Phi_1)\bar{\bm{e}}_1^*+\Gamma^2_{j2}(\Phi_1)\bar{\bm{e}}_2^*+\Gamma^3_{j2}(\Phi_1)\bar{\bm{e}}_3^*,
		\end{aligned}
	\end{equation}we have:
    \begin{equation}
		\begin{aligned}
			&d\bar{\bm{e}}_j^*=(\Gamma^1_{j1}(\Phi_1)\bar{\bm{e}}_1^*+\Gamma^2_{j1}(\Phi_1)\bar{\bm{e}}_2^*+\Gamma^3_{j1}(\Phi_1)\bar{\bm{e}}_3^*)d\bar u\\&+(\Gamma^1_{j2}(\Phi_1)\bar{\bm{e}}_1^*+\Gamma^2_{j2}(\Phi_1)\bar{\bm{e}}_2^*+\Gamma^3_{j2}(\Phi_1)\bar{\bm{e}}_3^*)d\bar v.
		\end{aligned}
	\end{equation}
	
    Rewrite $\bar \omega^i_j=\Gamma^i_{j1}(\Phi_1)d\bar u+\Gamma^i_{j2}(\Phi_1)d\bar v$, we have: $d\bar{\bm{e}}_j^*=\bar\omega^1_j\bar{\bm{e}}_1^*+\bar\omega^2_j\bar{\bm{e}}_2^*+\bar\omega^3_j\bar{\bm{e}}_3^*$. Then, we can get:
	
	\begin{equation}\label{eq_98}
		\begin{aligned}
			&(d\bar{\bm{e}}_1^*,d\bar{\bm{e}}_2^*,d\bar{\bm{e}}_3^*)=(\bar{\bm{e}}_1^*,\bar{\bm{e}}_2^*,\bar{\bm{e}}_3^*)\left(\bar{\omega}^i_j\right)_{3\times 3}\\&=\Upsilon\mathbf{R}(\bm{e}_1,\bm{e}_2,\bm{C}\bm{e}_3)\mathbf{J}_{\eta 3\times 3}\left(\bar{\omega}^i_j\right)_{3\times 3}.
		\end{aligned}
	\end{equation}
	
	Similarly, for the right hand side of the differential vectors~\eqref{eq_98}, based on $(d\bm{e}_1,d\bm{e}_2,d\bm{e}_3)=(\bm{e}_1,\bm{e}_2,\bm{e}_3)\left({\omega}^i_j\right)_{3\times 3}$, we also have:
	\begin{equation}\label{eq_99}
		\begin{aligned}
			&(d\bm{e}_1, d\bm{e}_2,\bm{C} d\bm{e}_3)\\&=\left((\bm{e}_1,\bm{e}_2,\bm{e}_3)\left({\omega}^i_j\right)_{3\times 2},\bm{C}(\bm{e}_1,\bm{e}_2,\bm{e}_3)\left({\omega}^i_3\right)_{3\times 1}\right)\\
			&=({E}(\Phi_2)\left({\omega}^i_j\right)_{3\times 2},~\bm{C}{E}(\Phi_2)\left({\omega}^i_j\right)_{3\times 1}),
		\end{aligned}
	\end{equation} where $\left(\cdot\right)_{\bullet\times \circ}$, $\left({\omega}^i_j\right)_{3\times 2}$ and $\left({\omega}^i_3\right)_{3\times 1}$ denote the sub-matrices of ${\omega}^i_j$ corresponding to its first two columns and its last column, respectively.
    
	Introducing \eqref{eq_98} and \eqref{eq_99} into \eqref{eq12}, we have:
	\begin{equation}\label{eq_97}
		\begin{aligned}
			&\Upsilon\mathbf{R}(\bm{e}_1,\bm{e}_2,\bm{C} \bm{e}_3)\mathbf{J}_{\eta 3\times 3}\left(\bar{\omega}^i_j\right)_{3\times 3}=\Upsilon\mathbf{R}\\&({E}(\Phi_2)\left({\omega}^i_j\right)_{3\times 2},\bm{C}{E}(\Phi_2)\left({\omega}^i_3\right)_{3\times 1})\mathbf{J}_{\eta 3\times 3}\\&+\Upsilon\mathbf{R}(\bm{e}_1, \bm{e}_2,\bm{C} \bm{e}_3)\mbox{diag}(d\mathbf{J}_{\eta_{12}},d\det(\mathbf{J}_{\eta_{12}})).
		\end{aligned}
	\end{equation}
	
	Let's delete $\Upsilon\mathbf{R}$, we have:
	\begin{equation}\label{eq_100}
		\begin{aligned}
			&\left(\bar{\omega}^i_j\right)_{3\times 3}=f_1(\Upsilon,\mathbf{R},\left({\omega}^i_j\right)_{3\times 3})+\\&\mathbf{J}_{\eta 3\times 3}^{-1}\mbox{diag}(d\mathbf{J}_{\eta_{12}},d\det(\mathbf{J}_{\eta_{12}})),
		\end{aligned}
	\end{equation}
	where, 
	\begin{equation}
		\begin{aligned}
			&f_1(\Upsilon,\mathbf{R},\left({\omega}^i_j\right)_{3\times 3})=\mathbf{J}_{\eta 3\times 3}^{-1}(\bm{e}_1, \bm{e}_2,\bm{C} \bm{e}_3)^{-1}\\&({E}(\Phi_2) \left({\omega}^i_j\right)_{3\times 2}, 
	\bm{C}{E}(\Phi_2)\left({\omega}^i_3\right)_{3\times 1})\mathbf{J}_{\eta 3\times 3}.
		\end{aligned}
	\end{equation}
	
	Introduce {$\omega^i_j=\Gamma^i_{j1}(\Phi_2)du+\Gamma^i_{j2}(\Phi_2)dv$, $\bar \omega^i_j=\Gamma^i_{j1}(\Phi_1)$ $d\bar u+\Gamma^i_{j2}(\Phi_1)d\bar v$} into the above equation~\eqref{eq_100}.	We can find that the function is greatly related to the conformal scale (matrix form) $\Upsilon$ and the rotation matrix $\mathbf{R}$, which means that the connections of the mappings $\Phi_2$ and $\Phi_2\circ\eta_{12}$ do not have the invariance property. So Claim~1 is proved.
\end{proof}


\section{Proof of Theorem 1}\label{App_2}

	\begin{proof}
 
	Let's consider the composite function $\Phi_1=\Psi_{21}\circ\Phi_2\circ\eta_{12}$, for the conformal case $\lambda_1=\lambda_2=\lambda_3=\lambda$, the moving frame of the composite mapping, $\Phi_1$, can be written as: $\overline{E}(\Phi_1)=\mathbf{R}E(\Phi_2){\mathbf{J}_{\eta}}_{3\times 3}\Lambda$. Its differential vector is:
	\begin{equation}\label{eq_79}
		\begin{aligned}
			&(d\bar{\bm{e}}_1^*,d\bar{\bm{e}}_2^*,d\bar{\bm{e}}_3^*)=\mathbf{R}(d\bm{e}_1,d\bm{e}_2,d\bm{e}_3){\mathbf{J}_{\eta}}_{3\times 3}\Lambda\\&+\mathbf{R}E(\Phi_2)\mbox{diag}(d\mathbf{J}_{\eta_{12}},d\det(\mathbf{J}_{\eta_{12}}))\Lambda.
		\end{aligned}
	\end{equation}
	
	For the left-hand side of the differential vectors~\eqref{eq_79}, we can represent the connection by the coordinate axes. Let's consider:
	\begin{equation}\label{eq_79add}
		\begin{aligned}
		&d\bar{\bm{e}}_j^*=\frac{\partial~\bar{\bm{e}}_j^*}{\partial~\bar{u}}d\bar{u}+\frac{\partial~\bar{\bm{e}}_j^*}{\partial~\bar{v}}d\bar{v},\\
		&\frac{\partial~\bar{\bm{e}}_j^*}{\partial~\bar{u}}=\Gamma^1_{j1}(\Phi_1)\bar{\bm{e}}_1^*+\Gamma^2_{j1}(\Phi_1)\bar{\bm{e}}_2^*+\Gamma^3_{j1}(\Phi_1)\bar{\bm{e}}_3^*,\\
	&\frac{\partial~\bar{\bm{e}}_j^*}{\partial~v}=\Gamma^1_{j2}(\Phi_1)\bar{\bm{e}}_1^*+\Gamma^2_{j2}(\Phi_1)\bar{\bm{e}}_2^*+\Gamma^3_{j2}(\Phi_1)\bar{\bm{e}}_3^*,
		\end{aligned}
	\end{equation}
we have:
	\begin{equation}\label{eq_79add1}
		\begin{aligned}
		&d\bar{\bm{e}}_j^*=(\Gamma^1_{j1}(\Phi_1)\bar{\bm{e}}_1^*+\Gamma^2_{j1}(\Phi_1)\bar{\bm{e}}_2^*+\Gamma^3_{j1}(\Phi_1)\bar{\bm{e}}_3^*)d\bar u\\&+(\Gamma^1_{j2}(\Phi_1)\bar{\bm{e}}_1^*+\Gamma^2_{j2}(\Phi_1)\bar{\bm{e}}_2^*+\Gamma^3_{j2}(\Phi_1)\bar{\bm{e}}_3^*)d\bar v.
		\end{aligned}
	\end{equation}
	
	 Rewrite:
	\begin{equation}\label{eq_79add1}
		\begin{aligned}
		&\bar \omega^i_j=\Gamma^i_{j1}(\Phi_1)d\bar u+\Gamma^i_{j2}(\Phi_1)d\bar v,
		\end{aligned}
	\end{equation}
	we have: 
	\begin{equation}\label{eq_79add2}
		\begin{aligned}
		d\bar{\bm{e}}_j^*=\bar\omega^1_j\bar{\bm{e}}_1^*+\bar\omega^2_j\bar{\bm{e}}_2^*+\bar\omega^3_j\bar{\bm{e}}_3^*.
		\end{aligned}
	\end{equation}

	Further, based on:
	\begin{equation}\label{eq_79add2}
		\begin{aligned}
		(d\bm{e}_1,d\bm{e}_2,d\bm{e}_3)&=(\bm{e}_1,\bm{e}_2,\bm{e}_3)({\omega}^i_j)_{3\times 3}=E(\Phi_2)({\omega}^i_j)_{3\times 3},\\
		(d\bar{\bm{e}}_1^*,d\bar{\bm{e}}_2^*,d\bar{\bm{e}}_3^*)&={E}(\Psi_{21}\circ\Phi_2\circ\eta_{12})(\bar{\omega}^i_j)_{3\times 3}\\&=(\bar{\bm{e}}_1^*,\bar{\bm{e}}_2^*,\bar{\bm{e}}_3^*)(\bar{\omega}^i_j)_{3\times 3}=\overline{E}(\Phi_1)(\bar{\omega}^i_j)_{3\times 3},
		\end{aligned}
	\end{equation}
and \eqref{eq7}, the previous equation~\eqref{eq_79} can be written as:
	\begin{equation}\label{eq_29}
		\begin{aligned}
			&\mathbf{R}E(\Phi_2){\mathbf{J}_{\eta}}_{3\times 3}\Lambda\left(\bar{\omega}^i_j\right)_{3\times 3}
			=\mathbf{R}E(\Phi_2)\left({\omega}^i_j\right)_{3\times 3}{\mathbf{J}_{\eta}}_{3\times 3}\Lambda\\&+\mathbf{R}E(\Phi_2)\mbox{diag}(d\mathbf{J}_{\eta_{12}},d\det(\mathbf{J}_{\eta_{12}}))\Lambda.
		\end{aligned}
	\end{equation}
	
	Multiplying $E(\Phi_2)^{-1}\mathbf{R}^{-1}$ on both sides, because of:
	\begin{equation}\label{eq_29add}
		\begin{aligned}
			&\bar \omega^i_j=\Gamma^i_{j1}(\Phi_2)du+\Gamma^i_{j2}(\Phi_2)dv,\\
			&\bar \omega^i_j=\Gamma^i_{j1}(\Phi_1)d\bar u+\Gamma^i_{j2}(\Phi_1)d\bar v,\\
			&\mbox{diag}(d\mathbf{J}_{\eta_{12}},d\det(\mathbf{J}_{\eta_{12}}))=\mbox{diag}(\frac{\partial \mathbf{J}_{\eta_{12}}}{\partial \bar{u}},\\&\frac{\partial \det(\mathbf{J}_{\eta_{12}})}{\partial \bar{u}})d\bar{u}+\mbox{diag}(\frac{\partial \mathbf{J}_{\eta_{12}}}{\partial \bar{v}},\frac{\partial \det(\mathbf{J}_{\eta_{12}})}{\partial \bar{v}})d\bar{v},\\
		\end{aligned}
	\end{equation}
and the orthogonality of $d\bar{u}$ and $d\bar{v}$, we have: 
	\begin{equation}\label{eq_20}
		\begin{aligned}
			&(1)~{\mathbf{J}_{\eta}}_{3\times 3}\Lambda\Gamma^i_{j1}(\Phi_1)=\Gamma^i_{j1}(\Phi_2)\frac{\partial u}{\partial \bar{u}}{\mathbf{J}_{\eta}}_{3\times 3}\Lambda+\\&\Gamma^i_{j2}(\Phi_2)\frac{\partial v}{\partial \bar{u}}{\mathbf{J}_{\eta}}_{3\times 3}\Lambda+\mbox{diag}(\frac{\partial \mathbf{J}_{\eta_{12}}}{\partial \bar{u}},\frac{\partial \det(\mathbf{J}_{\eta_{12}})}{\partial \bar{u}})\Lambda,\\
			&(2)~{\mathbf{J}_{\eta}}_{3\times 3}\Lambda\Gamma^i_{j2}(\Phi_1)=\Gamma^i_{j1}(\Phi_2)\frac{\partial u}{\partial \bar{v}}{\mathbf{J}_{\eta}}_{3\times 3}\Lambda+\\&\Gamma^i_{j2}(\Phi_2)\frac{\partial v}{\partial \bar{v}}{\mathbf{J}_{\eta}}_{3\times 3}\Lambda+\mbox{diag}(\frac{\partial \mathbf{J}_{\eta_{12}}}{\partial \bar{v}},\frac{\partial \det(\mathbf{J}_{\eta_{12}})}{\partial \bar{v}})\Lambda.
		\end{aligned}
	\end{equation}
	
	So Theorem~\ref{t2} is proved.
\end{proof}

\section{Proof of Corollary 1}\label{App_3}

\begin{proof}
	As an example, because the deduction process of the first equation of the connection~(7) is very similar to the second one, we only show the deduction process of the first one. For the conformal deformation $\Psi_{21}$, writing $\Gamma^i_{j1}(\Phi_1)=(\bar{\bm{T}}^1_{kl})$, $\Gamma^i_{j1}(\Phi_2)=({\bm{T}}^1_{kl})$, and $\Gamma^i_{j2}(\Phi_2)=({\bm{T}}^2_{kl}),~k,l=1,2$ as $2\times 2$ block matrices, we can re-write \eqref{eq_861} in Theorem~\ref{t2} as the block matrix formulation:
	\begin{equation}\label{eq_22}
		\begin{aligned}
			&\left(\begin{array}{cc}
				\mathbf{J}_{\eta_{12}}&\bm{0}\\\bm{0}&\det(\mathbf{J}_{\eta_{12}})\end{array}\right)\left(\begin{array}{cc}
				\lambda\bm{I}_{2\times 2}&\bm{0}\\\bm{0}&\lambda^2
			\end{array}\right)\left(\begin{array}{cc}
				\bar{\bm{T}}^1_{11}&\bar{\bm{T}}^1_{12}\\\bar{\bm{T}}^1_{21}&\bar{\bm{T}}^1_{22}
			\end{array}\right)\\&=\left(\begin{array}{cc}
				{\bm{T}}^1_{11}&{\bm{T}}^1_{12}\\{\bm{T}}^1_{21}&{\bm{T}}^1_{22}
			\end{array}\right)\frac{\partial u}{\partial \bar{u}}{\mathbf{J}_{\eta}}_{3\times 3}\left(\begin{array}{cc}
				\lambda\bm{I}_{2\times 2}&\bm{0}\\\bm{0}&\lambda^2
			\end{array}\right)\\
			&+\left(\begin{array}{cc}
				{\bm{T}}^2_{11}&{\bm{T}}^2_{12}\\{\bm{T}}^2_{21}&{\bm{T}}^2_{22}
			\end{array}\right)\frac{\partial v}{\partial \bar{u}}{\mathbf{J}_{\eta}}_{3\times 3}\left(\begin{array}{cc}
				\lambda\bm{I}_{2\times 2}&\bm{0}\\\bm{0}&\lambda^2
			\end{array}\right)\\
&+\mbox{diag}(\frac{\partial \mathbf{J}_{\eta_{12}}}{\partial \bar{u}},\frac{\partial \det(\mathbf{J}_{\eta_{12}})}{\partial \bar{u}})\mbox{diag}(\lambda,\lambda,\lambda^2)\\
			&\Rightarrow\left(\begin{array}{cc}
				\lambda\mathbf{J}_{\eta_{12}}\bar{\bm{T}}^1_{11}&\lambda\mathbf{J}_{\eta_{12}}\bar{\bm{T}}^1_{12}\\\lambda^2\det(\mathbf{J}_{\eta_{12}})\bar{\bm{T}}^1_{21}&\lambda^2\det(\mathbf{J}_{\eta_{12}})\bar{\bm{T}}^1_{22}\end{array}\right)\\&=\frac{\partial u}{\partial \bar{u}}\left(\begin{array}{cc}
				\lambda{\bm{T}}^1_{11}\mathbf{J}_{\eta_{12}}&\lambda^2{\bm{T}}^1_{12}\det(\mathbf{J}_{\eta_{12}})\\\lambda{\bm{T}}^1_{21}\mathbf{J}_{\eta_{12}}&\lambda^2{\bm{T}}^1_{22}\det(\mathbf{J}_{\eta_{12}})\end{array}\right)\\
			\end{aligned}
\end{equation}
\begin{equation}\nonumber
\begin{aligned}
			&+\frac{\partial v}{\partial \bar{u}}\left(\begin{array}{cc}
				\lambda{\bm{T}}^2_{11}\mathbf{J}_{\eta_{12}}&\lambda^2{\bm{T}}^2_{12}\det(\mathbf{J}_{\eta_{12}})\\\lambda{\bm{T}}^2_{21}\mathbf{J}_{\eta_{12}}&\lambda^2{\bm{T}}^2_{22}\det(\mathbf{J}_{\eta_{12}})\end{array}\right)\\&+\left(\begin{array}{cc}
				\lambda\frac{\partial \mathbf{J}_{\eta_{12}}}{\partial \bar{u}}&\bm{0}\\\bm{0}&\lambda^2\frac{\partial \det(\mathbf{J}_{\eta_{12}})}{\partial \bar{u}}\end{array}\right)\Rightarrow\\
			&\left\{
			\begin{array}{l}
				{\lambda}\mathbf{J}_{\eta_{12}}\bar{\bm{T}}^1_{11}=\frac{\partial u}{\partial \bar{u}}\lambda{\bm{T}}^1_{11}\mathbf{J}_{\eta_{12}}+\frac{\partial v}{\partial \bar{u}}\lambda{\bm{T}}^2_{11}\mathbf{J}_{\eta_{12}}+\lambda\frac{\partial \mathbf{J}_{\eta_{12}}}{\partial \bar{u}}\\ \lambda^2\det(\mathbf{J}_{\eta_{12}})\bar{\bm{T}}^1_{22}=\frac{\partial u}{\partial \bar{u}}\lambda^2{\bm{T}}^1_{22}\det(\mathbf{J}_{\eta_{12}})\\+\frac{\partial v}{\partial \bar{u}}\lambda^2{\bm{T}}^2_{22}\det(\mathbf{J}_{\eta_{12}})+\lambda^2\frac{\partial \det(\mathbf{J}_{\eta_{12}})}{\partial \bar{u}}\\
				\lambda\mathbf{J}_{\eta_{12}}\bar{\bm{T}}^1_{12}=\frac{\partial u}{\partial \bar{u}}\lambda^2{\bm{T}}^1_{12}\det(\mathbf{J}_{\eta_{12}})+\frac{\partial v}{\partial \bar{u}}\lambda^2{\bm{T}}^2_{12}\det(\mathbf{J}_{\eta_{12}})\\
				\lambda^2\det(\mathbf{J}_{\eta_{12}})\bar{\bm{T}}^1_{21}=\frac{\partial u}{\partial \bar{u}}\lambda{\bm{T}}^1_{21}\mathbf{J}_{\eta_{12}}+\frac{\partial v}{\partial \bar{u}}\lambda{\bm{T}}^2_{21}\mathbf{J}_{\eta_{12}}
			\end{array},
			\right.
		\end{aligned}
	\end{equation}
where $\bar{\bm{T}}^1_{11},~{\bm{T}}^1_{11},~{\bm{T}}^2_{11}\in\mathbb{R}_{2\times2}$, $\bar{\bm{T}}^1_{12},~{\bm{T}}^1_{12},~{\bm{T}}^2_{12}\in\mathbb{R}_{2\times1}$, $\bar{\bm{T}}^1_{21},~{\bm{T}}^1_{21},~{\bm{T}}^2_{21}\in\mathbb{R}_{1\times2}$, and  $\bar{\bm{T}}^1_{22},~{\bm{T}}^1_{22},~{\bm{T}}^2_{22}\in\mathbb{R}_{1\times1}$.

	Deleting $\lambda$ or $\lambda^2$, in the final 4 equations, we see that, the front two are not related to the conformal scale $\lambda$, which means that the second-order leading principal minors and the last elements of the connections $\Gamma^i_{jk}(\Phi_1)$ and  $\Gamma^i_{jk}(\Phi_2\circ\eta_{12})$ are invariant. So  Corollary~\ref{c2} is proved.
\end{proof}

\section{Proof of Corollary~\ref{c3}}\label{App_4}

\begin{proof}
Based on the block matrix formulation in  Corollary~\ref{c3}, viewing the conformal scale $\lambda$ as the variable, in~\eqref{eq_22}, we find that the relation between the (1,3)-th, (2,3)-th, (3,1)-th, and (3,2)-th elements of the connections $\Gamma^i_{jk}(\Phi_1)$ and  $\Gamma^i_{jk}(\Phi_2\circ\eta_{12})$ of the mapping $\Phi_1$ and $\Phi_2\circ\eta_{12}$ are linear functions, satisfying:
\begin{equation}\label{eq_13xx}
	\begin{aligned}
		&\lambda\det(\mathbf{J}_{\eta_{12}})\bar{\bm{T}}^1_{21}=\frac{\partial u}{\partial \bar{u}}{\bm{T}}^1_{21}\mathbf{J}_{\eta_{12}}+\frac{\partial v}{\partial \bar{u}}{\bm{T}}^2_{21}\mathbf{J}_{\eta_{12}},\\
		&(\frac{\partial u}{\partial \bar{u}}{\bm{T}}^1_{12}+\frac{\partial v}{\partial \bar{u}}{\bm{T}}^2_{12})\det(\mathbf{J}_{\eta_{12}})\lambda=\mathbf{J}_{\eta_{12}}\bar{\bm{T}}^1_{12},\\
		&\lambda\det(\mathbf{J}_{\eta_{12}})\bar{\bm{T}}^2_{21}=\frac{\partial u}{\partial \bar{v}}{\bm{T}}^1_{21}\mathbf{J}_{\eta_{12}}+\frac{\partial v}{\partial \bar{v}}{\bm{T}}^2_{21}\mathbf{J}_{\eta_{12}},\\
		&(\frac{\partial u}{\partial \bar{v}}{\bm{T}}^1_{12}+\frac{\partial v}{\partial \bar{v}}{\bm{T}}^2_{12})\det(\mathbf{J}_{\eta_{12}})\lambda=\mathbf{J}_{\eta_{12}}\bar{\bm{T}}^2_{12},\\
	\end{aligned}
\end{equation}
For these 8 linear functions, moving other terms into the right-hand side, they can all be written as the following formulation $\lambda-\alpha_i=0,~i=1,\cdots,8$.
Hence, the sum of the squares problem of the conformal scale can be formulated as:
\begin{equation}\label{eq_24aa}
	\begin{aligned}
	\min_{\lambda}	&\sum_{i=1}^{8}(\lambda-\alpha_i)^2,~s.t.~\lambda\in \mathbb{R}.
	\end{aligned}
\end{equation}

Ignoring the constraint $\lambda\in \mathbb{R}$, its closed-form solution is:
\begin{equation}\label{eq_24a}
	\begin{aligned}
	\lambda=\frac{\sum_{i=1}^{n}\alpha_i\pm \sqrt{\left(\sum_{i=1}^{n}\alpha_i\right)^2-n\sum_{i=1}^{n}\alpha_i^2}}{n},~n=8.
	\end{aligned}
\end{equation}

Based on the AM-QM inequality (arithmetic mean, and quadratic mean), we have $\left(\sum_{i=1}^{n}\alpha_i\right)^2-n\sum_{i=1}^{n}\alpha_i^2\leq 0$. Hence, the closest real value solution is $\lambda={\sum_{i=1}^{n}\alpha_i}/{n}$. By introducing all relationships in~\eqref{eq_13xx}, we can get the solution in~\eqref{eq_23add2}. So  Corollary~\ref{c3} is proved.
\end{proof}

\newpage

\vspace{-22pt}

\begin{IEEEbiography}[{\includegraphics[width=1in,height=1.25in,clip,keepaspectratio]{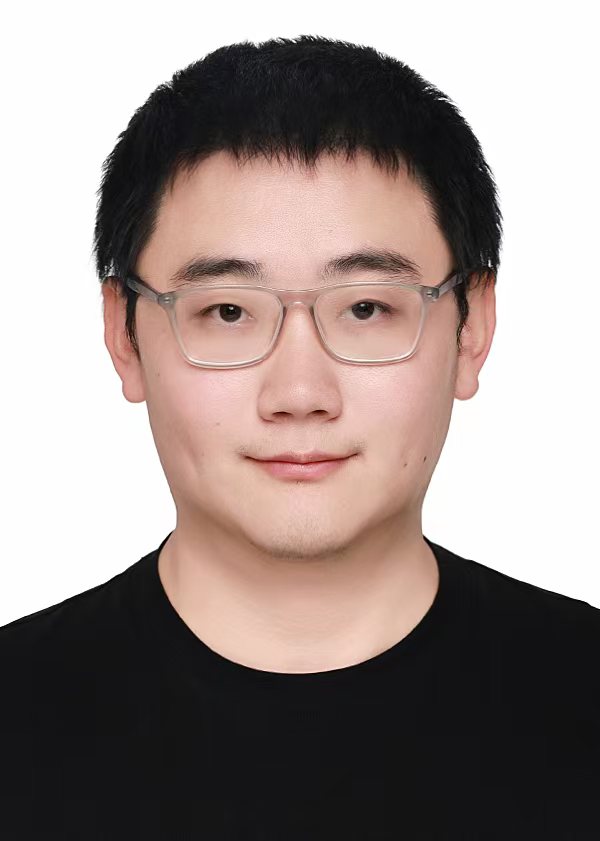}}]{Yongbo Chen} received the Ph.D. degree in Robotics from the Robotics Institute, UTS, Sydney, Australia, in August 2021. From 2021--2024, he was a Postdoctoral Research Fellow in the School of Computing, The Australian National University (ANU), Canberra, Australia. He is now an Associate Professor with the School of Automation and Intelligent Sensing, Shanghai Jiao Tong University, Shanghai, China. His research interests are simultaneous localization and mapping (SLAM), partially observable Markov decision process (POMDP), optimization and learning techniques in Robotics.
\end{IEEEbiography}

\vspace{-22pt}

\begin{IEEEbiography}[{\includegraphics[width=1in,height=1.25in,clip,keepaspectratio]{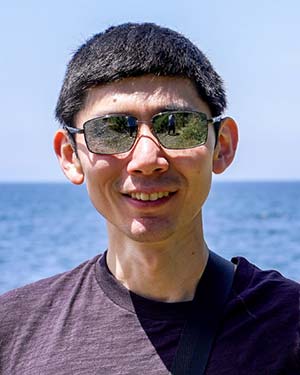}}]{Yanghao Zhang} received the Ph.D.
degree in robotics from UTS,
Sydney, Ultimo, NSW, Australia, in 2022.
From 2022-2024, he was a Postdoctoral Research Fellow
with the College of Engineering and Computer Science,
Australian National University, Canberra, ACT,
Australia. He is now a researcher on Embodied AI and Robotics with the Robotics Institute, UTS, Sydney, Australia. His research interests include visual simultaneous
localization and mapping and deformation
reconstruction.
\end{IEEEbiography}

\vspace{-22pt}

\begin{IEEEbiography}[{\includegraphics[width=1in,height=1.25in,clip,keepaspectratio]{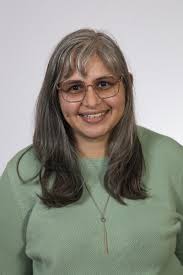}}]{Shaifali Parashar}
 received Ph.D. degree in computer vision from the Universit\'{e} Auvergne,
Clermont-Ferrand, France, in 2017. She was a
Postdoctoral Researcher with Computer Vision Laboratory (CVLab), \'{E}cole Polytechnique F\'{e}d\'{e}rale de Lausanne (EPFL)
in Lausanne, Switzerland. She is currently a
Centre national de la recherche scientifique (CNRS) Research scientist at Institut National des Sciences Appliqu\'{e}es de Lyon (LIRIS, INSA-Lyon).
Her research interests include 3D computer vision
including nonrigid 3D reconstruction, deformation
modeling and deformable SLAM.
\end{IEEEbiography}

\vspace{-22pt}

\begin{IEEEbiography}[{\includegraphics[width=1in,height=1.25in,clip,keepaspectratio]{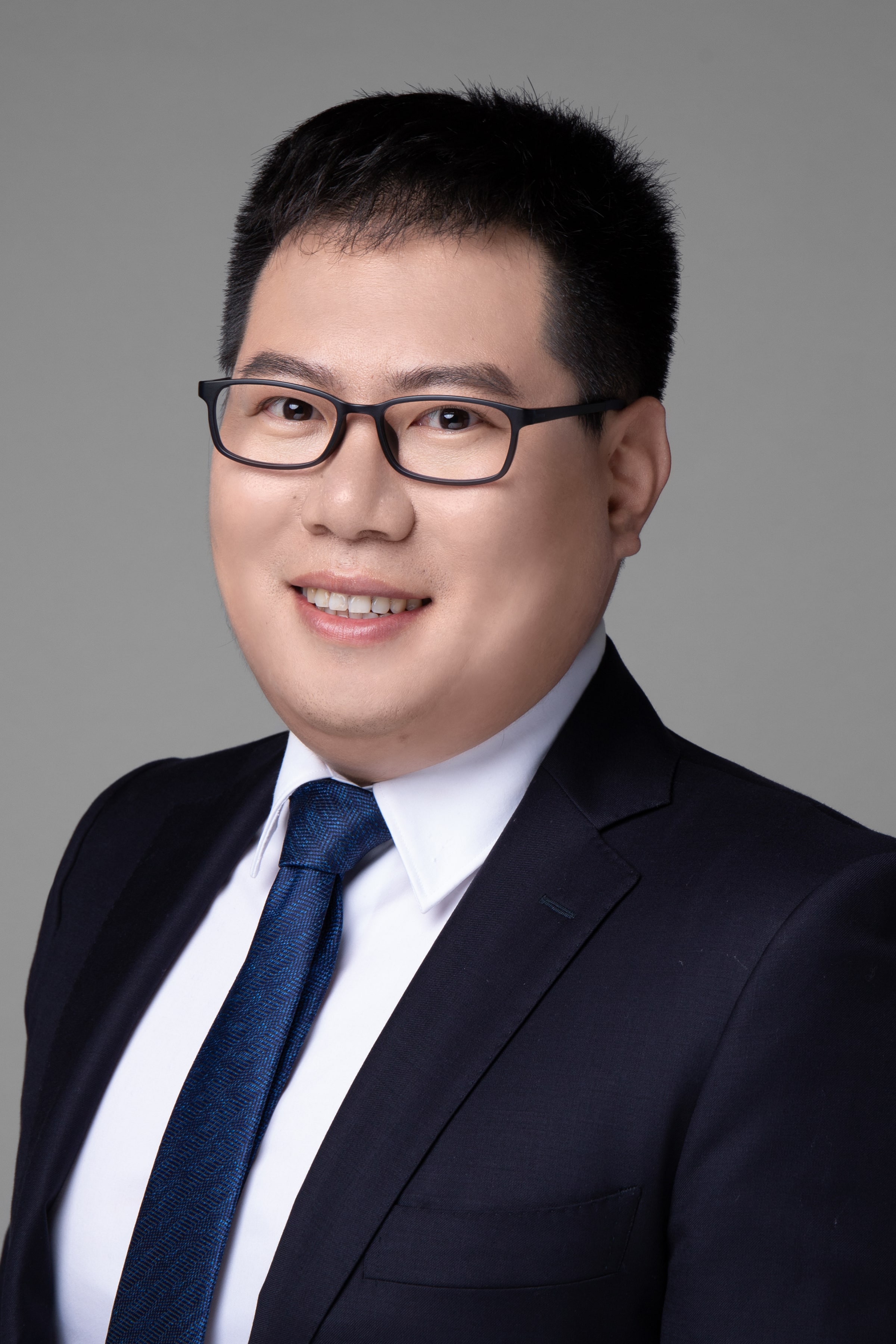}}]
	{Liang Zhao} received the Ph.D.
degree in photogrammetry and remote sensing from
Peking University, Beijing, China, in 2013.
From 2014 to 2016, he worked as a Postdoctoral
Research Associate with the Hamlyn Centre for
Robotic Surgery, Department of Computing, Faculty
of Engineering, Imperial College London, London,
U.K. From 2016 to 2024, he was a Senior Lecturer
 at the UTS Robotics Institute, UTS, Sydney, Australia. He is now a Reader in Robot Systems in the School of Informatics, The University of Edinburgh, United Kingdom. His research
interests include surgical robotics, autonomous robot SLAM, monocular SLAM, aerial photogrammetry, optimization
techniques in mobile robot localization and mapping and image-guided
robotic surgery.
\end{IEEEbiography}

\vspace{-22pt}

\begin{IEEEbiography}[{\includegraphics[width=1in,height=1.25in,clip,keepaspectratio]{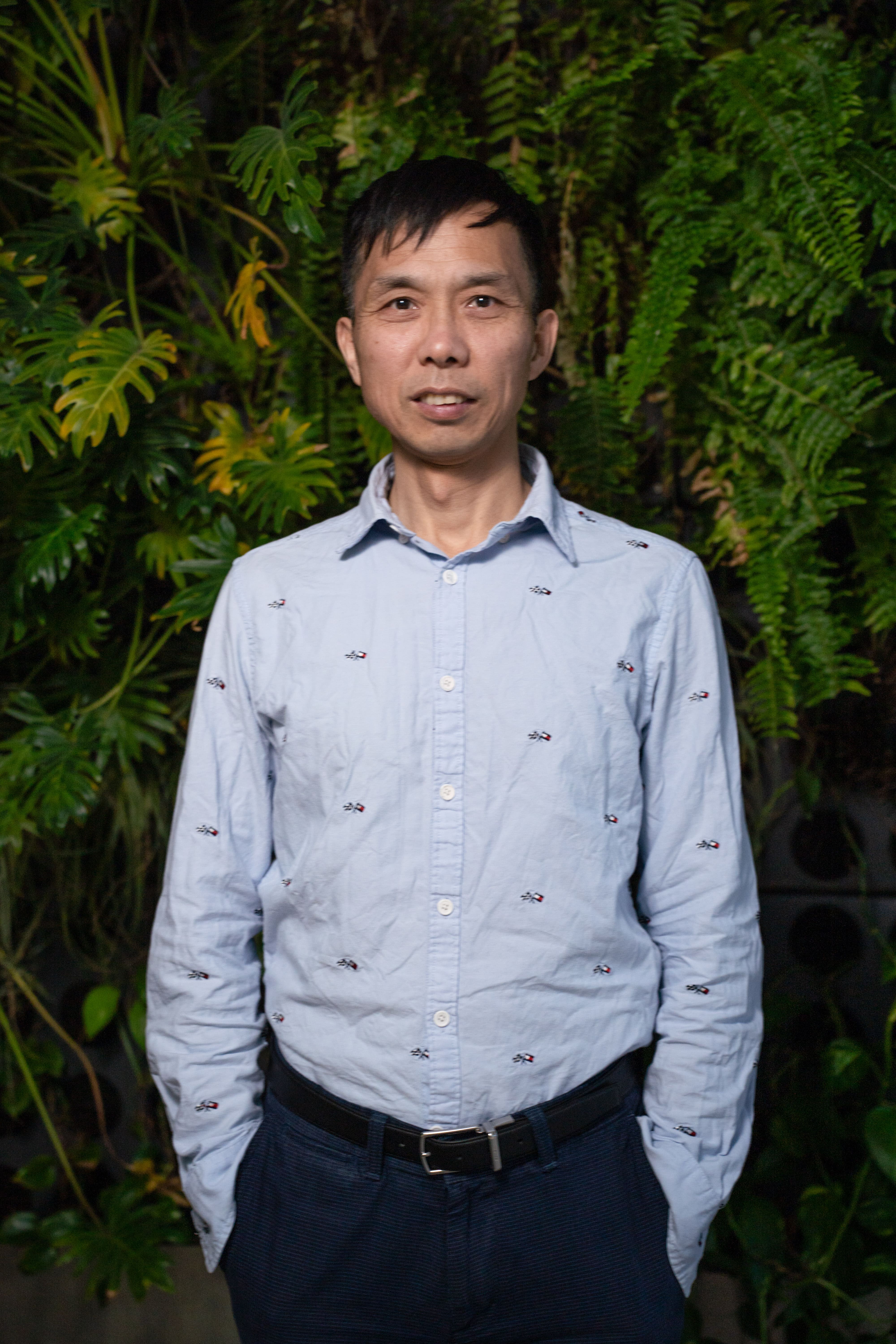}}]{Shoudong Huang} received his Ph.D. in Automatic Control from Northeastern University, Shenyang, China, in 1998. He is currently a Professor and Deputy Director of the UTS Robotics Institute, as well as Deputy Head of School (Research) in the School of Mechanical and Mechatronic Engineering at the University of Technology Sydney. His research interests
include nonlinear system state estimation and control, mobile robot SLAM, and surgical robotics.
\end{IEEEbiography}

\vfill


\begin{thebibliography}{1}
\bibliographystyle{IEEEtran}
\bibitem{1add}
S. Parashar and A. Bartoli, ``3DVFX: 3D video editing using non-rigid structure-from-motion,"  \textit{Eurographics} (Short Papers), pp. 29-32, 2019.
\bibitem{8}
J. Lamarca, S. Parashar, A. Bartoli, and J. M. M. Montiel, ``DefSLAM: Tracking and mapping of deforming scenes from monocular sequences," \textit{IEEE Trans. on Robot.}, vol. 37, no. 1, pp. 291-303, 2021.
\bibitem{8add}
J. J. G. Rodríguez, J. M. M. Montiel, and J. D. Tardós. ``NR-slam: Non-rigid monocular slam." \textit{IEEE Trans. on Robot.}, vol. 40, pp. 4252-4264, 2024.
\bibitem{8add1}
T. Deng, G. Shen, C. Xun, S. Yuan, T. Jin, H. Shen, Y. Wang, J. Wang, H. Wang, D. Wang, and W. Chen, ``Mne-slam: Multi-agent neural slam for mobile robots," in \textit{Proc. IEEE Conf. on Comput. Vis. Patt. Recog. (CVPR)}, 2025, pp. 1485-1494.
\bibitem{1}
L. Torresani, A. Hertzmann, and C. Bregler, ``Nonrigid structure-from-motion: Estimating shape and motion with hierarchical priors," \textit{IEEE Trans. Patt. Anal. Mach. Intell.}, vol. 30, no. 5, pp. 878-892, 2008.
\bibitem{2}
I. Akhter, Y. Sheikh, S. Khan, and T. Kanade, ``Trajectory space: A dual representation for nonrigid structure from motion," \textit{IEEE Trans. Patt. Anal.
	Mach. Intell.}, vol. 33, no. 7, pp. 1442-1456, 2010.
\bibitem{3}
Y. Chen, L. Zhao, Y. Zhang, and S. Huang, ``Dense isometric non-rigid shape-from-motion based on graph optimization and edge selection,"  \textit{IEEE Robot. and Autom. Lett.}, vol. 5, no. 4, pp. 5889-5896, 2020.
\bibitem{4}
A. Chhatkuli, D. Pizarro, T. Collins, and A. Bartoli, ``Inextensible non-rigid structure-from-motion by second-order cone programming," \textit{IEEE Trans. Patt. Anal.
	Mach. Intell.}, vol. 40, no. 10, pp. 2428-2441, 2017.
\bibitem{4add}
A. Sengupta, K. Makki and A. Bartoli, ``Using specularities to boost non-rigid structure-from-motion," in \textit{Proc. IEEE Int. Conf. on Robot.  Auto. (ICRA)}, 2024, pp. 2652-2659.
\bibitem{5}
A. Chhatkuli, D. Pizarro, and A. Bartoli, ``Nonrigid shape-from-motion for isometric surfaces using infinitesimal planarity," in \textit{Proc. British Mach. Vis. Conf. (BMVC)}, 2014, pp. 1-12.
\bibitem{6}
S. Parashar, D. Pizarro, and A. Bartoli, ``Isometric non-rigid shapefrom-motion with riemannian geometry solved in linear time," \textit{IEEE Trans. Patt. Anal.
	Mach. Intell.}, vol. 40, no. 10, pp. 2442-2454, 2018.
\bibitem{7}
S. Parashar, D. Pizarro, and A. Bartoli, ``Local deformable 3d reconstruction with cartan's connections," \textit{IEEE Trans. Patt. Anal.
	Mach. Intell.}, vol. 42, no. 12, pp. 3011-3026, 2021.
\bibitem{10}
S. Parashar, M. Salzmann, and P. Fua, ``Local non-rigid structure-from-motion from diffeomorphic mappings," in \textit{Proc. IEEE Conf. Comput. Vis. Patt. Recog. (CVPR)}, 2020, pp. 2059-2067.
\bibitem{21add}
V. Sidhu, E. Tretschk, V. Golyanik, A. Agudo, and C. Theobalt, ``Neural dense non-rigid structure from motion with latent space constraints," in \textit{Eur. Conf. Comput. Vis (ECCV)}, 2020, pp. 204-222.
\bibitem{13}
A. D. Bue, ``A factorization approach to structure from motion with shape priors," in \textit{Proc. IEEE Conf. on Comput. Vis. Patt. Recog. (CVPR)}, 2008, pp. 1-8.
\bibitem{19}
Y. Dai, H. Li, and M. He, ``A simple prior-free method for non-rigid structure-from-motion factorization," \textit{Int. J. Comput. Vis.}, vol. 107, no. 2, pp. 101-122, 2014.
\bibitem{19add}
S. Graßhof and S. S. Brandt. ``Tensor-based non-rigid structure from motion." in \textit{Proc. of the IEEE/CVF Winter Conf. on Appl. of Comput. Vis. (WACV)}, 2022, pp. 3011-3020.
\bibitem{18}
I. Akhter, Y. Sheikh, S. Khan, and T. Kanade, ``Nonrigid structure from motion in trajectory space," in \textit{Proc.  Neural Inf. Process. Syst. (NIPS)}, 2009, pp. 1-8.
\bibitem{21}
A. Agudo and F. M. Noguer, ``Force-based representation for non-rigid shape and elastic model estimation," \textit{IEEE Trans. Patt. Anal.
	Mach. Intell.}, vol. 40, no. 9, pp. 2137-2150, 2018.
\bibitem{20}
A. Agudo, F. M. Noguer, B. Calvo, and J. Montiel, ``Sequential nonrigid structure from motion using physical priors," \textit{IEEE Trans. Patt. Anal.
	Mach. Intell.}, vol. 38, no. 5, pp. 979-994, 2016.
\bibitem{14}
A. D. Bue, F. Smeraldi, and L. Agapito, ``Non-rigid structure from motion using non- parametric tracking and non-linear optimization," in \textit{Proc. IEEE Conf. on Comput. Vis. Patt. Recog. (CVPR)}, 2004, pp. 1-8.
\bibitem{15}
L. Torresani, A. Hertzmann, and C. Bregler, ``Nonrigid structure-frommotion: Estimating shape and motion with hierarchical priors," \textit{IEEE Trans. Patt. Anal.
	Mach. Intell.}, vol. 30, no. 5, pp. 878-892, 2008.
\bibitem{16}
P. F. U. Gotardo and A. M. Mart\'{i}nez, ``Kernel non-rigid structure from motion," in \textit{IEEE  Int. Conf. Mach. Lear (ICCV)}, 2011, pp. 802-809.
\bibitem{16add}
Y. Wang, D. Xu, W. Huang, X. Ye, and M. Jiang, ``Temporal-aware neural network for dense non-rigid structure from motion." \textit{Electronics}, vol. 12, no. 18, 3942, 2023.
\bibitem{17}
J. Fayad, A. D. Bue, L. Agapito, and P. M. Aguiar, ``Non-rigid structure
from motion using quadratic deformation models," in \textit{ Proc. British Mach. Vis. Conf. (BMVC)}, 2009, pp. 1-11.
\bibitem{56}
D. Novotny, N. Ravi, B. Graham, N. Neverova, and A. Vedaldi. ``C3dpo: Canonical 3d pose networks for non-rigid structure from motion," in \textit{Proc. IEEE Int. Conf. Comput. Vis. (ICCV)}, 2019, pp. 7688-7697.
\bibitem{57}
C. Wang and S. Lucey, ``PAUL: Procrustean autoencoder for unsupervised lifting," in \textit{Proc. IEEE Conf. on Comput. Vis. Patt. Recog. (CVPR)}, 2021, pp. 434-443.
\bibitem{57add}
H. Deng, T. Zhang, Y. Dai, J. Shi, and H. Li, ``Deep non-rigid structure-from-motion: A sequence-to-sequence translation perspective," in \textit{IEEE Trans. Patt. Anal.
	Mach. Intell.}, vol. 46, no. 12, pp. 10814-10828, 2024.
\bibitem{25}
F. Bookstein, ``Principal warps: thin-plate splines and the decomposition of deformations," \textit{IEEE Trans. Patt. Anal. Mach. Intell.}, vol. 11, no. 6, pp. 567-585, 1989.
\bibitem{23}
J. Taylor, A. D. Jepson, and K. N. Kutulakos, ``Non-rigid structure from locally-rigid motion," in \textit{Proc. IEEE Conf. on Comput. Vis. Patt. Recog. (CVPR)}, 2010, pp. 2761-2768.
\bibitem{24}
C. Russell, R. Yu, and L. Agapito, ``Video pop-up: Monocular 3d reconstruction of dynamic scenes," in \textit{Eur. Conf. Comput. Vis. (ECCV)}, 2014, pp. 583-598.
\bibitem{22}
S. Parashar, D. Pizarro, and A. Bartoli. ``Isometric non-rigid shape-from-motion in linear time," in \textit{Proc. IEEE Conf. on Comput. Vis. Patt. Recog. (CVPR)}, 2016, pp. 4679-4687.
\bibitem{9}
J. J. G. Rodríguez, J. Lamarca, J. Morlana, J. D. Tardós, and J. M. Montiel, ``Sd-defslam: Semi-direct monocular slam for deformable and intracorporeal scenes," in \textit{Proc. IEEE Int. Conf. on Robot. Auto. (ICRA)}, 2021, pp. 5170-5177.
\bibitem{9adds}
R. Mur-Artal, J. M. M. Montiel, and J. D. Tardos, ``ORB-SLAM: A versatile and accurate monocular SLAM system," \textit{IEEE Trans. on Robot.}, vol. 31, no. 5, pp. 1147-1163, 2015.
\bibitem{9adds1111}
J. Lamarca and J. M. M. Montiel, ``Camera tracking for SLAM in
deformable maps,” in \textit{Eur. Conf. on Comput. Vis. (ECCV) Workshops}, 2018.
\bibitem{9adds3333}
E. Wang, Y. Liu, J. Xu, and X. Chen, ``Non-rigid scene reconstruction of deformable soft tissue with monocular endoscopy in minimally invasive surgery," \textit{Int. J. Comput. Assist. Radiol. Surg.}, vol. 19, no. 12, pp. 2433-2443, 2024.
\bibitem{31}
S. Parashar, Y. Long, M. Salzmann, and P. Fua, ``A closed-form, pairwise solution to local non-rigid structure-from-motion," \textit{IEEE Trans. Patt. Anal. Mach. Intell.}, vol. 46, no. 11, pp. 7027-7040, 2024.
\bibitem{25a}
P. Ji, H. Li, Y. Dai, and I. Reid, ``“Maximizing rigidity” revisited: A convex programming approach for generic 3D shape reconstruction from multiple perspective views," in \textit{Proc. IEEE Int. Conf. Comput. Vis. (ICCV)}, 2017, pp. 929-937.
\bibitem{26}
N. Sundaram, T. Brox, and K. Keutzer, ``Dense point trajectories by gpu-accelerated large displacement optical flow," in \textit{Eur. Conf. Comput. Vis. (ECCV)}, 2010, pp. 438-451.
\bibitem{27}
D. G. Lowe, ``Object recognition from local scale-invariant features,"  in \textit{Proc. IEEE Int. Conf. Comput. Vis. (ICCV)}, 1999, pp. 1150-1157.
\bibitem{28}
 H. Bay, ``Surf: Speeded up robust features," \textit{Comput. Vis. Image Underst.}, vol. 110, no. 3, pp. 404-417, 2006.
\bibitem{29}
 Y. Chen, S. Huang, L. Zhao, and G. Dissanayake, ``Cramér–rao bounds and optimal design metrics for pose-graph SLAM," \textit{IEEE Trans. on Robot.}, vol. 37, no. 2, pp. 627-641, 2021.
\bibitem{38}
 Y. Chen, K. M. B. Lee, C. Yoo, and R. Fitch, ``Broadcast your
weaknesses: cooperative active pose-graph SLAM for multiple robots,"
\textit{IEEE Robot. and Autom. Lett.}, vol. 5, no. 2, pp. 2200-2007, 2020.
\bibitem{30}
J. M. Lee. Riemannian manifolds: an introduction to curvature. Springer, 1997.
\bibitem{41add}
J. C. Bezdek and R. J. Hathaway, ``Convergence of alternating optimization," \textit{Neural Parallel Sci. Comput.}, vol. 11, no. 4, pp. 351-368, 2003.
\bibitem{32}
I. Alhashim and P. Wonka, ``High quality monocular depth estimation via transfer learning," arXiv preprint arXiv:1812.11941, 2018.
\bibitem{33}
G. Huang, Z. Liu, L. van der Maaten, and K. Q. Weinberger, ``Densely connected convolutional networks," in \textit{Proc. IEEE Conf. on Comput. Vis. Patt. Recog. (CVPR)}, 2017, pp. 2261-2269.
\bibitem{34}
J. Deng, W. Dong, R. Socher, L. J. Li, K. Li, and F. F. Li,
``Imagenet: A large-scale hierarchical image database," in 
\textit{Proc. IEEE Conf. on Comput. Vis. Patt. Recog. (CVPR)}, 2009, pp. 248–255.
\bibitem{35}
S. Ioffe and C. Szegedy, ``Batch normalization: Accelerating
deep network training by reducing internal covariate shift," in \textit{Int. Conf. Mach. Lear (ICML)}, 2015, pp. 448-456.
\bibitem{36}
H. Fu, M. Gong, C. Wang, N. Batmanghelich, and D. Tao,
``Deep ordinal regression network for monocular depth estimation,"in 
\textit{Proc. IEEE Conf. on Comput. Vis. Patt. Recog. (CVPR)}, 2018, pp. 2002-2011.
\bibitem{41_new}
J. Matt, Absolute Orientation - Horn's method, MATLAB Central File Exchange, 2023.
\bibitem{41xxxxxx}
D. Henrion, J. B. Lasserre, and J. Löfberg, ``GloptiPoly 3: moments, optimization and semidefinite programming," \textit{Optim. Methods Softw.}, vol. 24, no. 4–5, pp. 761-779.
\bibitem{41}
S. Vicente and L. Agapito, ``Soft inextensibility constraints for template-free non-rigid reconstruction," in 
\textit{Eur. Conf. Comput. Vis. (ECCV)}, 2012, pp. 426-440.
\bibitem{42}
S. H. N. Jensen, M. E. B. Doest, H. Aanæs, and A. Del Bue, ``A benchmark and evaluation of non-rigid structure from motion," 
\textit{Int. J. Comput. Vis.}, vol. 129, no. 4, pp. 882-899, 2021.
\bibitem{43}
S. Parashar, A. Bartoli, and D. Pizarro, ``Robust isometric non-rigid
structure-from-motion," \textit{IEEE Trans. Patt. Anal. Mach. Intell.}, vol. 44, no. 10, pp. 6409-6423, 2022.
\bibitem{44}
M. D. Ansari, V. Golyanik, and D. Stricker, ``Scalable dense monocular
surface reconstruction," in \textit{Int. Conf. on 3D Vis. (3DV)}, 2017, pp. 78-87.
\bibitem{45}
M. Lee, S. Cho, and J. Oh, ``Consensus of non-rigid reconstructions,"
in \textit{Proc. IEEE Conf. on Comput. Vis. Patt. Recog. (CVPR)}, 2016, pp. 4670-4678.
\bibitem{46}
P. Mountney, D. Stoyanov, and G. Z. Yang, ``Three-dimensional tissue deformation recovery and tracking," \textit{IEEE Signal Process. Mag.}, vol. 27, no. 4, pp. 14-24, 2010.
\bibitem{47}
D. Stoyanov, G. P. Mylonas, F. Deligianni, A. Darzi, and G. Z. Yang,
``Soft-tissue motion tracking and structure estimation for robotic assisted mis procedures," in \textit{Proc. Med. Image Comput. Comput.-Assisted Intervention (MICAI)}, 2005, pp. 139-146.
\bibitem{48}
C. Campos, R. Elvira, J. J. G. Rodríguez, J. M. M. Montiel, and J. D. Tardós, ``ORB-SLAM3: An accurate open-source library for visual, visual-inertial and multi-map SLAM," \textit{IEEE Trans. on Robot.}, vol. 37, no. 6, pp. 1874-1890, 2021.
\bibitem{add_new_1}
G. Yang, D. Sun, V. Jampani, D. Vlasic, F. Cole, H. Chang, D. Ramanan, W.T. Freeman, and C. Liu, ``Lasr: Learning articulated shape reconstruction from a monocular video," in \textit{Proc. IEEE Conf. on Comput. Vis. Patt. Recog. (CVPR)}, 2021, pp. 15980-15989.
\bibitem{add_new_2}
G. Yang, D. Sun, V. Jampani, D. Vlasic, F. Cole, C. Liu, and D. Ramanan, ``Viser: Video-specific surface embeddings for articulated 3d shape reconstruction," in \textit{Proc. Adv. Neural Inf. Process. (NeurIPS)}, 34, pp. 19326-19338.
\bibitem{51}
I. Akhter, Y. Sheikh, S. Khan, and T. Kanade, ``Trajectory space: A dual representation for nonrigid structure from motion," \textit{IEEE Trans. Patt. Anal. Mach. Intell.}, vol. 33, no. 7, pp. 1442-1456, 2011.
\bibitem{52}
M. Paladini, A. Del Bue, J. Xavier, L. Agapito, M. Stosi\'{c}, and M. Dodig, ``Optimal metric projections for deformable and articulated structure-from-motion," \textit{Int. J. Comput. Vis.}, vol. 96, no. 2, pp. 252-276, 2012.
\bibitem{53}
Y. Dai, H. Deng, and M. He, ``Dense non-rigid structure-from-motion made easy – a spatial-temporal smoothness based solution," in \textit{Proc. Int. Conf. on Image Processing (ICIP)}, 2017, pp. 4532-4536.
\bibitem{54}
S. Kumar, A. Cherian, Y. Dai, and H. Li, ``Scalable dense non-rigid structure-from-
motion: A grassmannian perspective," in \textit{Proc. IEEE Conf. on Comput. Vis. Patt. Recog. (CVPR)}, 2018, pp. 254-263.
\bibitem{55}
S. Kumar, ``Jumping manifolds: Geometry aware dense non-rigid structure from
motion," in \textit{Proc. IEEE Conf. on Comput. Vis. Patt. Recog. (CVPR)}, 2019, pp. 5346-5355.
\bibitem{55add}
K. He, X. Zhang, S. Ren, and J. Sun, ``Deep residual learning for image recognition," in \textit{Proc.  IEEE Conf. on Comput. Vis. Patt. Recog. (CVPR)}, 2016, pp. 770-778.
\bibitem{56}
C. Ionescu, D. Papava, V. Olaru and C. Sminchisescu, ``Human3.6M: Large scale datasets and predictive methods for 3D human sensing in natural environments," \textit{IEEE Trans. on Patt. Anal. and Mach. Intell.}, vol. 36, No. 7, pp. 1325-1339, 2014.
\bibitem{57}
G. Moon, S. Yu, H. Wen,  and K. M. Lee, ``InterHand2.6M: A dataset and baseline for 3D interacting hand pose estimation from a single RGB image," in \textit{ Eur. Conf. on Comput. Vis. (ECCV)}, 2020, pp. 548-564.
\bibitem{58}
H. Matsuki, R. Murai, P.H. Kelly, and A.J. Davison, ``Gaussian splatting slam," in Proceedings of the IEEE/CVF Conference on Computer Vision and Pattern Recognition (pp. 18039-18048).
\bibitem{59}
Y. Li, Y. Fang, Z. Zhu, K. Li, Y. Ding, F. Tombari, 4D Gaussian Splatting SLAM, arXiv preprint arXiv:2503.16710, 2025.
\end{thebibliography}
\end{document}